 \NewDocumentCommand{\longset}{mm}
  {
   \begin{gathered}
   \egreg_longset:nn { #1 } { #2 }
   \end{gathered}
  }
\newcommand{\roberto}[1]{\todo[inline,color=orange]{Roberto: #1}}
\newcommand{\timo}[1]{\todo[inline,color=yellow]{Timo: #1}}
\newcommand{\andrea}[1]{\todo[inline,color=blue!30]{Andrea: #1}}
\algnewcommand{\IIf}[1]{\State\algorithmicif\ #1\ \algorithmicthen}
\algnewcommand{\EElse}{\State\algorithmicelse\ }
\algnewcommand{\EndIIf}{\unskip\ \algorithmicend\ \algorithmicif}
\newcommand{\R}{\mathbb{R}}
\newcommand{\la}{\langle}
\newcommand{\ra}{\rangle}
\DeclareMathOperator*{\argmin}{arg min}
\DeclareMathOperator*{\argmax}{arg max}
 \mathchardef\mhyphen="2D
\newcommand{\pluseq}{\mathrel{{+}{=}}}
\newcommand{\minuseq}{\mathrel{{-}{=}}}
\newcommand{\plusplus}{\mathrel{{+}{+}}}
\providecommand{\abs}[1]{\lvert#1\rvert}
\newtheorem{lemma}{Lemma}
\newtheorem{proposition}{Proposition}
\setlist[1]{noitemsep, topsep=0pt,leftmargin=*}
\providecommand{\myparagraph}[1]{\noindent\textbf{#1}\hspace{0.2em}}
\begin{document}
	
	\title{Making Higher Order MOT Scalable: An Efficient Approximate Solver for Lifted Disjoint Paths}

	\author{{Andrea Hornakova$^{*1}$ \qquad Timo Kaiser$^{*2}$ \qquad Paul Swoboda$^1$ \qquad Michal Rolinek$^3$}\\{ Bodo Rosenhahn$^2$ \qquad Roberto Henschel$^2$} \\
		{\small $^*$Authors contributed equally, $^1$Max Planck Institute for Informatics, Saarland Informatics Campus, $^2$Institute for Information Processing,}\\ {\small Leibniz University Hannover,
			$^3$Max Planck Institute for Intelligent Systems, T\"ubingen}}
	
	\maketitle
	
	\begin{abstract}
 We present an efficient approximate message passing solver for the lifted disjoint paths problem (LDP), a natural but NP-hard model for multiple object tracking (MOT).
 Our tracker scales to very large instances that come from long and crowded MOT sequences.
 Our approximate solver enables us to process the MOT15/16/17 benchmarks  without sacrificing solution quality and allows for solving MOT20, which has been out of reach up to now for LDP solvers due to its size and complexity.
  On all these four standard MOT benchmarks we achieve performance comparable or better than  current state-of-the-art methods including a tracker based on an optimal LDP solver.
\end{abstract}

	\section{Introduction}
\andrea{Should we include MOT16 and MOT15 in the abstract? Can we replace solvers (plural) by solver?}
\timo{i think we should}

Deriving high-level understanding from a video is a desired task that has been studied in computer vision for a long time. Nevertheless, solving the problem is a long way off.
A computer vision system able to extract the motions of objects appearing in a video  in terms of trajectories is considered as a prerequisite for the goal. This task, called multiple object tracking (MOT), has numerous applications, e.g.\ in the area of video surveillance~\cite{fenzi2014asev}, sports analysis~\cite{alahi2009sport,lu2013learning}, urban planning~\cite{alahi2017unsupervised}, or autonomous driving~\cite{liang2020pnpnet,frossard2018end}. 

Yet, solving MOT is challenging, especially for long and crowded sequences.
The predominant approach for MOT is the tracking-by-detection paradigm, which splits the problem into two subtasks. First, objects are detected in all video frames by an object detector. Then, the detections are linked across frames to form trajectories. While the performance of object detectors has improved considerably by recent advances of CNNs \cite{ren2015faster,yang2016exploit,redmon2016you,duan2019centernet}, the latter task called the  \textit{data association} remains challenging. 
The data association reasons from \textit{pairwise costs},  which indicate for each pair of detections the likelihood of belonging to the same object.

Appearance and spatio-temporal information are often ambiguous, especially in crowded scenes and pairwise costs can be misleading. Moreover, object detectors produce more errors in crowded scenes due to partial occlusions.
To resolve these issues, it is crucial that the data association incorporates global context.

The disjoint paths problem (DP)~\cite{zhang2008global,kovacs2015minimum} is a natural model for MOT. Results are computed efficiently  using a min-cost flow algorithm that delivers the global optimal solution. 
Unfortunately, the integration of long range temporal interactions is limited, as DP obeys the first-order Markov-chain assumption: for each trajectory, consistency is ensured only between directly linked detections, which is a strong simplification that ignores higher order consistencies among multiple linked detections.



To fix this deficiency, \cite{hornakova2020lifted}  generalizes DP to lifted disjoint paths (LDP) by using additional connectivity priors in terms of \textit{lifted} edges. This makes the formulation much more expressive while it maintains the feasibility set of the DP (Sec.\ \ref{sec:problem_formulation}). 
The optimization problem enables to take  into account pairwise costs between arbitrary detections belonging to one trajectory. It thus enables to incorporate long range temporal interactions effectively and leads to considerable improvement of recall and precision  \cite{hornakova2020lifted}. Similar extensions have been made for the multicut problem \cite{tang2016multi,tang2017multiple}.

While the integration of the global context by LDP is crucial to obtain high-quality tracking results,  it makes  the data association problem  NP-hard.
Still, \cite{hornakova2020lifted} presented a~global optimal LDP solver usable for  semi-crowded sequences with reasonable computational effort.
However, when applied to  longer and crowded sequences, such approaches are not tractable anymore, due to too high demands on runtime and memory.
\andrea{Shall we use LDP solver instead of solvers? There is only one solver, right?}
\roberto{We can do it, but then consistent with the abstract}

In order to close this gap, we present the first approximate solver for LDP.
The resulting tracker scales to big problem instances and incorporates global context with similar accuracy as the global optimal LDP solver.
Moreover, our solver outputs certificates in terms of primal/dual gaps.

In particular, our solver is based on a Lagrangean (dual) decomposition of the problem.
This dual is iteratively optimized by dual block coordinate ascent (a.k.a.\ message passing) using techniques from~\cite{Swoboda_2017_CVPR}, see Sec.~\ref{sec:lagrange-decomposition-sub}.
The decomposition relies on subproblems that are added in a cutting plane fashion.
We obtain high-quality primal solutions by solving minimum cost flow problems with edge costs synthesizing information from both base and lifted edges from the dual task and improve them via a local search procedure. 


We validate the quality of the solver on four standard MOT benchmarks (Sec.~\ref{sec:experiments}). We achieve comparable or better performance w.r.t.\ the current state-of-the-art trackers including the tracker based on the  optimal LDP solver~\cite{hornakova2020lifted} on MOT15/16/17~\cite{MOTChallenge2015,MOT16}.
Furthermore, our proposed tracker performs on par with state-of-the-art on the more challenging MOT20 dataset~\cite{MOTChallenge20} which is composed of long and crowded sequences. Lightweight features and a fast solver are crucial to perform tracking on such massive sequences.
%
Our work thus extends the applicability of the successful LDP formulation to a wider range of  instances.

\myparagraph{Contribution} of this work is in summary as follows:
\noindent\begin{itemize}
    \item We make the LDP problem more accessible and applicable by introducing an approximate solver with better scalability properties than the global optimal LDP solver, while resulting in similar tracking performance, and being independent of Gurobi \cite{gurobi}. 
    \item We present an MOT system that is scalable to challenging sequences by using considerably less computationally demanding features than what is used in the state-of-the-art tracker \cite{hornakova2020lifted}. Our system 
    incorporates higher order consistencies in a scalable way, i.e.\ it uses an approximate solver and provides  a~gap to the optimum. 
    
\end{itemize}
 We make our LDP solver\footnote{\url{https://github.com/LPMP/LPMP}} and our MOT pipeline\footnote{\url{https://github.com/TimoK93/ApLift}} available.

	\section{Related Work}

\myparagraph{(Lifted) disjoint paths. }The disjoint paths problem is a natural model for multiple object tracking and is solvable with fast combinatorial solvers~\cite{kovacs2015minimum}. It has been used for the data association step of MOT in~\cite{berclaz2011multiple,zhang2008global}. Its extensions  have been used for fusing different object detectors~\cite{chari2015pairwise} or multi-camera MOT~\cite{hofmann2013hypergraphs,leal2012branch}. Its main disadvantage is that it does not allow to integrate long range information because it evaluates only direct connections between object detections within a trajectory.
The lifted disjoint paths problem introduced in~\cite{hornakova2020lifted} enhances DP by introducing lifted edges that enable to reward or penalize arbitrary connections between object detections. This incorporation of long range information leads to a significant improvement of the tracking performance yielding state-of-the-art results on main MOT benchmark but makes the problem NP-hard. The authors provide a globally optimal solver using Gurobi~\cite{gurobi}. Despite a lot of efficient subroutines, the general time complexity of the provided solver remains exponential.
Therefore, in order to extend LDP-based methods to highly dense MOT problems as in MOT20 it is crucial to reduce the complexity of the used LDP solver
because the number of feasible connections between detections increases dramatically.

\myparagraph{Multicut and lifted multicut.} LDP is similar to (lifted) multicut~\cite{chopra1993partition,hornakova2017analysis}.
Multicut has been used for MOT in ~\cite{Ho_2020_ACCV,keuper2018motion,kumar2014multiple,ristani2014tracking,tang2015subgraph,tang2016multi}, lifted multicut in~\cite{babaee2018multiple,tang2017multiple}.
These trackers solve the underlying combinatorial problem approximately via heuristics without providing an estimation of the  gap to optimality.
Our approach,  delivers an approximate solution together with a lower bound enabling to assess the quality of the solution.
Additionally, LDP provides a strictly better relaxation than lifted multicut~\cite{hornakova2020lifted}.





\myparagraph{Other data association models for MOT.}
Several works employ greedy heuristics to obtain tracking results \cite{bochinski2017high,zhou2021tracking,bergmann2019tracking}.
Such strategies normally  suffer from occlusions or ambiguous situations, causing  trajectory errors. Others use bipartite matchings \cite{huang2008robust,sadeghian2017tracking,zhu2018online,xu2019spatial,Wojke2017simple,Wojke2018deep} to assign  new detections with already computed trajectories of the past optimally. Since no global context is incorporated,  they are  prone to errors if individual edge costs are misleading. 

Higher order MOT frameworks ensure consistencies within all detections of a trajectory.
This can be done greedily, by computing one trajectory at a time via  a generalized minimum clique problem \cite{zamir2012gmcp}, or globally using an  extension to the maximum multi clique problem \cite{dehghan2015gmmcp}. 

\roberto{Relating the problems to Multicut would have the advantage that the reader understand that they are NP-hard and can only be solved so far using KL heuristics..}

Several works employ continuous domain relaxation.
When MOT is formulated as a binary quadratic program \cite{Henschel_2018_CVPR_Workshops,henschel2019multiple,von2018recovering,henschel2020simultaneous}, a modification of the Frank-Wolfe algorithm adapted to the non-convex case has been used  \cite{Henschel_2018_CVPR_Workshops}. 
Some approximations for binary linear programs use an LP-relaxation, optimize in the continuous domain and derive a~binary solution from the continuous one \cite{jiang2007linear,chari2015pairwise,brendel2011multiobject}.
They however do not provide the optimality gap, in contrast to our work.
Higher order MOT can be considered as a classification problem using graph convolutions \cite{braso2020learning}.
It allows to train features directly on the tracking task.

The multigraph-matching problem, a generalization of the graph matching problem, has been used  for MOT \cite{hu2019dual}.
Here, cycle consistency constraints of the multi-graph matching ensures higher order consistencies of the trajectories.
Message passing for higher order matching in MOT has been used in~\cite{aroraICCV13}
employing a~variant of MPLP~\cite{globerson2007fixing}.
In contrast to our formulation,~\cite{aroraICCV13} does not model occlusions and does not allow for connectivity priors.

Probabilistic approaches to multiple-target tracking include multiple hypotheses tracking~\cite{kim2015multiple,chong2018forty}, joint probabiblistic data association~\cite{rezatofighi2015joint,9011349} and others  \cite{9011349,meyer2018message}.

\roberto{Paul wrote its the same as a maximum clique on the complement graph. Should I mention that? -> no}
\timo{I would like to remove it}

	\section{Problem Formulation}
\label{sec:problem_formulation}
The lifted disjoint paths problem (LDP) introduced in~\cite{hornakova2020lifted} is an optimization problem for finding a set of vertex-disjoint paths in a~directed acyclic graph.
The cost of each path is determined by the cost of edges in that path as in the basic disjoint paths problem (DP), but additionally there are higher order costs defined by lifted edges.
A~lifted edge contributes to the cost if its endpoints are part of the same path.
This problem is a natural formulation for multiple object tracking (MOT), where lifted edges allow to re-identify the same objects over long distance.

While of greater expressivity, the LDP is NP-hard~\cite{hornakova2020lifted} in contrast to DP which is reducible to the minimum cost flow.
Below, we recapitulate the  formulation of LDP from~\cite{hornakova2020lifted}.

\subsection{Notation and Definitions.}
\begin{description} 
    \item[Flow network:] a directed acyclic graph $G = (V,E)$.
    \item[Start and terminal:] nodes $s,t \in V$.
    \item[Lifted graph:] a directed acyclic graph $G' = (V',E')$, where $V' = V \backslash\{s,t\}$.
    \item[The set of paths]
 starting at $v$ and ending in $w$ is
\begin{equation}
    vw\mhyphen\text{paths}(G) = \left\{ (v_1 v_2,\ldots,v_{l-1} v_l) : \begin{array}{c}v_i v_{i+1} \in E,\\ v_1 = v, v_l = w \end{array} \right\}\,.
\end{equation}
For a $vw\mhyphen$path $P$ its edge set is $P_E$ and its node set is $P_V$.

    \item[Reachability relation] for two nodes $v,w \in V$ is defined as $vw \in \mathcal{R}_G\Leftrightarrow vw\mhyphen\text{paths}(G) \neq \emptyset$.
    We assume that it is reflexive and $su \in \mathcal{R}_G, ut \in \mathcal{R}_G$  $\forall u \in V$, i.e.\ all nodes can be reached from  $s$ and all nodes can reach the sink node $t$.%
%
    \item[Flow variables:]
Variables $y \in \{0,1\}^E$ have value $1$ if flow passes through the respective edges.
    \item[Node variables] $z \in \{0,1\}^V$ denote flow passing through each node. Values 0/1   forces paths to be node-disjoint.
    \item[Variables of the lifted edges] $E'$ are denoted by $y'\in\{0,1\}^{E'}$.
$y'_{vw}=1$ signifies that nodes $v$ and $w$ are connected via the flow $y$ in $G$.
Formally, 
\begin{equation}
\label{eq:lifted-edge-def}
y'_{vw} = 1 \Leftrightarrow \exists P\in vw\mhyphen\text{paths}(G): \forall ij\in P_E: y_{ij}=1 \,.
\end{equation}
\end{description}

\myparagraph{Lifted disjoint paths problem.}
Given edge costs $c \in \R^E$, node cost $d \in \R^V$ in flow network $G$ and edge cost $c' \in \R^{E'}$ for the lifted graph $G'$ the lifted disjoint paths problem is
\begin{equation}
\label{eq:lifted-disjoint-paths-problem}
\begin{array}{rl}
    \min\limits_{\substack{y \in \{0,1\}^E, y' \in \{0,1\}^{E'},\\ z \in \{0,1\}^{V}}} & \la c, y \ra + \la c',y' \ra + \la d, z\ra \\
    \text{s.t.} & y \text{ node-disjoint } s,t \text{-flow in } G, \\
                &z \text{ flow through nodes of }G\\
                & y, y' \text{ feasible according to } \eqref{eq:lifted-edge-def}
    \end{array}
\end{equation}


Set $E'$ can be arbitrary.  It makes sense to create a~lifted edge $vw$ only if $vw\in \mathcal{R}_G$ due to Formula~\eqref{eq:lifted-edge-def} and only if $v$ and $w$ do not belong to neighboring frames. We describe our choice in Sec.~\ref{sec:graph_construction}. 

Other notation and abbreviations are in Appendix~\ref{ap:notation}.


\roberto{Figure could be removed, there is nothing new after LDP..}
	\section{Lagrange Decomposition Algorithm for LDP}
\label{sec:Lagrange-decomposition}
Below we recapitulate  
Lagrange decomposition and the message passing primitive used in our algorithm (Sec.~\ref{sec:lagrange-decomposition-sub}).
Then, we propose a decomposition of the LDP problem~\eqref{eq:lifted-disjoint-paths-problem} into smaller but tractable subproblems (Sec.~\ref{sec:inout-subproblems}-\ref{sec:cut-subproblem}).
This decomposition is a dual task to an LP-relaxation of~\eqref{eq:lifted-disjoint-paths-problem}. Therefore, it provides a~lower bound that is iteratively increased by the message passing. We solve Problem~\eqref{eq:lifted-disjoint-paths-problem} in a simplified version of Lagrange decomposition framework developed in~\cite{Swoboda_2017_CVPR}.
Our heuristic for obtaining primal solutions uses the dual costs from the subproblems (Sec.~\ref{sec:primal-rounding}).
\subsection{Lagrange Decomposition}
\label{sec:lagrange-decomposition-sub}
We have an optimization problem $\min_{x \in \mathcal{X}} \langle c, x \rangle$ where $\mathcal{X}\subseteq \{0,1\}^n$ is a feasible set and $c\in \mathbb{R}^n$ is the objective vector. Its Lagrange decomposition is given by 
a~set of \emph{subproblems} 
$\mathcal{S}$
with associated \emph{feasible sets} 
$\mathcal{X}^{\mathsf{s}}\subseteq \{0,1\}^{d_\mathsf{s}}$ for each $\mathsf{s}\in \mathcal{S}$. 
Each coordinate $i$ of $\mathcal{X}^{\mathsf{s}}$ corresponds to one coordinate  of $\mathcal{X}$ via an injection $\pi_{\mathsf{s}}:[d_{\mathsf{s}}]\rightarrow [n]$
alternatively represented by a matrix
$A^{\mathsf{s}}\in \{0,1\}^{d_{\mathsf{s},n}}$ where $(A^{\mathsf{s}})_{ij}=1\Leftrightarrow \pi_\mathsf{s}(i)=j$.
For each pair of subproblems $\mathsf{s},\mathsf{s'}\in \mathcal{S}$ that contain a pair of coordinates $i,j$ such that $\pi_{\mathsf{s}}(i)=\pi_{\mathsf{s'}}(j)$, we have a~\emph{coupling constraint} $x^\mathsf{s}_i = x^{\mathsf{s}'}_j$ for each  $x^\mathsf{s} \in \mathcal{X}^\mathsf{s}$, $x^{\mathsf{s}'} \in \mathcal{X}^{\mathsf{s}'}$.

We require that every feasible solution $x\in \mathcal{X}$ is feasible for the subproblems, i.e.\ $\forall x\in \mathcal{X},\forall \mathsf{s}\in \mathcal{S}: A^{\mathsf{s}}x \in \mathcal{X}^s$.

We require that the objectives of subproblems are equivalent to the original objective, i.e.\ 
$\la c, x \ra = \sum_{s \in \mathcal{S}} \la \theta^s, A^{\mathsf{s}}x \rangle$ $\forall x \in \mathcal{X}$. Here, $\theta^\mathsf{s} \in \R^{d_{\mathsf{s}}}$ defines the \emph{objective} of subproblem $\mathsf{s}$.

The \emph{lower bound} of the Lagrange decomposition given the costs $\theta^{\mathsf{s}}$ for each $\mathsf{s}\in \mathcal{S}$ is 
\begin{equation}
\label{eq:lower-bound}
    \sum_{\mathsf{s} \in \mathcal{S}} \min_{x^\mathsf{s} \in \mathcal{X}^\mathsf{s}} \la \theta^\mathsf{s}, x^\mathsf{s} \ra\,.
\end{equation}

Given coupling constraint $x^\mathsf{s}_i = x^{\mathsf{s}'}_j$  and $\gamma \in \R$, a  sequence of operations of the form
$ \theta^\mathsf{s}_i \pluseq \gamma,\ \theta^{\mathsf{s}'}_j \minuseq \gamma$ is called a~\emph{reparametrization}.

Feasible primal solutions are invariant under reparametrizations but the lower bound~\eqref{eq:lower-bound} is not.
The optimum of the dual lower bound equals to the optimum of a convex relaxation of the original problem, see~\cite{guignard1987lagrangean}.

\myparagraph{Min-marginal message passing.}
Below, we describe reparametrization updates monotonically non-decreasing in the lower bound based on \emph{min-marginals}.
Given a variable $x^\mathsf{s}_i$ of a subproblem $\mathsf{s} \in  S$, the associated \emph{min-marginal} is
\begin{equation}
    \label{eq:min-marginal}
    m^{\mathsf{s}}_{i} =
    \min\limits_{x^\mathsf{s} \in \mathcal{X}^\mathsf{s}: x^\mathsf{s}_i = 1} \langle  \theta^{\mathsf{s}}, x^\mathsf{s} \rangle
    - \min\limits_{x^\mathsf{s} \in \mathcal{X}^\mathsf{s}: x^\mathsf{s}_i = 0} \langle  \theta^{\mathsf{s}}, x^\mathsf{s}  \rangle 
\end{equation}
i.e.\ the difference between the optimal solutions with the chosen variable set to $1$ resp.\ $0$.

\begin{proposition}[\cite{Swoboda_2017_CVPR}]
\label{prop:reparametrization}
Given a coupling constraints $x^\mathsf{s}_i = x^{\mathsf{s}'}_j$ and $\omega \in [0,1]$ the following operation is non-decreasing w.r.t.\ the dual lower bound~\eqref{eq:lower-bound}
    \begin{align}
        \label{eq:min-marginal-update}
        \theta^{\mathsf{s}}_{i} &\minuseq \omega \cdot m^{\mathsf{s}}_{i},& 
\theta^{\mathsf{s}'}_{j} &\pluseq \omega \cdot m^{\mathsf{s}}_{i}\,.
\end{align}

\end{proposition}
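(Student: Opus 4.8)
The plan is to show that after applying the reparametrization~\eqref{eq:min-marginal-update}, each of the two affected terms in the lower bound~\eqref{eq:lower-bound} changes by a controlled amount, and that their net change is nonnegative. First I would note that reparametrizations only move cost between $\theta^{\mathsf{s}}_i$ and $\theta^{\mathsf{s}'}_j$, so every subproblem other than $\mathsf{s}$ and $\mathsf{s}'$ is untouched and its contribution to~\eqref{eq:lower-bound} is unchanged. Hence it suffices to track $\min_{x^\mathsf{s}} \la \theta^\mathsf{s}, x^\mathsf{s}\ra$ and $\min_{x^{\mathsf{s}'}} \la \theta^{\mathsf{s}'}, x^{\mathsf{s}'}\ra$ before and after the update.

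Next I would analyze the subproblem $\mathsf{s}$ losing cost $\omega\cdot m^{\mathsf{s}}_i$ on coordinate $i$. Writing $a = \min_{x^\mathsf{s}: x^\mathsf{s}_i=0}\la\theta^\mathsf{s},x^\mathsf{s}\ra$ and $b = \min_{x^\mathsf{s}: x^\mathsf{s}_i=1}\la\theta^\mathsf{s},x^\mathsf{s}\ra$, so that $m^\mathsf{s}_i = b-a$ and the old optimum is $\min(a,b)$, I would observe that after subtracting $\omega\cdot m^\mathsf{s}_i$ from $\theta^\mathsf{s}_i$ the solutions with $x^\mathsf{s}_i=0$ keep cost $a$ while those with $x^\mathsf{s}_i=1$ drop to $b-\omega(b-a)$. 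The new optimum over $\mathsf{s}$ is therefore $\min\bigl(a,\,b-\omega(b-a)\bigr)$, so the decrease in this subproblem's contribution is exactly $\min(a,b) - \min(a,\,a+(1-\omega)(b-a))$. A short case split on the sign of $m^\mathsf{s}_i = b-a$ (using $\omega\in[0,1]$, so $a + (1-\omega)(b-a)$ lies between $a$ and $b$) shows this decrease equals $\omega\cdot\min(0,\,m^\mathsf{s}_i)$.

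For the subproblem $\mathsf{s}'$ gaining cost $\omega\cdot m^\mathsf{s}_i$ on coordinate $j$, the key observation is that the gain in its optimum is at most $\omega\cdot m^\mathsf{s}_i$ when $m^\mathsf{s}_i>0$, and it cannot decrease below $\omega\cdot\min(0,m^\mathsf{s}_i)$ in general; more precisely, adding a constant $\delta=\omega\cdot m^\mathsf{s}_i$ to coordinate $j$ raises the optimum over $\mathcal{X}^{\mathsf{s}'}$ by at most $\max(0,\delta)$ (it only affects solutions with $x^{\mathsf{s}'}_j=1$, and even those increase by exactly $\delta$, while the overall min increases by no more than $\max(0,\delta)$). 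I would then combine the two estimates: the lower bound changes by $-[\text{decrease in }\mathsf{s}] + [\text{increase in }\mathsf{s}'] = -\omega\min(0,m^\mathsf{s}_i) + (\text{increase in }\mathsf{s}')$. The hard part — and the step I expect to require the most care — is bookkeeping these bounds so the two sign cases combine correctly, namely confirming that the increase in $\mathsf{s}'$ is dominated by the decrease in $\mathsf{s}$, i.e.\ that the total change $\Delta \geq 0$. A clean way to organize this is to verify the two cases $m^\mathsf{s}_i\geq 0$ and $m^\mathsf{s}_i<0$ separately: when $m^\mathsf{s}_i\geq 0$, subproblem $\mathsf{s}$'s optimum is unchanged while $\mathsf{s}'$'s optimum can only rise, and one checks the rise is bounded by the slack; when $m^\mathsf{s}_i<0$, the gain on coordinate $j$ is a negative shift that can only decrease (or leave fixed) the $\mathsf{s}'$ optimum, while $\mathsf{s}$'s optimum drops by $\omega|m^\mathsf{s}_i|$, and the two contributions to $\Delta$ are both nonnegative. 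Since the argument is symmetric in the roles assigned by the coupling constraint, this establishes monotonicity of the lower bound, completing the proof.
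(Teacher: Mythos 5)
The paper itself contains no proof of Proposition~\ref{prop:reparametrization}: it is imported verbatim from~\cite{Swoboda_2017_CVPR} (the closest in-paper analogue is the direct-evaluation argument used for Lemma~\ref{lemma:generalLB} in the appendix). So your proposal is judged against the standard argument, and its skeleton is the right one. The three facts you derive early on are exactly what is needed: (i)~all subproblems other than $\mathsf{s},\mathsf{s}'$ are untouched; (ii)~with $a=\min_{x:x_i=0}\la\theta^\mathsf{s},x\ra$, $b=\min_{x:x_i=1}\la\theta^\mathsf{s},x\ra$, the optimum of $\mathsf{s}$ changes by exactly $-\omega\min(0,m^\mathsf{s}_i)$; (iii)~adding $\delta=\omega m^\mathsf{s}_i$ to $\theta^{\mathsf{s}'}_j$ changes the optimum of $\mathsf{s}'$ by an amount in $[\min(0,\delta),\max(0,\delta)]$. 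Summing (ii) and the lower end of (iii) gives a net change of at least $-\omega\min(0,m^\mathsf{s}_i)+\omega\min(0,m^\mathsf{s}_i)=0$, which is the whole proposition in one line.

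The flaw is in your closing case analysis, which is internally inconsistent and, read literally, proves the opposite in the case $m^\mathsf{s}_i<0$. When $m^\mathsf{s}_i<0$, the operation $\theta^\mathsf{s}_i\minuseq\omega m^\mathsf{s}_i$ \emph{adds} the positive quantity $\omega\abs{m^\mathsf{s}_i}$ to coordinate $i$, so the optimum of $\mathsf{s}$ \emph{rises} by exactly $\omega\abs{m^\mathsf{s}_i}$ — consistent with your own formula, since a ``decrease'' of $\omega\min(0,m^\mathsf{s}_i)<0$ is a rise. Your text instead says it ``drops by $\omega\abs{m^\mathsf{s}_i}$''; combined with your (correct) observation that the $\mathsf{s}'$ optimum can only go down in this case, the sentence describes a net decrease of the bound, and the claim that ``the two contributions to $\Delta$ are both nonnegative'' is false: the $\mathsf{s}'$ contribution lies in $[-\omega\abs{m^\mathsf{s}_i},0]$. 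The correct mechanism in this case is domination, not termwise nonnegativity — $\mathsf{s}$ gains exactly $\omega\abs{m^\mathsf{s}_i}$ while $\mathsf{s}'$ loses at most $\omega\abs{m^\mathsf{s}_i}$; termwise nonnegativity is what happens in the other case $m^\mathsf{s}_i\geq 0$. Since facts (ii) and (iii), which you derived correctly, already imply the result, the error is local: either delete the case split or restate case two as above.
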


\myparagraph{The goal of reparametrization} is two-fold. 
(i)~Improving the objective lower bound to know how far our solution is from the optimum.
(ii)~Using reparametrized costs as the input for our primal heuristic yields high-quality primal solutions. The key components are efficient computations of (i)~optima of subproblems for obtaining lower bound~\eqref{eq:lower-bound}, (ii)~constrained optima for obtaining min-marginals \eqref{eq:min-marginal} and (iii)~a~primal heuristic using the reparametrized costs (Sec.~\ref{sec:primal-rounding}).
Lagrange decomposition has been used for other problems but the subproblem decomposition and minimization procedures are problem specific.
Therefore, developing them for LDP is an important contribution for solving LDP in a scalable way while keeping a~small gap to an optimum.
\subsection{Inflow and Outflow Subproblems}\label{sec:inout-subproblems}
For each node $v \in V$ of the flow graph, we introduce two subproblems:
An inflow and an outflow subproblem.
The subproblems contain all incoming resp.\ outgoing edges of node $v$ together with the corresponding node.
Formally, inflow resp.\ outflow subproblems contain the edges $ \delta^-_E(v) \cup \delta^-_{E'}(v)$, resp. $ \delta^+_E(w) \cup \delta^+_{E'}(w)\,.$
Here, we adopt the standard notation where $\delta^-_E(v)$, resp.\ $\delta^+_E(v)$ denote all base edges incoming to $v$, resp. outgoing from $v$. 
Similarly, $\delta^-_{E'}(v), \delta^+_{E'}(v)$ denote lifted edges incoming to, resp.\ outgoing from $v$.

\myparagraph{The feasible set} $\mathcal{X}^{out}_v$ of the outflow subproblem for node $v$ is defined as
\begin{equation}
\label{eq:outflow-feasible-set}
    \left\{
\begin{array}{l}
   z^{out}_v \in \{0,1\}, y^{out} \in \{0,1\}^{\delta^+_E(v)}, y'^{out} \in \{0,1\}^{\delta^+_{E'}(v)} : \\
   \ \ (z^{out}_v,y^{out},y'^{out}) = \mathbb{0}\ \vee \\
     \ \ \exists P \in vt\mhyphen\text{paths}(G) \text{ s.t. }
     \begin{array}[t]{c}
     z^{out}_v = 1 \\
     y^{out}_{vw} = 1 \Leftrightarrow vw \in P_E\\ 
     y'^{out}_{vu} = 1 \Leftrightarrow u \in P_V
     \end{array}
  \end{array}
    \right\}\,.
  \end{equation}
Consequently, either there is no flow going through  vertex $v$ and all base and lifted edges
have label zero.
Alternatively, there exists a $vt\mhyphen$path $P$ in $G$ labeled by one.  
In this case the base edge adjacent to $v$ corresponding to the first edge in $P$ is one.
All lifted edges connecting $v$ with vertices of $P$ also have value one.
All other base and lifted edges are zero.
Each feasible solution of the outflow subproblem can be represented by a path $vt\mhyphen$path $P$.
The feasible set of the inflow subproblem  $\mathcal{X}^{in}_v$ is defined analogously.
We sometimes omit the  superscipts $out$ for better readability.

\myparagraph{Constraints between inflow and outflow subproblems.}
For node variables, we add the constraint $
    z^{in}_v = z^{out}_v$.
For an edge $vw \in E \cup E'$ we require the shared edge in the outflow subproblem of $v$ and in the inflow subproblem for $w$ to agree, i.e. $
        y_{vw}^{out} = y_{vw}^{in}$ if $vw \in E$ and  $    {y'}_{vw}^{out} = {y'}_{vw}^{in}$ if $vw \in E'$.

\myparagraph{Optimization of in- and outflow subproblems.} Given costs $\theta^{out}$, the optimal solution of an outflow problem for node $v$ can be computed by depth-first search on the subgraph defined by the vertices reachable from $v$.

\noindent The algorithms rely on the following data structures:
\begin{itemize}
    \item lifted\_costs$[u]$ contains the minimum cost of all $ut\mhyphen$paths w.r.t.\ to costs of all lifted edges connecting $v$ with the vertices of the path. 
    \item next$[u]$ contains the best neighbor of vertex $u$ w.r.t.\ values in lifted\_cost. That is, next$[u]=\argmin_{w:uw\in \delta^+_E(u)}\text{lifted\_cost}[w]$
\end{itemize}
\vspace{-\topsep}

\begin{algorithm}
 \caption{Opt-Out-Cost}  
 \label{alg:outflow-minimization}
 \textbf{Input} start vertex $v$, edge costs $\tilde{\theta}$\\
 \textbf{Output} optimal value $\mathrm{opt}$, $\text{lifted\_cost}$  $\forall w:vw \in \delta^+_E(v)$ optimal solution for $vw$ active $\alpha_{vw}$
    \begin{algorithmic}[1]
    \For{$u\in V:vu\in\mathcal{R}_{G}$}
    \State $\text{lifted\_cost}[u] = \infty$, next$[u]=\emptyset$
    \EndFor
    \State$\text{lifted\_cost}[t] =0 $,  next$[t]=t$
    \State Lifted-Cost-DFS-Out($v,v,\tilde{\theta},\text{lifted\_cost},\text{next}$)
    \State $\forall w:vw \in \delta^+_E(v): \alpha_{vw} =  \tilde{\theta}_v + \tilde{\theta}_{vw} + \text{lifted\_cost}[w] $
    \State $\mathrm{opt} = \min(\min_{vw \in \delta^+_E(v)} \alpha_{vw}, 0)$
    \end{algorithmic}
\end{algorithm}
\roberto{To differentiate variables from functions other latex algorithm packages use the style \textbackslash texttt, for instance \texttt{Lifted-Cost-DFS}.}

\begin{algorithm}
\caption{Lifted-Cost-DFS-Out}
\label{alg:outflow-minimization-dfs}
 \textbf{Input} $v,u,\tilde{\theta},\text{lifted\_cost},\text{next}$\\
 \textbf{Output} $\text{lifted\_cost},\text{next}$
\begin{algorithmic}[1]
\State $\alpha=0$
\For{$uw \in \delta^+_{E}(u)$}
\IIf{$\text{next}[w] = \emptyset$} Lifted-Cost-DFS-Out($v,w,\tilde{\theta}$)
\If{lifted\_cost$[w]<\alpha$}
\State $\alpha=\text{lifted\_cost}[w]$, $\text{next}[u]=w$
\EndIf
\EndFor
\IIf{next$[u]=\emptyset$} next$[u]=t$ 
\State $\text{lifted\_cost}[u]=\alpha+\tilde{\theta}'_{vu}$
\end{algorithmic}
\end{algorithm}

Alg.~\ref{alg:outflow-minimization} and~\ref{alg:outflow-minimization-dfs} give a general dept first search (DFS) procedure that, given a vertex $v$, computes optimal 
paths from all vertices reachable from $v$.
Alg.~\ref{alg:outflow-minimization} takes as input vertex $v$ and edge costs $\tilde{\theta}$. Its subroutine Alg.~\ref{alg:outflow-minimization-dfs} computes recursively for each vertex $u$ reachable from $v$ the value  $\text{lifted\_cost}[u]$.
The overall optimal cost $\min_{(z,y,y') \in \mathcal{X}^{out}_v} \la \tilde{\theta}, (z,y,y') \rangle$ of the subproblem is given by the minimum of node and base edge and lifted edges costs $\min_{vu\in \delta^+_{E}(v)}\tilde{\theta}^{out}_v + \tilde{\theta}^{out}_{vu} + \text{lifted\_cost}[u]$.
We achieve linear complexity by exploiting that subpaths of minimum cost paths are minimal as well.
The optimization for the inflow subproblem is analogous.

\myparagraph{Message passing for in- and outflow subproblems.}
We could compute one min-marginal~\eqref{eq:min-marginal}  by adapting Alg.~\ref{alg:outflow-minimization} and forcing an edge to be taken or not.
However, computing min-marginals one-by-one with performing operation~\eqref{eq:min-marginal-update} would be inefficient, since it would involve calling Alg.~\ref{alg:outflow-minimization} $\mathcal{O}(\abs{\delta^+_E(v)} + \abs{\delta^+_{E'}(v))}$ times.
Therefore, we present efficient algorithms for computing a sequence of min-marginals in Appendix~\ref{sec:outflow-subproblem-min-marginals}.
The procedures save computations by choosing the order of edges for computing min-marginals suitably and reuse previous calculations.

\subsection{Path Subproblems}\label{sec:path-subproblem}
The subproblem contains a~lifted edge $vw$ and a~path $P$ from $v$ to $w$ consisting of both base and lifted edges. They reflect that 
(i)~lifted edge $vw$ must be labelled $1$ if there exists an active path between $v$ and $w$, and
(ii)~there cannot be exactly one inactive lifted edge within path $P$ if $vw$ is active. The reason is that the inactive lifted edge divides $P$ into two segments that must be disconnected. This is contradictory to activating lifted edge $vw$  because it indicates a connection between $v$ and $w$.
Path subproblems are similar to cycle inequalities for the multicut~\cite{chopra1993partition}.

In order to distinguish between base and lifted edges of path $P$, we use notation $P_E=P\cap E$ and $P_{E'}=P\cap E'$. 
For the purpose of defining the feasible solutions of path subproblems, we define strong base edges $E_0=\{vw\in E|vw\mhyphen\text{paths}(G)=\{vw\}\}$.
That is, base edge $vw$ is strong iff there exists no other $vw\mhyphen$path in graph $G$ than  $vw$ itself.
\todo[inline]{Use $:$ instead of $|$ in the set?}
\andrea{I actually prefer $|$}
\roberto{Both are fine. I just meant to make it consistent. I saw sometimes $:$ and sometimes $|$, but may be this is gone now..}

\myparagraph{The feasible set} $\mathcal{X}^P$ of the path subproblem for $vw\mhyphen$path $P$ is defined as 
\begin{align}
\nonumber y\in &\{0,1\}^{P_E},y'\in \{0,1\}^{P_{E'}\cup\{ vw\}}:\\
\label{eq:path-subproblem-eq-lifted}
\forall& kl\in P_{E'}\cup \{vw\}: \\
\nonumber
&\sum_{ij\in P_E} (1-y_{ij})+\sum_{ij\in P_{E'}\cup \{vw\}\setminus \{kl\}} (1-y'_{ij})\geq 1-y'_{kl}\,,\\
\label{eq:path-subproblem-eq-strong}
\forall& kl\in P_{E}\cap E_0:\\ 
\nonumber &\sum_{ij\in P_E\setminus kl} (1-y_{ij})+\sum_{ij\in P_{E'}\cup \{vw\}} (1-y_{ij})\geq 1-y_{kl}\,. 
\end{align}

Equation~\eqref{eq:path-subproblem-eq-lifted} requires that a lifted edge in $P_{E'}$ or $vw$ can be zero only if at least one other edge of the subproblem is zero.
Equation~\eqref{eq:path-subproblem-eq-strong} enforces the same for strong base edges.

\myparagraph{The optimization of path subproblems } 
is detailed in Alg.~\ref{alg:path-subproblem-optimization} in the Appendix. The principle is as follows.
It checks whether 
there exists exactly one positive edge and whether it is either a lifted or a strong base edge.
If so, the optimal solution is either 
(i)~all edges except the two largest ones or
(ii)~all edges, whichever gives smaller objective value.
If the above condition does not hold, the optimal solution can be chosen to contain all negative edges.

We use a~variation  of the path optimization algorithm with an edge fixed to $0$ or $1$ for computing min-marginals.

\myparagraph{Cutting plane.}
Since there are exponentially many path subproblems, we add during the optimization only those that improve the relaxation.
 Details are in Appendix~\ref{sec:cutting-planes-path-subproblems}.

\subsection{Cut Subproblems}\label{sec:cut-subproblem}
The purpose of a cut subproblem is to reflect that a lifted edge $uv$ must be labelled 0 if there exists a cut of base edges  that separate $u$ and $v$ ($uv\mhyphen$cut) all labelled $0$.

\myparagraph{The feasible set.}
A cut subproblem consists of a lifted edge $uv$ and a $uv\mhyphen$cut $C = \{ij \in E| i \in A, j \in B\}$ where $A,B \subset V$  with $A \cap B = \emptyset$.
The space of feasible solutions $\mathcal{X}^C$ is defined as 
\begin{align}
\nonumber 
y'_{uv} &\in \{0,1\}, y \in \{0,1\}^C: \quad y'_{uv} \leq \sum_{ij\in C}y_{ij}\,,\\
\nonumber &\forall i\in A: \sum_{ij\in C} y_{ij}\leq 1 \,,\quad \forall j\in B: \sum_{ij\in C} y_{ij}\leq 1\,, \\
 &uv\in C \Rightarrow y'_{uv}\geq y_{uv}\,.
\end{align}
The constraints stipulate that
(i)~the lifted edge $uv$ is $0$ if all the edges in the cut are $0$,
(ii)~there exists at most one active outgoing resp.\ incoming edge for every vertex in $A$ resp.\ $B$ and
(iii)~if there is also base edge $uv\in C$ then whenever it is active, the lifted  edge $uv$ must be active.

\begin{algorithm}
\caption{Cut-Subproblem-Optimization}
\label{alg:cut-subproblem-optimization}
 \textbf{Input} Edge costs $\theta^C$\\
 \textbf{Output} optimal value $\mathrm{opt}$ of subproblem.
    \begin{algorithmic}[1]
\State Define $\psi \in \R^{{A} \times {B}}$:
\State $\psi_{ij} = \begin{cases}
    \theta^C_{uv} + \theta'^C_{uv},& \text{if }ij=uv\wedge uv\in C \wedge \theta'^C_{uv} > 0 \\
    \infty,& \text{if }ij \notin C\\
    \theta^C_{ij},& \text{otherwise} 
\end{cases}$
    \State $z^* \in \argmin\limits_{z \in \{0,1\}^{{A}\times{B}}} \sum\limits_{i\in A}\sum\limits_{j\in B} \psi_{ij}z_{ij}$,\ s.t.\  $z \mathbbmss{1} \leq \mathbbmss{1}, z^\top \mathbbmss{1} \leq \mathbbmss{1}$
    \State $\mathrm{opt}=\sum_{ij\in C}\psi_{ij} z^*_{ij}$
    \IIf{$\theta'^C_{uv}\geq 0$} return $\mathrm{opt}$
    \If{$\exists kl\in C: z_{kl}=1$}
    \State return $\mathrm{opt}+\theta'^C_{uv}$
    \Else
        \State $\alpha =\min_{ij\in C}\theta^C_{ij}$
    \IIf{$ |\theta'^C_{uv}|> \alpha$} return $\theta'^C_{uv}+\alpha$
    \EElse return $\mathrm{opt}$ 
    \EndIf
    \end{algorithmic}
\end{algorithm}

\myparagraph{Optimization of a~cut subproblem}  with respect to feasible set $\mathcal{X}^C$ is given by Alg.~\ref{alg:cut-subproblem-optimization}. 
Its key is to solve a linear assignment problem (LAP) \cite{ahuja1988network} between vertex sets $A$ and $B$.
The assignment cost $\psi_{ij}$ for $(i,j)\in  A\times B$ is the cut edge cost $\theta^C_{ij}$ if edge $ij$ belongs to $C$ and $\infty$ otherwise. 
In the special case of $uv\mhyphen\text{cut }C$ containing base edge $uv$ and the lifted edge cost $\theta'^C_{uv}$ being positive, the assignment cost $\psi_{uv}$ is increased by $\theta'^C_{uv}$. 

A candidate optimal labeling of cut edges is given by values of LAP variables $z_{ij}$.
If $\theta'^C_{uv}\geq 0$, the optimal value found by the LAP is the optimal value of the cut subproblem.
If it is negative, we distinguish two cases: 
(i)~If a cut edge $kl$ labeled by one exists, the lifted edge cost $\theta'^C_{uv}$ is added to the optimal value of LAP.
(ii)~Otherwise, we inspect whether it is better to activate the smallest-cost cut edge and the lifted edge $uv$ or keep all edges inactive.

We use a~variation  of Alg.~\ref{alg:cut-subproblem-optimization} with an edge variable restricted to be either $0$ or $1$ for computing min-marginals.

\andrea{I changed the notation of node variables to $z$ because of confusion with $x$ in general subproblem. However, in Alg. \ref{alg:cut-subproblem-optimization} we have now confusion of $z$.}

\myparagraph{Cutting plane.} There are exponentially many cut subproblems.
Therefore, we add only those that improve the lower bound.
See Appendix~\ref{sec:cutting-planes-cut-subproblems} for details.


\subsection{Message Passing}

The overall algorithm for optimizing the Lagrange decomposition is Alg.~\ref{alg:message-passing} in the Appendix.
First, inflow and outflow subproblems are initialized for every node.
Then, for a number of iterations or until convergence, costs for each subproblems are adjusted iteratively by computing min-marginals and adjusting the reparametrization  proportionally to the min-marginal's value.
Additionally, every $k$-th iteration additional path and cut subproblems are separated and added to the Lagrange decomposition.  

\myparagraph{Solver complexity.} 
We need $\mathcal{O}(|E^{inp}|)$ space where $E^{inp}$ are all edges before graph sparsification. 
The most time consuming is computing lifted edges min-marginals  for each in/outflow subproblem. Alg.~\ref{ap:alg:all-lifted-mm} computes them for one outflow subproblem and it is linear in the number of detections per frame. 
This significantly improves the complexity of to the optimal LDP solver LifT, making LDP applicable to large problem instances. 
See Appendix~\ref{appendix:runtime} for details.


\subsection{Primal Rounding}\label{sec:primal-rounding}
For computing primal solutions we solve a minimum cost flow (MCF) problem on the base edges and improve this initial solution with a local search heuristic.

Without lifted edges, the disjoint paths problem is an instance of MCF, which can be efficiently optimized via combinatorial solvers like the successive shortest path solver that we employ~\cite{ahuja1988network}.
We enforce node disjoint paths via splitting each node $u\in V$ into two nodes $u^{in},u^{out}\in V^{mcf}$ in the MCF graph $G^{mcf}=(V^{mcf},E^{mcf})$, adding an additional edge $u^{in}u^{out}$ to $E^{mcf}$ and setting capacity $[0,1]$ on all edges $E^{mcf}$. 
Each node except $s$ and $t$ has demand $0$.
Alg.~\ref{alg:mcf-init} calculates MCF edge costs from in/outflow subproblems using Alg.~\ref{alg:outflow-minimization}.
We obtain the cost of each flow edge $u^{out}v^{in}$
from the inflow subproblem of $v$ and the outflow subproblem of $u$ using their minima where edge $uv$ is active. This combines well the cost from base and lifted edges. 


We describe the local search heuristic for improving the MCF solution in Alg.~\ref{alg:post-process-primal} in the Appendix.
It works with sets of disjoint paths.
First, paths are split if this leads to a decrease in the objective.
Second, merges are explored.
If a merge of two paths is not possible, we iteratively check whether cutting off one node from the first path's end or the second paths's beginning makes the connection possible.
If yes and the connection is decreasing the objective, the nodes are cut off and the paths are connected.

\begin{algorithm}
\caption{Init-MCF}
\label{alg:mcf-init}
    \begin{algorithmic}[1]
    \State  \begin{varwidth}[t]{\linewidth}
    $\forall u \in V\backslash \{s,t\}$: \par
     $(o,lc,\alpha^{in})$=Opt-In-Cost$(u,\theta^{in}_u)$\par
     $(o,lc,\alpha^{out})$=Opt-Out-Cost$(u,\theta^{out}_u)$
    \end{varwidth}
    \State $\forall u \in V\backslash \{s,t\}:$ $\theta^{mcf}_{s u^{in}} = \alpha^{in}_{su}$, $\theta^{mcf}_{u^{out} t} = \alpha^{out}_{ut}$
    \State 
    \begin{varwidth}[t]{\linewidth}
    $\forall u \in \{uv \in E| u \neq s, v \neq t\}:$
    $\theta^{mcf}_{u^{out} v^{in}} = \alpha^{out}_{uv} + \alpha^{in}_{uv}$
    \end{varwidth}
    \end{algorithmic}
\end{algorithm}

	\section{Experiments}
\label{sec:experiments}
We integrate our LDP solver into an MOT system (Appendix, Fig. \ref{fig:overallframework}) and show
on 
challenging datasets that higher order MOT is scalable to big problem instances.
In the next sections, we describe our experimental setup and present results. We clarify the edge cost calculation and construction of the base and the lifted graph and their sparsification. 

\subsection{Pairwise Costs}
\label{sec:cost_classifier}
We use multi layer perceptrons (MLP) to predict the likelihood that two detections belong to the same trajectory.
We divide the maximal frame distance into $20$ intervals of equal length and train one separate MLP for each set of frame distances.
 We transform the MLP output to the cost of the edge between the detections and use it in our objective~\eqref{eq:lifted-disjoint-paths-problem}.
Negative cost indicates that two detections belong to the same trajectory. Positive cost reflects the opposite.

\myparagraph{MLP architecture.}
Each MLP consists of a fully connected layer with the same number of neurons as the input size, followed by a LeakyReLU activation \cite{maasrectifier} and a~fully connected single neuron output layer. We add sigmoid activation in the training. 
We describe our spatial and visual features used as the input in the paragraphs below.

\myparagraph{Spatial feature} uses bounding box information of two detections $v$ and $w$. We align the boxes such that their centers overlap. The similarity feature $\sigma_{vw,\text{Spa}} \in [0, 1]$ is the intersection-over-union between two aligned boxes.

\myparagraph{Appearance feature.}
We create an appearance feature $F_{v}$ for each detection $v$  by training the method  \cite{zheng2019joint}  on the training set of the respective benchmark and additional data from \cite{zheng2015scalable, wei2018person, ristani2016MTMC}. The similarity feature $\sigma_{vw,\text{App}}$ between detection $v$ and $w$ given by \mbox{$\sigma_{vw,\text{App}}:=\max\{0,\la F_v, F_{w}\ra\}$} is used. A higher value indicates a higher similarity. 

\myparagraph{Global context normalization.}
The two features $\sigma_{vw,\text{Spa}}$, $\sigma_{vw,\text{App}}$ depend entirely on the nodes $v$ and $w$. 
To include global context, we append several normalized versions of the two features to the edge feature vector, inspired by \cite{hornakova2020lifted}.
Both features $\sigma_{ij,*}$ of edge $ij$ undergo a~five-way normalization. 
In each case, the maximum feature value from a~relevant set of edges is selected as the normalization value.
The normalization is done by dividing the two features $\sigma_{ij,*}$ by each of their five normalization values. This yields 10 values.
Another set of 10 values for edge $ij$ is obtained by dividing $\sigma^2_{ij,*}$ by each of the five normalization values.
Together with the two unnormalized features $\sigma_{ij,*}$, edge feature vectors have length 22.
See Appendix~\ref{appendix:feature_scaling} for details.

\myparagraph{Training.} 
We iteratively train our MLP on batches $B$ containing sampled edges. To compensate the imbalance between true positive and true negative edges, we use an $\alpha$-balanced focal loss \cite{focalLoss} with $\gamma=1$. We define the $\alpha$-weight $\alpha^{(g,\Delta f)}$ to weight the correct classification of edge $vw$ with ground truth flow value $g_{vw} \in \{0,1\}$, time distance $\Delta f$ between $v$  in frame $f_{v}$ and $w$ in frame $f_{w}$, and value $g \in \{0,1\}$ via {$\alpha^{(g,\Delta f)} := 1/| \{vw \in E: |f_{v} -f_{w}| = \Delta f, g_{vw} = g \} |\, .$}
We optimize the classifier using Adam with $l_r=0.1$, $\beta_1=0.9$, $\beta_2=0.999$ and $\epsilon=10^{-8}$. 
To reduce complexity while maintaining  variety during training, we introduce an extended sampling. Given a frame $f$, we create batches $B(f)$ by sampling detections from a fixed sequence of frame shifts starting at frame $f$ ensuring that all temporal distances $\Delta f$ are present in $B(f)$ (details in Appendix \ref{appendix:sampling}). We then subsample the $k$-nearest detections to a random generated image position with $k=160$, which sensitizes  training to crowded scenes. We train the MLP for $3$ epochs with batches $B(f)$ for all frames $f$ of the dataset.

\subsection{Graph Construction}
\label{sec:graph_construction}
We create the base and the lifted graph edges between detections with time distance up to $2$ seconds.
We also add an edge from source $s$, and to sink $t$ to each detection.
In order to reduce computational complexity, we apply sparsification on both base and lifted graph as described later. 
\andrea{I would move first sentence of the Cost paragraph to graph sparsification. The rest of the paragraph should go to Section Cost Classifier which would be renamed to "Cost" or something.}

\myparagraph{Costs.}
We obtain base and lifted costs $c$ and $c'$ from the same MLP classifier (Sec.~\ref{sec:cost_classifier}). 
Due to decreasing classification accuracy with increasing frame distance $\Delta f$, we multiply the costs by a decay weight $\omega_{\Delta f} \coloneqq (10 \cdot \Delta f + 0.1)^{-1}$, so that edges representing long temporal distances have lower weight.
Edges from $s$ and  to $t$ have costs zero. 

Finally, we use simple heuristics to find pairs that are obviously matching or non-matching.
We set the corresponding costs to be high in absolute value, negative for matching and positive for non-matching, thereby inducing soft constrains. 
An obvious match is given by a nearly maximal feature similarity. Detection pairs are obviously non-matching, if the displacement between their bounding boxes is too high.
See  Appendix~\ref{appendix:high_confident} for details.

\myparagraph{Sparsification.}
The base edges are an intersection of two edge sets. The first set contains for every $v \in V'$ edges to its $3$ nearest (lowest-cost) neighbors from every subsequent time frame. 
The second set selects for every vertex the best edges to its preceding frames analogically.
Moreover, edges longer than 6 frames  must have costs lower than $3.0$.
To avoid double counting of edge costs, we subsequently set costs of all base edges between non-consecutive frames to zero, so that only  lifted edges maintain the costs. 
If a~lifted edge has cost around zero, it is not discriminative and we remove it, unless it overlaps with a~(zero-valued) base edge. 

\begin{table*}
\center
\def\arraystretch{0.9}
\small{
    \caption{Comparison of ApLift with the best performing solvers w.r.t.\ MOTA metric on the MOT challenge. $\uparrow$ higher is better,  $\downarrow$ lower is better. The two rightmost columns: average number of frames per sequence and the average number of detections per frame for dataset.}

    \begin{tabular}{c c  c c c c c c c c c c }

     \toprule
     & Method  & MOTA$\uparrow$  & IDF1$\uparrow$  & MT$\uparrow$  & ML$\downarrow$  & FP$\downarrow$ & FN$\downarrow$ & IDS$\downarrow$ & Frag$\downarrow$ & Frames & Density \\   
     \toprule
     
     \parbox[t]{3mm}{\multirow{3}{*}{\rotatebox[origin=c]{90}{MOT20}}} 
& ApLift (ours) & $\mathbf{58.9}$ & $56.5$ & $\mathbf{513}$ & $\mathbf{264}$ & $17739$ & $\mathbf{192736}$ & $2241$ & $2112$ & \multirow{3}{*}{1119.8} & \multirow{3}{*}{170.9} \\ 
& MPNTrack~\cite{braso2020learning} & $57.6$ & $\mathbf{59.1}$ & $474$ & $279$ & $16953$ & $201384$ & $\mathbf{1210}$ & $\mathbf{1420}$ & & \\ 
& Tracktor++v2~\cite{bergmann2019tracking} & $52.6$ & $52.7$ & $365$ & $331$ & $\mathbf{6930}$ & $236680$ & $1648$ & $4374$ & & \\ 
     \midrule

     \parbox[t]{3mm}{\multirow{4}{*}{\rotatebox[origin=c]{90}{MOT17}}} 
& CTTrackPub~\cite{zhou2021tracking} &  $\mathbf{61.5}$ & $59.6$ & $621$ & $752 $ & $\mathbf{14076}$ & $200672$ & $2583$ & $4965$ & \multirow{4}{*}{845.6} &\multirow{4}{*}{31.8} \\ 
& ApLift (ours) & $60.5$ & $\mathbf{65.6}$ & $\mathbf{798}$ & $\mathbf{728}$ & $30609$ & $\mathbf{190670}$ & $1709$ & $2672$ & & \\ 
& Lif\_T~\cite{hornakova2020lifted} &  $60.5$ &$ \mathbf{65.6} $& $637 $ & $791$  & $14966$ & $206619$ & $1189 $& $3476$ & & \\ 
& MPNTrack~\cite{braso2020learning} & $58.8$ & $61.7$ & $679$ & $788$ & $17416$ & $213594$ & $\mathbf{1185}$ & $\mathbf{2265}$ & &  \\

     \midrule
     
     \parbox[t]{3mm}{\multirow{4}{*}{\rotatebox[origin=c]{90}{MOT16}}} 
&ApLift (ours) & $\mathbf{61.7}$ & $\mathbf{66.1}$ & $\mathbf{260}$ & $\mathbf{237}$ & $9168$ & $\mathbf{60180}$ & $495$ & $802$ & \multirow{4}{*}{845.6} & \multirow{4}{*}{30.8}\\ 
&Lif\_T~\cite{hornakova2020lifted}  &  $61.3 $& $64.7$ & $205$ & $258$ & $4844$ & $65401$ & $389$ & $1034$ & & \\ 
& MPNTrack~\cite{braso2020learning} & $58.6$ & $61.7$ & $207$ & $258$ & $4949$ & $70252$ & $\mathbf{354}$ & $\mathbf{684}$ & & \\ 
& GSM~\cite{ijcai2020-74} & $57.0$ & $55.0$ & $167$ & $262$ & $\mathbf{4332}$ & $73573$ & $475$ & $859$ & & \\ 

     \midrule
     
     \parbox[t]{3mm}{\multirow{4}{*}{\rotatebox[origin=c]{90}{MOT15}}} 
& Lif\_T~\cite{hornakova2020lifted}  &  $\mathbf{52.5}$ & $\mathbf{60.0}$ & $244 $ & $186$ & $6837$ & $21610$ & $730$ & $1047$ & \multirow{4}{*}{525.7} & \multirow{4}{*}{10.8} \\ 
& MPNTrack~\cite{braso2020learning} & $ 51.5 $& $58.6$ & $225$ & $187$ & $7260$ & $21780$ & $\mathbf{375}$ & $\mathbf{872}$ & &  \\ 
& ApLift (ours) & $51.1$ & $59.0$ & $\mathbf{284}$ & $\mathbf{163}$ & $10070$ & $\mathbf{19288}$ & $677$ & $1022$ & & \\ 
& Tracktor15~\cite{bergmann2019tracking} & $ 44.1 $& $46.7$ & $130$ & $189$ & $\mathbf{6477}$ & $26577$ & $1318$ & $1790$ & &  \\ 

     \bottomrule
    \end{tabular}

\label{tab:mot}
}
\vspace{-5mm}
\end{table*}

\subsection{Inference}
For fair comparison to state of the art, we filter and refine detections using tracktor~\cite{bergmann2019tracking} as in \cite{hornakova2020lifted}. Different to \cite{hornakova2020lifted}, we apply tracktor to recover missing detections before running the solver.

While we solve MOT15/16/17 on global graphs,  we solve MOT20 in time intervals in order to decrease memory consumption and runtime.
First, we solve the problem on non-overlapping adjacent intervals and fix the trajectories in the interval centers. Second, we solve the problem on a new set of intervals where each of them covers unassigned detections in two initial neighboring intervals and enables connections to the fixed trajectory fragments.
We use the maximal edge length of 50 frames in MOT20. Therefore, 150 is the minimal interval length such that all edges from a~detection are used when assigning the detection to a~trajectory. This way, the solver has sufficient context for making each decision.
Intervals longer than 200 frames increase the complexity significantly for MOT20, therefore we use interval length 150 in our experiments.


\myparagraph{Post-processing.} We use simple heuristics to check if base edges over long time gaps correspond to plausible motions, and split trajectories if necessary.
Finally, we use linear interpolation to recover missing detections within a trajectory. 
Appendix~\ref{sec:inference} contains further details on inference.


\subsection{Tracking Evaluation}
 
We evaluate our method on four standard MOT benchmarks.
The MOT15/16/17 benchmarks \cite{MOTChallenge2015,MOT16} contain semi-crowded videos sequences filmed from a~static or a~moving camera. 
MOT20 \cite{MOTChallenge20} comprises crowded scenes with considerably higher number of  frames  and detections per frame, see Tab.~\ref{tab:mot}.
The challenge does not come only with the data size. Detectors make more errors in crowded scenes due to frequent occlusions and appearance features are less discriminative as the distance of people to the camera is high. 
Using higher order information helps in this context. However, the number of edges in our graphs grows quadratically with the number of detections per frame. Therefore, it is crucial to make the tracker scalable to these massive data. 
We use the following ingredients to solve the problems: (i)~fast but accurate method for obtaining edge costs, (ii)~approximate LDP solver delivering high-quality results fast, (iii)~preprocessing heuristics, (iv)~interval solution keeping sufficient context for each decision. 

We use training data of the corresponding dataset for training and the public detections for training and test. 
\vspace{-6mm}
\begin{center}
\begin{table}
\setlength{\tabcolsep}{3pt}
\def\arraystretch{0.8}
\caption{Influence of lifted graph sparsification, message passing and using zero base costs on MOT17 train without postprocessing.
}
\label{tab:ablations}
\small{
\begin{tabularx}{\columnwidth}{cccccccc}
\toprule
 \multirow{2}{*}{$E'$} & MP & Base & \multirow{2}{*}{IDF1$\uparrow$}&  \multirow{2}{*}{MOTA$\uparrow$}
& \multirow{2}{*}{FP$\downarrow$}& \multirow{2}{*}{FN$\downarrow$}&  \multirow{2}{*}{IDS$\downarrow$} \\
&steps & cost & & & & & \\
\midrule
Dense        & $82$ & Zero      & $\mathbf{71.0}$ & $\mathbf{66.3}$& $2826$  & $\mathbf{109263}$ & $1369$ \\
Dense        & $0$  & Zero      & $70.3$ & $\mathbf{66.3}$ & $2832$  & $109265$ & $1354$        \\
Dense        & $82$ & Orig.     & $69.8$ &$\mathbf{66.3}$& $\mathbf{2824}$  & $109266$ & $1355$    \\
Sparse       & $82$ & Orig.     & $69.1$ & $\mathbf{66.3}$ & $2825$  & $\mathbf{109263}$ & $\mathbf{1316}$     \\
\bottomrule
\end{tabularx}
}
\vspace{-2.5mm}
\end{table}

\end{center}
\vspace{-8mm}
\begin{center}
\begin{table}
\def\arraystretch{0.8}
\small{
\caption{Runtime and IDF1 comparison of LDP solvers: ApLift (ours)  with 6, 11, 31 and 51 iterations and LifT\cite{hornakova2020lifted} (two step procedure) on first $n$ frames of sequence \textit{MOT20-01} from MOT20.}
\label{tabel:runtime}

\begin{tabular}{c l c c c c c}
\toprule

$n$ & Measure      & LifT & Our6 & Our11 & Our31 & Our51   \\ \toprule
\multirow{2}{*}{50}     & IDF1$\uparrow$         & $80.6$ & $\mathbf{83.3}$   & $\mathbf{83.3}$    & $81.5$    & $81.5$   \\ 
     & time {[}s{]} & 272  & 2      & 4       & 16      & 35     \\ \midrule
\multirow{2}{*}{100}    & IDF1$\uparrow$         & $80.4$ & $\mathbf{82.5}$   & $\mathbf{82.5}$    & $81.6$    & $81.6$   \\ 
    & time {[}s{]} & $484$  & $14$     & $24$      & $97$      & $218$    \\ \midrule
\multirow{2}{*}{150}    & IDF1$\uparrow$         & $78.1$ & $\mathbf{81.0}$   & $\mathbf{81.0}$    & $79.8$    &$ 79.8$   \\ 
    & time {[}s{]} & $1058$ & $25$     & $46$      & $192$     & $431$    \\ \midrule
\multirow{2}{*}{200}    & IDF1$\uparrow$         & $73.2$ & $\mathbf{75.4}$   & $\mathbf{75.4}$    & $74.6$    & $74.6$  \\ 
    & time {[}s{]} &  $2807$ & $36 $    & $66$      & $277$     & $616$   \\ \bottomrule
\end{tabular}
}
\vspace{-5.2mm}
\end{table}
\end{center}
\vspace{-10mm}


We compare our method using standard MOT metrics. MOTA \cite{motametric} and IDF1 \cite{IDF1metric} are considered  the most representative  as they incorporate other metrics (in particular recall and precision).
IDF1 is more penalized by inconsistent trajectories. We also report mostly tracked (MT) and mostly lost trajectories (ML), false negatives (FN) and false  positives (FP), ID switches (IDS) and fragmentations (Frag) as provided by the evaluation protocols \cite{motametric} of the benchmarks.


Tab.~\ref{tab:mot} shows the comparison to the best (w.r.t.\ MOTA)  peer-reviewed methods  on test sets.
Our approximate solver achieves almost the same results on  MOT15/16/17 as the optimal LDP solver~\cite{hornakova2020lifted}, while using simpler features.
%
%
Overall, our method performs on par with state  of the art on all evaluated benchmarks, especially in MOTA and IDF1. 
Our complete results and videos are publicly available\footnote{\scriptsize \url{https://motchallenge.net/method/MOT=4031&chl=13}}.
%
The proposed method achieves overall low FN values but slightly high FP values. 
FP/FN are mostly affected by preprocessing the input detections and interpolation in the post-processing. The impact of post-processing (trajectory splits and interpolations) on MOT20, which causes FP but reduces FN and IDS, is analyzed in the Appendix (Tab.~\ref{tab:postprocessing}).  

Tab.~\ref{tab:ablations} shows the influence of various settings on the performance of MOT17 train.
While we usually set the base edge costs to zero (Sec.~\ref{sec:graph_construction}), we need to keep them when using the sparsified lifted graph.
Both, message passing and dense lifted edges improve IDF1 and IDS. However, MOTA, FN and FP remain almost unchanged.

Finally, we compare
the runtime of our solver against the two step version of LifT for a~sample sequence in Tab.~\ref{tabel:runtime}.  
With increasing problem complexity, our solver outperforms LifT w.r.t.\ runtime while achieving similar IDF1. 
Counter-intuitively, as we progress towards increasingly better optimization objective values, the tracking metrics can slightly decrease due to imperfect edge costs.
We compare our solver against optimal (one step) LifT on MOT17 train in Appendix~\ref{appendix:runtime}. 


	\section{Conclusion}
We demonstrated that the NP-hard LDP model is applicable  for processing massive sequences of MOT20. The combination of an approximate LDP solver, efficiently computable costs and subdivision of data keeping sufficient context for each decision make this possible.













	\section{Acknowledgements}
This work was supported by the Federal Ministry of Education and Research (BMBF), Germany, under the project LeibnizKILabor (grant no.\ 01DD20003), the Center for Digital Innovations
(ZDIN) and the Deutsche Forschungsgemeinschaft (DFG) under Germany’s Excellence Strategy
within the Cluster of Excellence PhoenixD (EXC 2122).



	{\small
		\bibliographystyle{ieee_fullname}
		\bibliography{literature}

\begin{thebibliography}{10}\itemsep=-1pt

\bibitem{ahuja1988network}
Ravindra~K Ahuja, Thomas~L Magnanti, and James~B Orlin.
\newblock {\em Network flows}.
\newblock Cambridge, Mass.: Alfred P. Sloan School of Management,
  Massachusetts, 1988.

\bibitem{alahi2009sport}
Alexandre Alahi, Yannick Boursier, Laurent Jacques, and Pierre Vandergheynst.
\newblock Sport players detection and tracking with a mixed network of planar
  and omnidirectional cameras.
\newblock In {\em 2009 Third ACM/IEEE International Conference on Distributed
  Smart Cameras (ICDSC)}, pages 1--8. IEEE, 2009.

\bibitem{alahi2017unsupervised}
Alexandre Alahi, Judson Wilson, Li Fei-Fei, and Silvio Savarese.
\newblock Unsupervised camera localization in crowded spaces.
\newblock In {\em 2017 IEEE International Conference on Robotics and Automation
  (ICRA)}, pages 2666--2673. IEEE, 2017.

\bibitem{aroraICCV13}
Chetan Arora and Amir Globerson.
\newblock Higher order matching for consistent multiple target tracking.
\newblock In {\em Proceedings of the IEEE International Conference on Computer
  Vision}, pages 177--184, 12 2013.

\bibitem{babaee2018multiple}
Maryam Babaee, Ali Athar, and Gerhard Rigoll.
\newblock Multiple people tracking using hierarchical deep tracklet
  re-identification.
\newblock {\em arXiv preprint arXiv:1811.04091}, 2018.

\bibitem{berclaz2011multiple}
Jerome Berclaz, Francois Fleuret, Engin Turetken, and Pascal Fua.
\newblock Multiple object tracking using k-shortest paths optimization.
\newblock {\em IEEE Transactions on Pattern Analysis and Machine Intelligence},
  33(9):1806--1819, 2011.

\bibitem{bergmann2019tracking}
Philipp Bergmann, Tim Meinhardt, and Laura Leal-Taix\'{e}.
\newblock Tracking without bells and whistles.
\newblock In {\em IEEE International Conference on Computer Vision}, pages
  941--951, 2019.

\bibitem{motametric}
Keni Bernardin and Rainer Stiefelhagen.
\newblock Evaluating multiple object tracking performance: The clear mot
  metrics.
\newblock {\em EURASIP Journal on Image and Video Processing}, 2008, 01 2008.

\bibitem{bochinski2017high}
Erik Bochinski, Volker Eiselein, and Thomas Sikora.
\newblock High-speed tracking-by-detection without using image information.
\newblock In {\em 2017 14th IEEE International Conference on Advanced Video and
  Signal Based Surveillance (AVSS)}, pages 1--6. IEEE, 2017.

\bibitem{braso2020learning}
Guillem Bras{\'o} and Laura Leal-Taix{\'e}.
\newblock Learning a neural solver for multiple object tracking.
\newblock In {\em Proceedings of the IEEE/CVF Conference on Computer Vision and
  Pattern Recognition}, pages 6247--6257, 2020.

\bibitem{brendel2011multiobject}
William Brendel, Mohamed Amer, and Sinisa Todorovic.
\newblock Multiobject tracking as maximum weight independent set.
\newblock In {\em IEEE Conference on Computer Vision and Pattern Recognition},
  pages 1273--1280. IEEE, 2011.

\bibitem{chari2015pairwise}
Visesh Chari, Simon Lacoste-Julien, Ivan Laptev, and Josef Sivic.
\newblock On pairwise costs for network flow multi-object tracking.
\newblock In {\em IEEE Conference on Computer Vision and Pattern Recognition},
  pages 5537--5545, 2015.

\bibitem{chong2018forty}
Chee-Yee Chong, Shozo Mori, and Donald~B Reid.
\newblock Forty years of multiple hypothesis tracking-a review of key
  developments.
\newblock In {\em 2018 21st International Conference on Information Fusion
  (FUSION)}, pages 452--459. IEEE, 2018.

\bibitem{chopra1993partition}
Sunil Chopra and Mendu~R Rao.
\newblock The partition problem.
\newblock {\em Mathematical Programming}, 59(1):87--115, 1993.

\bibitem{dehghan2015gmmcp}
Afshin Dehghan, Shayan Modiri~Assari, and Mubarak Shah.
\newblock {GMMCP} tracker: Globally optimal generalized maximum multi clique
  problem for multiple object tracking.
\newblock In {\em IEEE Conference on Computer Vision and Pattern Recognition},
  pages 4091--4099, 2015.

\bibitem{MOTChallenge20}
Patrick Dendorfer, Hamid Rezatofighi, Anton Milan, Javen Shi, Daniel Cremers,
  Ian Reid, Stephan Roth, Konrad Schindler, and Laura Leal-Taix\'{e}.
\newblock Mot20: A benchmark for multi object tracking in crowded scenes.
\newblock {\em arXiv:2003.09003[cs]}, Mar. 2020.
\newblock arXiv: 2003.09003.

\bibitem{duan2019centernet}
Kaiwen Duan, Song Bai, Lingxi Xie, Honggang Qi, Qingming Huang, and Qi Tian.
\newblock Centernet: Keypoint triplets for object detection.
\newblock In {\em Proceedings of the IEEE/CVF International Conference on
  Computer Vision}, pages 6569--6578, 2019.

\bibitem{fenzi2014asev}
Michele Fenzi, J{\"o}rn Ostermann, Nico Mentzer, Guillermo Pay{\'a}-Vay{\'a},
  Holger Blume, Tu~Ngoc Nguyen, and Thomas Risse.
\newblock Asev—automatic situation assessment for event-driven video
  analysis.
\newblock In {\em 2014 11th IEEE International Conference on Advanced Video and
  Signal Based Surveillance (AVSS)}, pages 37--43. IEEE, 2014.

\bibitem{frossard2018end}
Davi Frossard and Raquel Urtasun.
\newblock End-to-end learning of multi-sensor 3d tracking by detection.
\newblock In {\em 2018 IEEE international conference on robotics and automation
  (ICRA)}, pages 635--642. IEEE, 2018.

\bibitem{globerson2007fixing}
Amir Globerson and Tommi Jaakkola.
\newblock Fixing max-product: Convergent message passing algorithms for map
  lp-relaxations.
\newblock {\em Advances in neural information processing systems}, 20:553--560,
  2007.

\bibitem{guignard1987lagrangean}
Monique Guignard and Siwhan Kim.
\newblock Lagrangean decomposition for integer programming: theory and
  applications.
\newblock {\em RAIRO-Operations Research-Recherche Op{\'e}rationnelle},
  21(4):307--323, 1987.

\bibitem{gurobi}
LLC Gurobi~Optimization.
\newblock Gurobi optimizer reference manual, 2019.

\bibitem{Henschel_2018_CVPR_Workshops}
Roberto Henschel, Laura Leal-Taix\'{e}, Daniel Cremers, and Bodo Rosenhahn.
\newblock Fusion of head and full-body detectors for multi-object tracking.
\newblock In {\em IEEE Conference on Computer Vision and Pattern Recognition
  Workshops}, June 2018.

\bibitem{henschel2020simultaneous}
Roberto Henschel, Timo von Marcard, and Bodo Rosenhahn.
\newblock Simultaneous identification and tracking of multiple people using
  video and imus.
\newblock In {\em Proceedings of the IEEE Conference on Computer Vision and
  Pattern Recognition Workshops}, pages 0--0, 2019.

\bibitem{henschel2019multiple}
Roberto Henschel, Yunzhe Zou, and Bodo Rosenhahn.
\newblock Multiple people tracking using body and joint detections.
\newblock In {\em IEEE Conference on Computer Vision and Pattern Recognition
  Workshops}, pages 0--0, 2019.

\bibitem{Ho_2020_ACCV}
Kalun Ho, Amirhossein Kardoost, Franz-Josef Pfreundt, Janis Keuper, and Margret
  Keuper.
\newblock A two-stage minimum cost multicut approach to self-supervised
  multiple person tracking.
\newblock In {\em Proceedings of the Asian Conference on Computer Vision
  (ACCV)}, November 2020.

\bibitem{hofmann2013hypergraphs}
Martin Hofmann, Daniel Wolf, and Gerhard Rigoll.
\newblock Hypergraphs for joint multi-view reconstruction and multi-object
  tracking.
\newblock In {\em IEEE Conference on Computer Vision and Pattern Recognition},
  pages 3650--3657, 2013.

\bibitem{hornakova2020lifted}
Andrea Hornakova, Roberto Henschel, Bodo Rosenhahn, and Paul Swoboda.
\newblock Lifted disjoint paths with application in multiple object tracking.
\newblock In {\em The 37th International Conference on Machine Learning
  (ICML)}, July 2020.

\bibitem{hornakova2017analysis}
Andrea Hor\v{n}\'akov\'a, Jan-Hendrik Lange, and Bjoern Andres.
\newblock Analysis and optimization of graph decompositions by lifted
  multicuts.
\newblock In {\em International Conference on Machine Learning}, 2017.

\bibitem{hu2019dual}
Weiming Hu, Xinchu Shi, Zongwei Zhou, Junliang Xing, Haibin Ling, and Stephen
  Maybank.
\newblock Dual {L}1-normalized context aware tensor power iteration and its
  applications to multi-object tracking and multi-graph matching.
\newblock {\em International Journal of Computer Vision}, Oct 2019.

\bibitem{huang2008robust}
Chang Huang, Bo Wu, and Ramakant Nevatia.
\newblock Robust object tracking by hierarchical association of detection
  responses.
\newblock In {\em European Conference on Computer Vision}, pages 788--801.
  Springer, 2008.

\bibitem{jiang2007linear}
Hao Jiang, Sidney Fels, and James~J Little.
\newblock A linear programming approach for multiple object tracking.
\newblock In {\em 2007 IEEE Conference on Computer Vision and Pattern
  Recognition}, pages 1--8. IEEE, 2007.

\bibitem{keuper2018motion}
Margret Keuper, Siyu Tang, Bjoern Andres, Thomas Brox, and Bernt Schiele.
\newblock Motion segmentation \& multiple object tracking by correlation
  co-clustering.
\newblock {\em IEEE Transactions on Pattern Analysis and Machine Intelligence},
  42(1):140--153, 2018.

\bibitem{kim2015multiple}
Chanho Kim, Fuxin Li, Arridhana Ciptadi, and James~M Rehg.
\newblock Multiple hypothesis tracking revisited.
\newblock In {\em Proceedings of the IEEE international conference on computer
  vision}, pages 4696--4704, 2015.

\bibitem{kovacs2015minimum}
P{\'e}ter Kov{\'a}cs.
\newblock Minimum-cost flow algorithms: an experimental evaluation.
\newblock {\em Optimization Methods and Software}, 30(1):94--127, 2015.

\bibitem{kumar2014multiple}
Ratnesh Kumar, Guillaume Charpiat, and Monique Thonnat.
\newblock Multiple object tracking by efficient graph partitioning.
\newblock In {\em Asian Conference on Computer Vision}, pages 445--460.
  Springer, 2014.

\bibitem{MOTChallenge2015}
Laura Leal-Taix\'{e}, Anton Milan, Ian Reid, Stephan Roth, and Konrad
  Schindler.
\newblock {MOTC}hallenge 2015: {T}owards a benchmark for multi-target tracking.
\newblock {\em arXiv:1504.01942 [cs]}, Apr. 2015.
\newblock arXiv: 1504.01942.

\bibitem{leal2012branch}
Laura Leal-Taix\'{e}, Gerard Pons-Moll, and Bodo Rosenhahn.
\newblock Branch-and-price global optimization for multi-view multi-target
  tracking.
\newblock In {\em IEEE Conference on Computer Vision and Pattern Recognition},
  pages 1987--1994. IEEE, 2012.

\bibitem{liang2020pnpnet}
Ming Liang, Bin Yang, Wenyuan Zeng, Yun Chen, Rui Hu, Sergio Casas, and Raquel
  Urtasun.
\newblock Pnpnet: End-to-end perception and prediction with tracking in the
  loop.
\newblock In {\em Proceedings of the IEEE/CVF Conference on Computer Vision and
  Pattern Recognition}, pages 11553--11562, 2020.

\bibitem{focalLoss}
Tsung-Yi Lin, Priyal Goyal, Ross Girshick, Kaiming He, and Piotr Dollar.
\newblock Focal loss for dense object detection.
\newblock {\em IEEE Transactions on Pattern Analysis and Machine Intelligence},
  PP:1--1, 07 2018.

\bibitem{ijcai2020-74}
Qiankun Liu, Qi Chu, Bin Liu, and Nenghai Yu.
\newblock Gsm: Graph similarity model for multi-object tracking.
\newblock In Christian Bessiere, editor, {\em Proceedings of the Twenty-Ninth
  International Joint Conference on Artificial Intelligence, {IJCAI-20}}, pages
  530--536. International Joint Conferences on Artificial Intelligence
  Organization, 7 2020.
\newblock Main track.

\bibitem{lu2013learning}
Wei-Lwun Lu, Jo-Anne Ting, James~J Little, and Kevin~P Murphy.
\newblock Learning to track and identify players from broadcast sports videos.
\newblock {\em IEEE transactions on pattern analysis and machine intelligence},
  35(7):1704--1716, 2013.

\bibitem{maasrectifier}
Andrew~L. Maas, Awny~Y. Hannun, and Andrew~Y. Ng.
\newblock Rectifier nonlinearities improve neural network acoustic models.
\newblock In {\em Proceedings of the International Conference on Machine
  Learning}, Atlanta, Georgia, 2013.

\bibitem{meyer2018message}
Florian Meyer, Thomas Kropfreiter, Jason~L Williams, Roslyn Lau, Franz
  Hlawatsch, Paolo Braca, and Moe~Z Win.
\newblock Message passing algorithms for scalable multitarget tracking.
\newblock {\em Proceedings of the IEEE}, 106(2):221--259, 2018.

\bibitem{MOT16}
Anton Milan, Laura Leal-Taix\'{e}, Ian Reid, Stephan Roth, and Konrad
  Schindler.
\newblock {MOT}16: {A} benchmark for multi-object tracking.
\newblock {\em arXiv:1603.00831 [cs]}, Mar. 2016.
\newblock arXiv: 1603.00831.

\bibitem{redmon2016you}
Joseph Redmon, Santosh Divvala, Ross Girshick, and Ali Farhadi.
\newblock You only look once: Unified, real-time object detection.
\newblock In {\em Proceedings of the IEEE conference on computer vision and
  pattern recognition}, pages 779--788, 2016.

\bibitem{ren2015faster}
Shaoqing Ren, Kaiming He, Ross Girshick, and Jian Sun.
\newblock Faster r-cnn: Towards real-time object detection with region proposal
  networks.
\newblock {\em arXiv preprint arXiv:1506.01497}, 2015.

\bibitem{rezatofighi2015joint}
Seyed~Hamid Rezatofighi, Anton Milan, Zhen Zhang, Qinfeng Shi, Anthony Dick,
  and Ian Reid.
\newblock Joint probabilistic data association revisited.
\newblock In {\em Proceedings of the IEEE international conference on computer
  vision}, pages 3047--3055, 2015.

\bibitem{IDF1metric}
Ergys Ristani, Francesco Solera, Roger Zou, Rita Cucchiara, and Carlo Tomasi.
\newblock Performance measures and a data set for multi-target, multi-camera
  tracking.
\newblock In Gang Hua and Herv{\'e} J{\'e}gou, editors, {\em Computer Vision --
  ECCV 2016 Workshops}, pages 17--35, Cham, 2016. Springer International
  Publishing.

\bibitem{ristani2016MTMC}
Ergys Ristani, Francesco Solera, Roger~S. Zou, Rita Cucchiara, and Carlo
  Tomasi.
\newblock Performance measures and a data set for multi-target, multi-camera
  tracking.
\newblock In {\em European Conference on Computer Vision Workshop on
  Benchmarking Multi-Target Tracking}, 2016.

\bibitem{ristani2014tracking}
Ergys Ristani and Carlo Tomasi.
\newblock Tracking multiple people online and in real time.
\newblock In {\em Asian Conference on Computer Vision}, pages 444--459.
  Springer, 2014.

\bibitem{sadeghian2017tracking}
Amir Sadeghian, Alexandre Alahi, and Silvio Savarese.
\newblock Tracking the untrackable: Learning to track multiple cues with
  long-term dependencies.
\newblock In {\em IEEE International Conference on Computer Vision}, pages
  300--311, 2017.

\bibitem{9011349}
Julian Smith, Florian Particke, Markus Hiller, and Jörn Thielecke.
\newblock Systematic analysis of the pmbm, phd, jpda and gnn multi-target
  tracking filters.
\newblock In {\em 2019 22th International Conference on Information Fusion
  (FUSION)}, pages 1--8, 2019.

\bibitem{Swoboda_2017_CVPR}
Paul Swoboda, Jan Kuske, and Bogdan Savchynskyy.
\newblock A dual ascent framework for lagrangean decomposition of combinatorial
  problems.
\newblock In {\em Proceedings of the IEEE Conference on Computer Vision and
  Pattern Recognition (CVPR)}, July 2017.

\bibitem{tang2015subgraph}
Siyu Tang, Bjoern Andres, Miykhaylo Andriluka, and Bernt Schiele.
\newblock Subgraph decomposition for multi-target tracking.
\newblock In {\em IEEE Conference on Computer Vision and Pattern Recognition},
  pages 5033--5041, 2015.

\bibitem{tang2016multi}
Siyu Tang, Bjoern Andres, Mykhaylo Andriluka, and Bernt Schiele.
\newblock Multi-person tracking by multicut and deep matching.
\newblock In {\em European Conference on Computer Vision}, pages 100--111.
  Springer, 2016.

\bibitem{tang2017multiple}
Siyu Tang, Mykhaylo Andriluka, Bjoern Andres, and Bernt Schiele.
\newblock Multiple people tracking by lifted multicut and person
  re-identification.
\newblock In {\em IEEE Conference on Computer Vision and Pattern Recognition},
  2017.

\bibitem{von2018recovering}
Timo von Marcard, Roberto Henschel, Michael~J Black, Bodo Rosenhahn, and Gerard
  Pons-Moll.
\newblock Recovering accurate 3d human pose in the wild using imus and a moving
  camera.
\newblock In {\em Proceedings of the European Conference on Computer Vision
  (ECCV)}, pages 601--617, 2018.

\bibitem{wei2018person}
Longhui Wei, Shiliang Zhang, Wen Gao, and Qi Tian.
\newblock Person transfer gan to bridge domain gap for person
  re-identification.
\newblock In {\em IEEE Conference on Computer Vision and Pattern Recognition},
  pages 79--88, 2018.

\bibitem{Wojke2018deep}
Nicolai Wojke and Alex Bewley.
\newblock Deep cosine metric learning for person re-identification.
\newblock In {\em 2018 IEEE Winter Conference on Applications of Computer
  Vision (WACV)}, pages 748--756. IEEE, 2018.

\bibitem{Wojke2017simple}
Nicolai Wojke, Alex Bewley, and Dietrich Paulus.
\newblock Simple online and realtime tracking with a deep association metric.
\newblock In {\em 2017 IEEE International Conference on Image Processing
  (ICIP)}, pages 3645--3649. IEEE, 2017.

\bibitem{xu2019spatial}
Jiarui Xu, Yue Cao, Zheng Zhang, and Han Hu.
\newblock Spatial-temporal relation networks for multi-object tracking.
\newblock In {\em IEEE International Conference on Computer Vision}, pages
  3988--3998, 2019.

\bibitem{yang2016exploit}
Fan Yang, Wongun Choi, and Yuanqing Lin.
\newblock Exploit all the layers: Fast and accurate cnn object detector with
  scale dependent pooling and cascaded rejection classifiers.
\newblock In {\em Proceedings of the IEEE conference on computer vision and
  pattern recognition}, pages 2129--2137, 2016.

\bibitem{zamir2012gmcp}
Amir~Roshan Zamir, Afshin Dehghan, and Mubarak Shah.
\newblock {GMCP}-tracker: Global multi-object tracking using generalized
  minimum clique graphs.
\newblock In {\em European Conference on Computer Vision}, pages 343--356.
  Springer, 2012.

\bibitem{zhang2008global}
Li Zhang, Yuan Li, and Ramakant Nevatia.
\newblock Global data association for multi-object tracking using network
  flows.
\newblock In {\em IEEE Conference on Computer Vision and Pattern Recognition},
  pages 1--8. IEEE, 2008.

\bibitem{zheng2015scalable}
Liang Zheng, Liyue Shen, Lu Tian, Shengjin Wang, Jingdong Wang, and Qi Tian.
\newblock Scalable person re-identification: A benchmark.
\newblock In {\em IEEE International Conference on Computer Vision}, pages
  1116--1124, 2015.

\bibitem{zheng2019joint}
Zhedong Zheng, Xiaodong Yang, Zhiding Yu, Liang Zheng, Yi Yang, and Jan Kautz.
\newblock Joint discriminative and generative learning for person
  re-identification.
\newblock In {\em IEEE Conference on Computer Vision and Pattern Recognition},
  2019.

\bibitem{zhou2021tracking}
Xingyi Zhou, Vladlen Koltun, and Philipp Kr{\"a}henb{\"u}hl.
\newblock Tracking objects as points.
\newblock In {\em European Conference on Computer Vision}, pages 474--490.
  Springer, 2020.

\bibitem{zhu2018online}
Ji Zhu, Hua Yang, Nian Liu, Minyoung Kim, Wenjun Zhang, and Ming-Hsuan Yang.
\newblock Online multi-object tracking with dual matching attention networks.
\newblock In {\em European Conference on Computer Vision}, pages 366--382,
  2018.

\end{thebibliography}
	}
	
	

\begin{figure*}[t]
    \centering
    \includegraphics{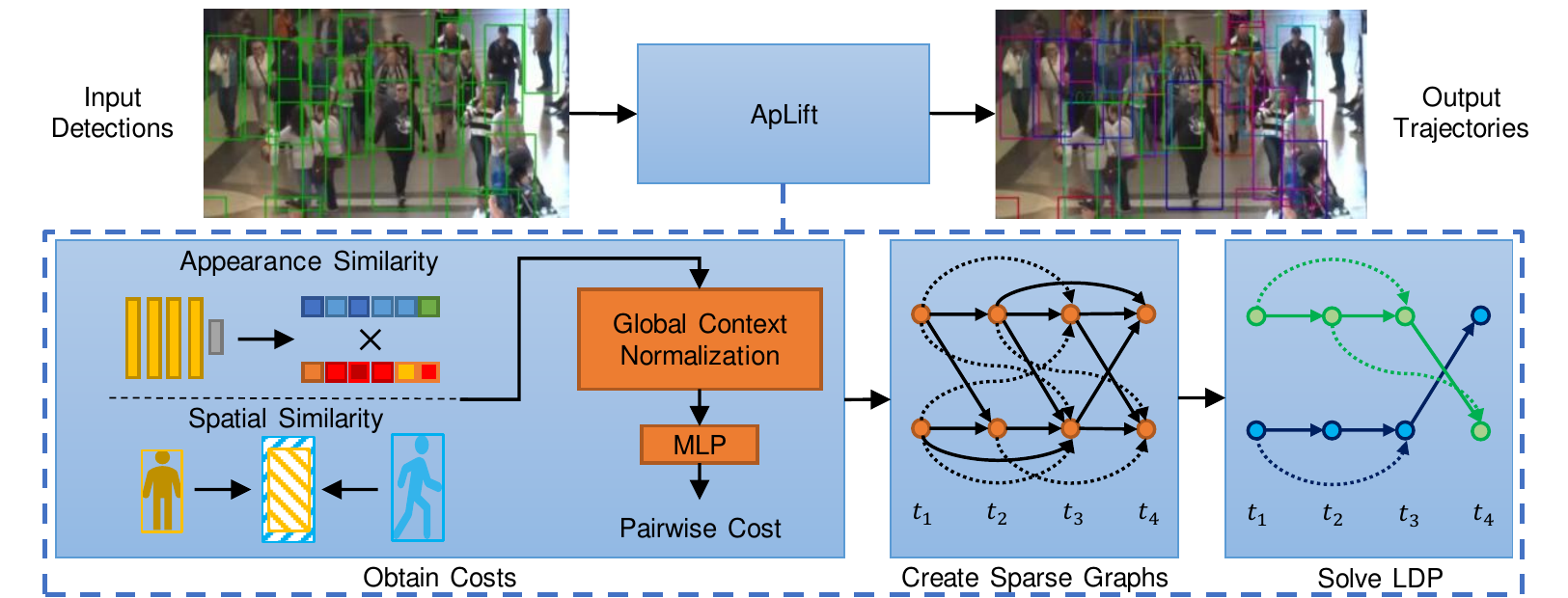}
    \caption{Overview of the ApLift framework. Input detections are used to obtain pairwise costs by an MLP with spatial and appearance features. Based on the costs, two sparse graphs are constructed and passed to our proposed approximate LDP solver. Dashed arrows represent lifted edges and solid arrows base edges. In figure \textit{Solve LDP} equally colored nodes and edges belong to the same trajectory.}
    \label{fig:overallframework}
\end{figure*}
\newpage

\section{Appendix}
This Appendix contains details about our approximate LDP solver and the whole MOT framework used in ApLift. We depicts this framework in Figure~\ref{fig:overallframework}.

\myparagraph{Appendix outline.}  We start with providing additional notation and abrreviations list in Section~\ref{ap:notation}. Sections~\ref{sec:outflow-subproblem-min-marginals}-\ref{sec:appendix:primal} present the message passing solver implementation and the algorithms used for it. Sections~\ref{appendix:feature_scaling}-\ref{appendix:post_processing} present details about processing of the tracking data. Finally, Section~\ref{appendix:runtime} discusses theoretical runtime of the solver and Section~\ref{sec:qualitative-results} presents examples of qualitative results. The Appendix is rather extensive, especially its algorithmic part. Therefore, we provide its section outline within the context of the whole method bellow.

\myparagraph{LDP solver outline. }Figure~\ref{fig:algorithms} contains a~scheme of all algorithms used  in our LDP solver. The algorithms are stated either in the main paper or in this Appendix. The solver performs an~explicitly given number of message passing iterations. Section~\ref{sec:message-passing} describes the full solver run and an overview of all methods used within one message passing iteration. Once in five iterations, new primal solution is computed (Sections~\ref{sec:primal-rounding} and~\ref{sec:appendix:primal}). Once in twenty iterations, new subproblems are separated and added to the problem. These are path and cut subproblems (see Sections~\ref{sec:path-subproblem} and~\ref{sec:cut-subproblem}). Methods for their separations are described in Sections~\ref{sec:cutting-planes-path-subproblems} and~\ref{sec:cutting-planes-cut-subproblems}.

\myparagraph{Message passing. } Messages are sent between the subproblems. Each subproblem creates messages to be sent by computing min-marginals of its variables. Section~\ref{sec:outflow-subproblem-min-marginals} presents algorithms used for obtaining min-marginals of inflow and outflow subproblems. The algorithms allow us to efficiently obtain min-marginals of all lifted or all base edges of a subproblem at once. 
Messages from cut and path subproblems are obtained by modifications of the respective algorithms for their optimization. See Section~\ref{sec:cut-subproblem} in the main text for the cut subproblem optimization and Section~\ref{sec:path-optimization} for path subproblem optimization. 

\myparagraph{Tightening by separation. }We create the new path and cut subproblems in order to tighten the LP relaxation of the problem \eqref{eq:lifted-disjoint-paths-problem}. Section~\ref{sec:separation-lower-bound} discusses the guaranteed lower bound improvement achieved by separating the new subproblems using algorithms in Sections~\ref{sec:cutting-planes-path-subproblems} and ~\ref{sec:cutting-planes-cut-subproblems}. 

\myparagraph{Tracking. }The proposed tracking framework contains additional processing steps, which are briefly mentioned in the main paper. A detailed description and additional evaluation data is provided in this appendix. To construct the graph we calculate costs based on two features as described in Section~\ref{sec:cost_classifier} and add multiple scalings which details can be found in Section~\ref{appendix:feature_scaling}. We also determine very confident edges and set their cost based on heuristics explained in Section~\ref{appendix:high_confident}. Furthermore, additional implementation and training details for the classifier are presented in Sections~\ref{appendix:classifier} and~\ref{appendix:sampling}. The efficient inference based on interval solutions is provided in Section~\ref{sec:intervalSolution}. Finally we show details for the post-processing based on heuristics in Section~\ref{appendix:post_processing}.  


\begin{figure*}
    \centering
    \hspace{-1.2cm}\includegraphics[width=1.07\textwidth]{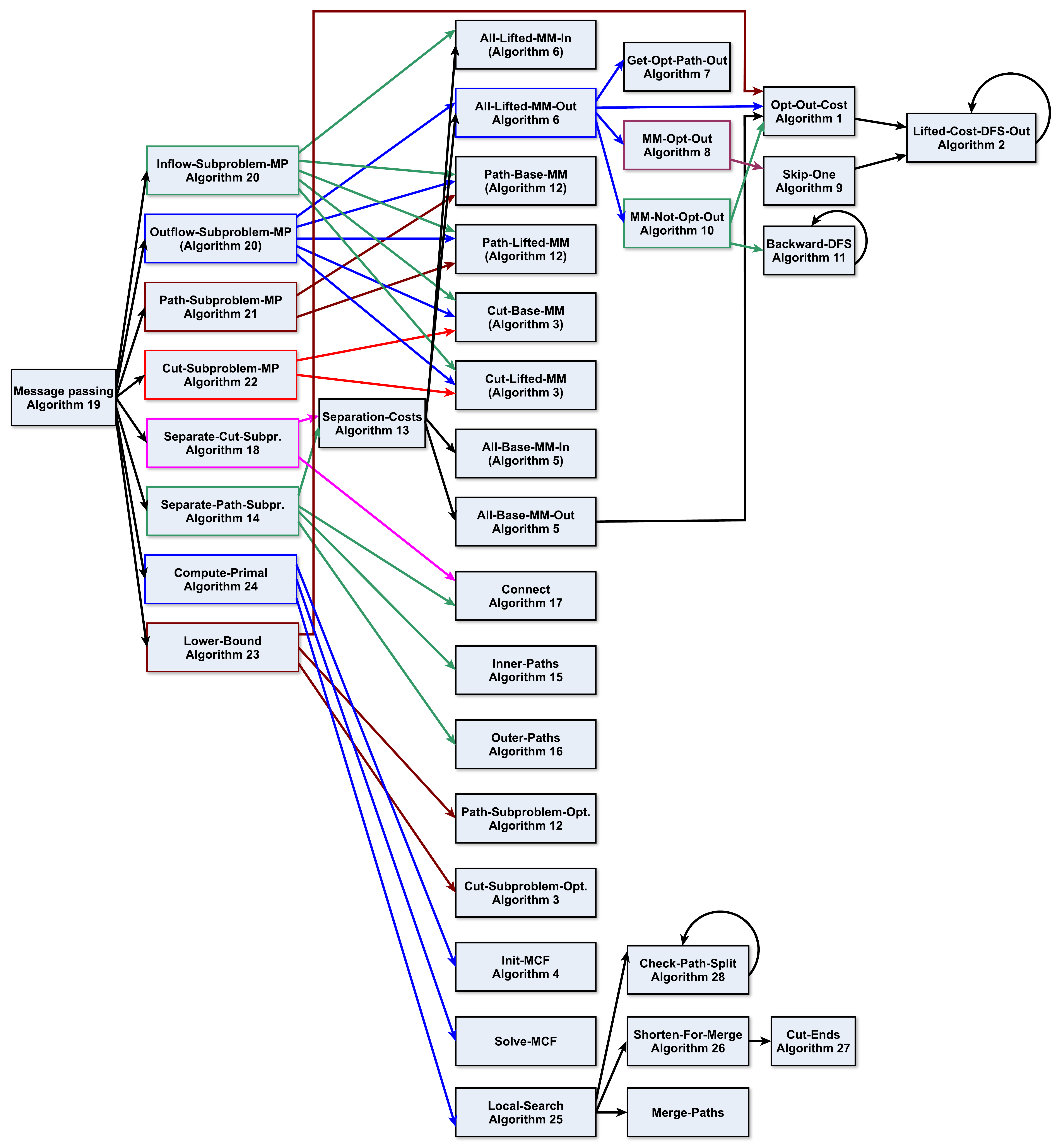}
    \caption{The scheme of our message passing algorithm and all its subroutines described in this work. An arrow from Algorithm~$X$ to Algorithm~$Y$ means that Algorithm~$X$ calls Algorithm~$Y$. Algorithms in brackets denote that their modifications are used as the respective procedures. Abbreviation MP means Messate-Passing. Some algorithms for inflow subproblems are omitted for clarity because they are analogical to outflow subproblem algorithms.}
    \label{fig:algorithms}
\end{figure*}

\subsection{Additional notation and abbreviations}\label{ap:notation}
\begin{itemize} 
    \item$x\in A^B$ denotes a~mapping $x:B\rightarrow A$.
    \item $[n]$ denotes the set of numbers $\{1,2,\dots,n\}$.
    \item LP: Linear programming.
    \item MP: Message passing.
    \item DP: disjoint paths problem.
    \item LDP: Lifted disjoint paths.
    \item DFS: Depth first search.
    \item MPLP: Max Product Linear Programming.
\end{itemize}

\subsection{Min-Marginals for Inflow and Outflow Subproblems }
\label{sec:outflow-subproblem-min-marginals}

We detail routines for computing min-marginals for all base edges at once (Algorithm~\ref{ap:alg:base-min-marginals}) and all lifted edges at once (Algorithm~\ref{ap:alg:all-lifted-mm}). All the stated algorithms assume outflow subproblems. Modification to inflow subproblems is done via proceeding in the oposite edge direction. 

Iteratively computing min-marginals and performing operation~\eqref{eq:min-marginal-update} would be inefficient, since it would involve calling Algorithm~\ref{alg:outflow-minimization} $\mathcal{O}(\abs{\delta^+_E(v)} + \abs{\delta^+_{E'}(v))}$ times.
To speed up iterative min-marginal updates, we can reuse computations as done in Algorithm~\ref{ap:alg:base-min-marginals} for base edges and in Algorithm~\ref{ap:alg:all-lifted-mm} for lifted edges.

Algorithm~\ref{ap:alg:base-min-marginals} for computing base edge min-marginals uses the fact that lifted edge costs do not change and therefore Algorithm~\ref{alg:outflow-minimization} needs to be called only once.
For lifted edges, Algorithm~\ref{ap:alg:all-lifted-mm} interleaves min-marginal computation and reparametrization updates~\eqref{eq:min-marginal-update} such that computations can be reused.
We introduce auxiliary variables $\gamma'_{vw}$ in line~\ref{alg:delta-init} that keep track of future reparametrization updates. 

For the min-marginals, we will need slight extensions of Algorithm~\ref{alg:outflow-minimization} and a method to additionally compute a labeling that attains the optimum. These methods are given in Algorithm~\ref{ap:alg:outflow-minimization} and~\ref{ap:alg:get-opt-path}.

In Algorithm~\ref{ap:alg:all-lifted-mm}, path $P^*$ representing the optimal solution of the outflow problem is found by calling Algorithm~\ref{alg:outflow-minimization} followed Algorithm~\ref{ap:alg:get-opt-path}. 
Then, Algorithm~\ref{ap:alg:mm-lifted-opt} computes min-marginals for the lifted edges that are active in the optimal solution.
In the end of Algorithm \ref{ap:alg:all-lifted-mm}, min-marginals are computed for those lifted edges that are not active in the optimal solution. 



For computing min-marginals of edges that are active in the optimal solution, we need as a subroutine Algorithm~\ref{ap:alg:outflow-minimization}, an extended version of Algorithm~\ref{alg:outflow-minimization}.
Algorithm~\ref{ap:alg:outflow-minimization} restricts the vertices taken into consideration during the optimization.
In particular, a special vertex $r$ is given that is to be excluded from the optimization.
Values $\text{lifted\_cost}[u]$ are reused for those vertices $u$ where $ur\notin \mathcal{R}_G$ because these values are not affected by excluding vertex $r$. 

Min-marginals for vertices inactive in the optimal solution are computed by Algorithms \ref{ap:alg:mm-lifted-not-opt} and \ref{ap:alg:BU-main}. The algorithms rely on structure back\_cost which is an analogy of lifted\_cost. Structure back\_costs$[u]$ contains the minimum cost of all $vu\mhyphen$paths w.r.t. to the costs of all lifted edges connecting $v$ with the vertices of the path plus the cost of the first base edge of the path. Note that lifted\_costs$[u]$ is defined analogically but contains the minimum cost of all $ut\mhyphen$paths. 
Therefore, the minimal solution where a lifted edge $vu\in E'$ is active can be obtained as follows:
\begin{multline}
    \min\limits_{(z,y,y') \in \mathcal{X}^{out}_{v}: y'_{vu}=1} \langle 
    (z_v, y, {y'}), \theta^{out} \rangle=\\
    =\text{lifted\_cost}[u]+\text{back\_cost}[u]-\tilde{\theta}'_{vu}
\end{multline}
The cost of lifted edge $\tilde{\theta}'_{vu}$ must be subtracted because it is involved in both values $\text{lifted\_cost}[u]$ and $\text{back\_cost}[u]$.

Algorithm \ref{ap:alg:BU-main} performs two tasks simultaneously. First, it is a DFS procedure for computing back\_cost. Contrary to Algorithm \ref{alg:outflow-minimization-dfs} that performs DFS for obtaining lifted\_cost, Algorithm \ref{ap:alg:BU-main} proceeds in the opposite edge direction. It again uses the fact that a subpath of a minimum-cost path must be minimal. Second, it directly computes min marginal for already processed vertex $u$ on Line~\ref{alg:line:gamma} and involves this change in setting back\_cost$[u]$ on Line~\ref{alg:line:back-cost}.  

\myparagraph{Speeding up DFS: }All the algorithms for obtaining optimal solution or min-marginals of inflow and outflow subproblems call DFS procedures. It can be considered that the order of processing the relevant nodes reachable form the central node is always the same. Therefore, we call DFS for each inflow and outflow subproblem only once during their initialization and store the obtained list of processed nodes. The full DFS in Algorithm~\ref{alg:outflow-minimization-dfs} is replaced by traversing the precomputed node list in the forward direction. Algorithm~\ref{ap:alg:BU-main} is replaced by traversing this node list in the backward direction. 

\begin{algorithm}[h]
\caption{All-Base-MM-Out$(v,\tilde{\theta})$}
\label{ap:alg:base-min-marginals}
\textbf{Input} start vertex $v$, costs $\tilde{\theta}$ \\
\textbf{Output} base edge min-marginals $\gamma_{vu}\ \forall vu\in \delta^+_{E}(v)$
\begin{algorithmic}[1]
\State $(\mathrm{opt},\text{lifted\_cost},\alpha)=$Opt-Out-cost$(v,\theta^{out})$ 
\State $e^* = \argmin\limits_{vw \in \gamma^+_E}\{\alpha_{vw}\}$, $e^{**} = \argmin\limits_{vw \in \gamma^+_E \backslash \{e^*\}}\{\alpha_{vw}\}$
\State $\forall  vu\in\delta^+(v):  \gamma_{vu}=\alpha_{vu}-\min(\alpha_{e^{**}},0)$
\end{algorithmic}
\end{algorithm}

\begin{algorithm}[h]
    \caption{All-Lifted-MM-Out$(v,\tilde{\theta})$}
    \label{ap:alg:all-lifted-mm}
    \textbf{Input} starting vertex $v$, $\tilde{\theta}$\\
    \textbf{Output} lifted edge min-marginals $\gamma'_{vu}\ \forall vu\in \delta^+_{E'}(v)$
    \begin{algorithmic}[1]
    \State $(\mathrm{opt},\text{lifted\_cost},\alpha,\text{next})=$Opt-Out-cost$(v,\theta^{out})$ 
    \State $P^*_V$=Get-Opt-Path-Out$({\theta}^{out},\alpha,\text{next})$
    \State $ \forall vw\in \delta^+_{E'}(v): \gamma'_{vw}=0$
    \label{alg:delta-init}
    \State $(\mathrm{opt},\gamma')=$MM-Opt-Out$(v,P^*_V,\mathrm{opt},\gamma',\tilde{\theta})$
    \State $\gamma'=$MM-Not-Opt-Out$(v,\mathrm{opt},\gamma',\tilde{\theta})$  
    \end{algorithmic}
\end{algorithm}

\begin{algorithm}[h]
 \caption{Get-Opt-Path-Out}
 \label{ap:alg:get-opt-path}
  \textbf{Input} costs $\tilde{\theta}$, vector $\alpha$ such that $\forall vw\in \delta^+_E(v):\alpha_{vw}$ is the optimal value if $vw$ is active, next\\
 \textbf{Output} min cost path $P^*_V$ 
    \begin{algorithmic}[1]
    \State $w^*=  \argmin_{w:vw \in \delta^+_{E}(v)} \alpha_{vw}$
   \If{$\alpha_{vw^*}<0$}
   \While{$w^*\neq t$}
   \State $P^*_V\leftarrow w^*$
   \State $w^*=\text{next}[w^*]$
   \EndWhile
   \Else
   \State  $P^*_V=\emptyset$
   \EndIf
    \end{algorithmic}
\end{algorithm}

\begin{algorithm}[h]
    \caption{MM-Opt-Out}
    \textbf{Input} starting vertex $v$, optimal path $P^*_V=(v_1,\ldots,v_k)$, value of optimal path $\mathrm{opt}$, $\gamma'$, costs $\tilde{\theta}$\\
    \textbf{Output} updated cost of optimal path $\mathrm{opt}$, new reparametrization updates $\gamma'$
    \label{ap:alg:mm-lifted-opt}
    \begin{algorithmic}[1]
     \ForAll {$v_i=  v_1,\ldots,v_k : vv_i\in \delta^+_{E'}(v)$}
    \State $\alpha=\text{Skip-One}(v,v_i,\tilde{\theta}-(\mathbb{0},\gamma'),\text{lifted\_cost},\text{next})$
    \State $\gamma'_{vv_i}=\mathrm{opt}-\alpha$
    \State $\mathrm{opt}=\alpha$
    \EndFor
    \end{algorithmic}
\end{algorithm}

\begin{algorithm}[h]
 \caption{Skip-One}  
 \label{ap:alg:outflow-minimization}
 \textbf{Input}  $v$, ignored vertex $r$, $\tilde{\theta}$, lifted\_cost, next\\
 \textbf{Output} optimal value $\mathrm{opt}$
    \begin{algorithmic}[1]
    \For{$u\in V:vu\in\mathcal{R}_G\wedge ur\in \mathcal{R}_G$}
    \State $\text{lifted\_cost}[u] = \infty$, next$[u]=\emptyset$
    \EndFor
    \State$\text{lifted\_cost}[r] =0 $,  next$[r]=t$
    \State Lifted-Cost-DFS-Out($v,v,\tilde{\theta},\text{lifted\_cost},\text{next}$)
    \State $\forall w:vw \in \delta^+_E(v): \alpha_{vw} =  \tilde{\theta}_v + \tilde{\theta}_{vw} + \text{lifted\_cost}[w] $
    \State $\mathrm{opt} = \min(\min_{w:vw \in \delta^+_E(v)} \alpha_{vw}, 0)$
    \end{algorithmic}
\end{algorithm}

\begin{algorithm}[h]
    \caption{MM-Not-Opt-Out}
    \label{ap:alg:mm-lifted-not-opt}
        \textbf{Input} $v$, current optimum $\mathrm{opt}$, reparametrization update $\gamma'$, $\tilde{\theta}$\\
    \textbf{Output} changed reparametrization update $\gamma'$
    \begin{algorithmic}[1]
    \State $(\mathrm{opt},\text{lifted\_cost})=$Opt-Out-cost$(v,\tilde{\theta}-(\mathbb{0},\gamma'))$ 
    \ForAll{$u: vu\in \mathcal{R}_G$}
    \If{$u\in P^*_V$}
    \State $\text{visited}[u] = true$
    \State $\text{back\_cost}[u]=\mathrm{opt}-\text{lifted\_cost}[u]$
    \IIf{$vu\in E'$} back\_cost$[u]\pluseq\tilde{\theta}'_{vu}-\gamma'_{vw}$ 
    \Else 
    \State $\text{visited}[u] = false$
    \If{$vu\in \delta^+_{E}(v)$}
    \State $\text{back\_cost}[u]=\tilde{\theta}_{vu}$
    \Else
    \State $\text{back\_cost}[u]=\infty$
    \EndIf
    \EndIf
    \EndFor
    \ForAll{$vu \in \delta_{E'}^+(v)$}
    \If{$\text{visited}[u]=false$}
    \State Backward-DFS($v,u,\tilde{\theta},\gamma',\mathrm{opt},\text{back\_cost}$)
    \EndIf
    \EndFor
    \end{algorithmic}
\end{algorithm}

\begin{algorithm}[h]
    \caption{Backward-DFS }
    \label{ap:alg:BU-main}
    \textbf{Input} $v,u,\tilde{\theta},\gamma',\mathrm{opt},\text{back\_cost}$\\
    \textbf{Output}  $\gamma'$, back\_cost
    \begin{algorithmic}[1]
        \State $\alpha=\text{back\_cost}[u]$
        \For{$wu \in \delta^-_{E}(u): vw\in \mathcal{R}_G$}
        \If{$\text{visited}[w] = false$}
        \State Backward-DFS($v,w,\tilde{\theta},\gamma',\text{back\_cost}$)
        \EndIf
        \State$ \alpha=\min \{\text{back\_cost}[w],\alpha\}$
        \EndFor
        \If{$vu\in E'$}
        \State $\mathrm{opt}_{u}=\alpha+\text{lifted\_cost}[u]$
        \State $\gamma'_{vu}=\mathrm{opt}_u-\mathrm{opt}$\label{alg:line:gamma}
        \State $\text{back\_cost}[u]=\alpha+\tilde{\theta}_{vu}-\gamma'_{vu}$\label{alg:line:back-cost}
        \Else
        \State $\text{back\_cost}[u]=\alpha$
        \EndIf
        \State $\text{visited}[u] = true$
    \end{algorithmic}
\end{algorithm}


 \vfill\null
 \newpage
\subsection{Optimization of path subproblems. }\label{sec:path-optimization}
We denote by $\theta^P$ the edge costs in subproblem of $vw\mhyphen$path $P$. The optimization over the feasible set $\mathcal{X}^P$ w.r.t.\ costs $\theta^P$ is detailed in Algorithm~\ref{alg:path-subproblem-optimization}.
It checks whether 
there exists exactly one positive edge and whether it is either a lifted or a strong base edge (Line~\ref{algl:path-subproblem-opt-E+=1}).
If so, the optimal solution is either 
(i)~all edges except the two largest ones (Line~\ref{algl:path-subproblem-opt-except-two-largest}) or
(ii)~all edges (Line~\ref{algl:path-subproblem-opt-all-edges}), whichever gives smaller objective value.
If the above condition does not hold, the optimal solution can be chosen to contain all negative edges (Line~\ref{algl:path-subproblem-opt-all-negative-edges}).

\begin{algorithm}[h]
\caption{Path-Subproblem-Optimization}
\label{alg:path-subproblem-optimization}
 \textbf{Input} Edge costs $\theta^P$\\
 \textbf{Output} optimal value $\mathrm{opt}$ of subproblem.
    \begin{algorithmic}[1]
\State $E^+=\{ kl \in P_{E'} \cup vw|\theta'^P_{kl}>0\} \cup \{kl\in P_{E}| \theta^P_{kl} > 0\}$ 
   \If{$E^+=\{kl\}\wedge kl\in P_{E'}\cup vw\cup E_0$}\label{algl:path-subproblem-opt-E+=1} 
    \State $\alpha = \min \{\min\limits_{ij \in P_E\setminus E^+} \abs{\theta^P_{ij}},\min\limits_{ij \in P_{E'}\cup vw\setminus E^+} \abs{\theta'^P_{ij}}\}$ 
    \State $\beta = \begin{cases} \theta'^P_{kl},& kl \in P_{E'}\cup vw \\ \theta^P_{kl}, & kl \in P_E \end{cases}$
   \If{$\alpha<\beta$} 
   \State  $\mathrm{opt} = \sum\limits_{ij \in P_E\setminus E^+} \theta^P_{ij} + \sum\limits_{ij \in P_{E'} \cup vw\setminus E^+} \theta'^P_{ij} +\alpha$\label{algl:path-subproblem-opt-except-two-largest} 
   \Else
   \State $\mathrm{opt} = \sum\limits_{ij \in P_E} \theta^P_{ij} + \sum\limits_{ij \in P_{E'} \cup vw} \theta'^P_{ij}$\label{algl:path-subproblem-opt-all-edges}
   \EndIf
   \Else 
     \State $\mathrm{opt} = \sum\limits_{ij \in P_E\setminus E^+} \theta^P_{ij} + \sum\limits_{ij \in P_{E'} \cup vw\setminus E^+} \theta'^P_{ij}$ \label{algl:path-subproblem-opt-all-negative-edges}
       \EndIf
       \State return $\mathrm{opt}$
    \end{algorithmic}
\end{algorithm}
\todo[inline]{ Set-Notation in Alg 5 not consistent. Use $:$ ? }

A variation of Algorithm~\ref{alg:path-subproblem-optimization} with a specified edge fixed to either $0$ or $1$ is used for computing min-marginals

\subsection{Separation for Path Subproblems}
\label{sec:cutting-planes-path-subproblems}

The path subproblem separation procedure is described in Algorithm~\ref{alg:path-sep}.
The algorithm finds paths together with a lifted edge connecting the start and the end point of the path such that exactly one lifted edge has positive cost, while all the remaining base and lifted edges have negative cost.

First, lifted and base edge costs are obtained in Algorithm~\ref{alg:extract-MM} by computing min-marginals of inflow and outflow factors.
Second, a graph with an empty edge set $E^1$ is created.
Then, edges with negative costs are added to $E^1$ in ascending order.
After adding an edge, we check whether separating path subproblems with edge costs leading to lower bound improvement is possible.
Such a factor must contain the newly added edge, one positive lifted edge and edges that already are in the edge set $E^1$.

Algorithm~\ref{alg:inner-path-sep} separates those paths subproblems where the only positive edge is the one connecting the path's endpoints. Algorithm~\ref{alg:outer-path-sep} separates those path subproblems where the only positive edge is one of the edges within the path. Algorithm~\ref{alg:connect} updates connectivity structures by adding edge $ij$ to the edge set $E^1$. 

Each path subproblems has a guaranteed lower bound improvement, see Proposition~\ref{prop:path-subproblem-guaranteed-lower-bound-improvement}.
We add each found path subproblem to priority queue $Q$, where we sort w.r.t.\ the guaranteed lower bound improvement.
After searching for path subproblems, we add the $k$ best path subproblems from queue $Q$ to the optimization problem.


\begin{algorithm}[h]
    \caption{Separation-Costs}
    \label{alg:extract-MM}
    \textbf{Input} Current cost in inflow and outflow factors $\theta^{in},\theta^{out}$ \\
    \textbf{Output} Cost reparametrization $\forall uv\in E: \tilde{\theta}_{uv}$, $\forall uv\in E':\tilde{\theta}'_{uv}$
    \begin{algorithmic}[1]
    \State $\forall uv\in E: \tilde{\theta}_{uv}=0$, $\forall uv\in E': \tilde{\theta}'_{uv}=0$
    \ForAll{$u\in V\setminus \{s,t\}$}
    \State $\forall uv\in \delta^+_{E}(u): \gamma^{out}_{uv}=0$
    \State $\forall uv\in \delta^-_{E}(u): \gamma^{in}_{vu}=0$
    \State $\gamma'^{out}_u=0.5\cdot$All-Lifted-MM-Out$(u,\theta^{out}_u)$
    \State $\gamma'^{in}_u=0.5\cdot$All-Lifted-MM-In$(u,\theta^{in}_u)$
    \State $\gamma^{out}_u=$All-Base-MM-Out$(u,\theta^{out}_u-(\gamma^{out}_u,\gamma'^{out}_u))$
    \State $\gamma^{in}_u=$All-Base-MM-In$(u,\theta^{in}_u-(\gamma^{in}_u,\gamma'^{in}_u))$
    \State $\forall uv\in \delta^+_{E}(u): \tilde{\theta}_{uv}\pluseq\gamma^{out}_{uv}$
    \State $\forall uv\in \delta^-_{E}(u): \tilde{\theta}_{vu}\pluseq\gamma^{in}_{vu}$
    \State $\forall uv\in \delta^+_{E'}(u): \tilde{\theta}'_{uv}\pluseq\gamma'^{out}_{uv}$
    \State $\forall uv\in \delta^-_{E'}(u): \tilde{\theta}'_{vu}\pluseq\gamma'^{in}_{vu}$
    \EndFor
    \end{algorithmic}
\end{algorithm}

\begin{algorithm}[h]
    \caption{Separate-Path-Subproblem}
    \label{alg:path-sep}
    \textbf{Input }Cost threshold $\varepsilon$
    \begin{algorithmic}[1]
    \State $\tilde{\theta}=$Separation-Costs$(\theta^{in},\theta^{out})$
    \State $G^1=(V,E^1=\emptyset)$
    \State $E^-=\{vw\in E|\tilde{\theta}_{vw}<-\varepsilon\}\cup \{vw\in E'|\tilde{\theta}'_{vw}<-\varepsilon\} $
    \State $E'^+=\{vw\in E'|\tilde{\theta}'_{vw}>\varepsilon\} $
    \State $\forall v\in V: \text{desc}[v]=\{v\},  \text{pred}[v]=\{v\}$
    \State Priority-Queue $Q=\emptyset$
    \ForAll{$ij\in E^-$ ascending in $\tilde{\theta}$}
    \IIf {$ij\in E$} $c_{ij}=\tilde{\theta}_{ij}$
    \EElse $c_{ij}=\theta'_{ij}$
   \State Inner-Paths$(i,j,c_{ij},\text{pred},\text{desc},E^+,E^1,Q)$
       \State Outer-Paths$(i,j,c_{ij},\text{pred},\text{desc},E^+,E^1,Q)$
         \State Connect$(i,j,\text{pred},\text{desc},E^1)$
     \EndFor
    \end{algorithmic}
\end{algorithm}

\begin{algorithm}[h]
    \caption{Inner-Paths}
    \label{alg:inner-path-sep}
    \textbf{Input} $i,j,c_{ij},\text{pred},\text{desc},E^+,E^1,Q$
    \begin{algorithmic}[1]
     \ForAll{$p\in \text{pred}[i]$}
    \ForAll{$d\in \text{desc}[j]$}
    \If{$pd\in E^+$}
    \State $P_1=$Find-Path$(p,i,E^1)$
    \State $P_2=$Find-Path$(j,d,E^1)$
    \State $P=(P_1,ij,P_2)$
    \State $priority=\min\{|c_{ij}|,\tilde{\theta}'_{pd}\}$
    \State $Q\leftarrow($Path-Problem$(P),priority)$\EndIf
    \EndFor
    \EndFor
    \end{algorithmic}
\end{algorithm}

\begin{algorithm}[h]
    \caption{Outer-Paths}
    \label{alg:outer-path-sep}
    \textbf{Input }$i,j,c_{ij},\text{pred},\text{desc},E^+,E^1,Q$
    \begin{algorithmic}[1]
      \ForAll{$p\in \text{pred}[j]$}
    \ForAll{$d\in \text{desc}[i]$}
    \If{$dp\in E^+$}
  \State $P_1=$Find-Path$(i,d,E^1)$
    \State $P_2=$Find-Path$(p,j,E^1)$
    \State $P=(P_1,ij,P_2)$
    \State $priority=\min\{|c_{ij}|,\tilde{\theta}'_{dp}\}$
    \State $Q\leftarrow($Path-Problem$(P),priority)$
    \EndIf
    \EndFor
    \EndFor
    \end{algorithmic}
\end{algorithm}

\begin{algorithm}[h]
    \caption{Connect}
    \label{alg:connect}
    \textbf{Input }$i,j,\text{pred},\text{desc},E^1$
    \begin{algorithmic}[1]
     \ForAll{$p\in \text{pred}[i]$}
    \ForAll{$d\in \text{desc}[j]$}
    \State $\text{desc}[p]\leftarrow d$
    \State $\text{pred}[d]\leftarrow p$
    \State $E^1\leftarrow ij$
    \EndFor
    \EndFor
    \end{algorithmic}
\end{algorithm}

\subsection{Separation for Cut Subproblems}
\label{sec:cutting-planes-cut-subproblems}
Algorithm~\ref{alg:cut-sep} separates cut subproblems.
The algorithm finds cuts consisting of base edges with positive costs and a lifted edge having endpoints on both sides of the cut and negative cost.
Similarly as for the path subproblem separation, lifted and base edge costs are obtained by computing min-marginals of inflow and outflow factors in Algorithm~\ref{alg:extract-MM}.
Each edge $uv\in E'^-$  is a  candidate lifted edge for a~$uv\mhyphen$cut factor.

The edge set $E^1$ initially contains all base edges with cost lower than $\varepsilon$.
The remaining base edges are added to $E^1$ in ascending order.
Whenever a newly added edge $ij$ causes a connection between $u$ and $v$  where $uv\in E'^-$, a $uv\mhyphen$cut $C$ is separated.
We select the cut $C$ to contain only those edges that do not belong to $E^1$.
This ensures that $ij$ is the weakest cut edge.
In the same time, $C$ is the best $uv\mhyphen\text{cut}$ with respect to the cost of the weakest cut edge. 

The found cut factors are added to a priority queue where the priority represents guaranteed lower bound improvement (see Proposition~\ref{prop:cut-subproblem-lower-bound-improvement}) after adding the factor to our problem.

\begin{algorithm}[h]
    \caption{Separate-Cut-Subproblem}
    \label{alg:cut-sep}
     \textbf{Input }Cost threshold $\varepsilon$
    \begin{algorithmic}[1]
    \State $\tilde{\theta}=$Separation-Costs$(\theta^{in},\theta^{out})$
    \State $E'^-= \{vw\in E'|\tilde{\theta}'_{vw}<-\varepsilon\} $
    \State $E^-= \{vw\in E|\tilde{\theta}'_{vw}<\varepsilon\} $, $E^+=E\setminus E^-$
    \State $ E^1= E^-$, $G^1=(V,E^1)$
        \State Priority-Queue $Q=\emptyset$
    \ForAll{$ij\in E^+$ ascending in $\tilde{\theta}$}
    \ForAll{$u\in \text{pred}[i]$}
    \ForAll{$v\in \text{desc}[j]$}
    \If{$uv\in E'^-$}
    \State $C$= cut between $u,v$ using edges $E\setminus E^1$
    \State $priority=\min\{\tilde{\theta}_{ij},|\tilde{\theta}'_{uv}|\}$
    \State $Q\leftarrow($Cut-Problem$(C,u,v),priority)$
    \EndIf
    \EndFor
    \EndFor
    \State Connect$(i,j,\text{pred},\text{desc},E^1)$
    \EndFor
    \end{algorithmic}
\end{algorithm}

\subsection{Tightening Lower Bound Improvement}\label{sec:separation-lower-bound}
In order to show that the separation procedures in Algorithms~\ref{alg:path-sep} and~\ref{alg:cut-sep} lead to relaxations that improve the lower bound we show the following:
(i)~Certain reparametrization used in the above algorithms are non-decreasing in the lower bound.
(ii)~Separation procedures find new subproblems such that w.r.t.\ the above reparametrization, a guaranteed lower bound improvement can be achieved.

Points (i) and (ii) guarantee that the same lower bound achievement w.r.t.\ the original reparametrization can be found.
The special reparametrization chosen helps empirically to find good subproblems.

\begin{lemma}
\label{lemma:generalLB}
Let $\mathsf{s}\in \mathcal{S}$ be a subproblem, $\theta$ its cost and $L_{\mathsf{s}}(\theta)$ its lower bound for cost $\theta$. 
Given a cost reparametrization $\gamma$ such that 
\begin{enumerate}
    \item  $\forall i\in [d_{\mathsf{s}}]$
    \begin{equation}\gamma_{i}=
\begin{cases}
\leq 0 &\text{if } \exists x^*\in \argmin_{x\in \mathcal{X}^{\mathsf{s}}} \langle\theta,x\rangle: x^*_i=1\\
\geq 0 &\text{if } \exists x^*\in \argmin_{x\in \mathcal{X}^{\mathsf{s}}} \langle\theta,x\rangle: x^*_i=0
\end{cases}
\end{equation}
\item $\argmin\limits_{x\in \mathcal{X}^{\mathsf{s}}} \langle\theta,x\rangle \subseteq \argmin\limits_{x\in \mathcal{X}^{\mathsf{s}}}\langle\theta-\gamma,x\rangle$
\end{enumerate}
and a~coordinate-wise scaled reparametrization $\gamma(\omega)$ defined by coefficients $\omega\in [0,1]^{\mathsf{s}}$ where $\forall i\in [d_{\mathsf{s}}]: \gamma(\omega)_i=\omega_i\gamma_i$, it holds:
\begin{enumerate}
    \item The lower bound of $\mathsf{s}$ after reparametrization $ \gamma(\omega)$ is $L_{\mathsf{s}}(\theta-\gamma(\omega))=L_{\mathsf{s}}(\theta)-\sum_{i\in [d_{\mathsf{s}}]:\gamma_i<0}\omega_i\gamma_i$.
    \item $\argmin\limits_{x\in \mathcal{X}^{\mathsf{s}}} \langle\theta,x\rangle\subseteq \argmin\limits_{x\in \mathcal{X}^{\mathsf{s}}} \langle\theta-\gamma(\omega),x\rangle$
\end{enumerate}
\end{lemma}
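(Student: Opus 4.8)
The plan is to prove both claims at once by producing a single explicit candidate minimizer and then certifying that it is tight. First I would extract the combinatorial content of assumption~1. If $\gamma_i<0$, then no $\theta$-optimal solution can have $x^*_i=0$, since otherwise the second case of assumption~1 would force $\gamma_i\geq 0$; hence \emph{every} $x^*\in\argmin_{x\in\mathcal{X}^{\mathsf{s}}}\langle\theta,x\rangle$ has $x^*_i=1$. Symmetrically $\gamma_i>0$ forces $x^*_i=0$ in all optimizers, while $\gamma_i=0$ is unconstrained. Fixing any $\theta$-optimal $x^*$ and writing $\langle\theta-\gamma(\omega),x^*\rangle=L_{\mathsf{s}}(\theta)-\sum_{i\in[d_{\mathsf{s}}]}\omega_i\gamma_i x^*_i$, the sum collapses: the coordinates with $x^*_i=1$ all satisfy $\gamma_i\leq 0$ by assumption~1, so the only nonzero contributions come from $\gamma_i<0$, and those are exactly the coordinates that are forced to $1$. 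Thus $\sum_i\omega_i\gamma_i x^*_i=\sum_{i:\gamma_i<0}\omega_i\gamma_i$, so $x^*$ attains the value $L_{\mathsf{s}}(\theta)-\sum_{i:\gamma_i<0}\omega_i\gamma_i$. This already gives the inequality $L_{\mathsf{s}}(\theta-\gamma(\omega))\leq L_{\mathsf{s}}(\theta)-\sum_{i:\gamma_i<0}\omega_i\gamma_i$ and identifies $x^*$ as the candidate optimizer for claim~2.

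The matching lower bound is the crux, and the trick is to invoke assumption~2 only at the \emph{full} scaling $\omega=\mathbbmss{1}$. Since every $\theta$-optimizer stays $(\theta-\gamma)$-optimal, the value computation above with $\omega_i=1$ gives $L_{\mathsf{s}}(\theta-\gamma)=L_{\mathsf{s}}(\theta)-\sum_{i:\gamma_i<0}\gamma_i$ exactly. Unfolding $\langle\theta-\gamma,x\rangle\geq L_{\mathsf{s}}(\theta-\gamma)$ for an arbitrary feasible $x$ and abbreviating $\mathrm{gap}(x):=\langle\theta,x\rangle-L_{\mathsf{s}}(\theta)\geq 0$, this rearranges into
\begin{equation*}
\mathrm{gap}(x)\ \geq\ \sum_{i:\gamma_i<0}\gamma_i(x_i-1)\ +\ \sum_{i:\gamma_i>0}\gamma_i x_i\,.
\end{equation*}
The point of writing it this way is that every summand on the right is nonnegative: $\gamma_i<0$ pairs with $x_i-1\leq 0$, and $\gamma_i>0$ pairs with $x_i\geq 0$.

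Now the termwise monotonicity finishes the argument. For general $\omega\in[0,1]^{d_{\mathsf{s}}}$ the corresponding quantity for $\gamma(\omega)$ multiplies each nonnegative summand by $\omega_i\in[0,1]$, which can only shrink it, so $\sum_{i:\gamma_i<0}\omega_i\gamma_i(x_i-1)+\sum_{i:\gamma_i>0}\omega_i\gamma_i x_i\leq\mathrm{gap}(x)$ for every feasible $x$. Re-folding this is exactly $\langle\theta-\gamma(\omega),x\rangle\geq L_{\mathsf{s}}(\theta)-\sum_{i:\gamma_i<0}\omega_i\gamma_i$, valid for all $x\in\mathcal{X}^{\mathsf{s}}$. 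Together with the upper bound from the first paragraph this establishes claim~1 with equality, and since the optimal value is attained by every $\theta$-optimal $x^*$, those $x^*$ lie in $\argmin_{x}\langle\theta-\gamma(\omega),x\rangle$, which is claim~2. The step I expect to require the most care is precisely this reduction to $\omega=\mathbbmss{1}$: working with a general $\omega$ from the start would not expose the nonnegative per-coordinate structure on which the monotone scaling relies, so the order "use assumption~2 at full scaling, then scale down termwise" is what makes the estimate go through.
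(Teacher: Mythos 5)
Your proof is correct and takes essentially the same route as the paper's: the same evaluation of $\langle\theta-\gamma(\omega),x^*\rangle$ using the coordinates forced by assumption~1, the same invocation of assumption~2 to fix $L_{\mathsf{s}}(\theta-\gamma)$ at full scaling, and the same per-coordinate, sign-based monotonicity in $\omega$ to handle intermediate scalings. Your reorganization (stating the gap inequality at $\omega=\mathbbmss{1}$ with nonnegative summands and then scaling termwise) is an algebraic rearrangement of the paper's chain $\langle\theta-\gamma(\omega),x\rangle=\langle\theta-\gamma,x\rangle+\sum_i(1-\omega_i)\gamma_i x_i\geq L_{\mathsf{s}}(\theta-\gamma)+\sum_{i:\gamma_i<0}(1-\omega_i)\gamma_i$ — the same inequalities written in the opposite direction, and if anything stated more carefully than in the paper.
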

\begin{proof}
We start with evaluating $\langle \theta-\gamma(\omega),x^*\rangle$ where $x^*\in \argmin_{x\in\mathcal{X}^{\mathsf{s}}} \langle\theta,x\rangle$.
\begin{align}
\nonumber\langle \theta-\gamma(\omega),x^*\rangle&=\sum_{i\in [d_\mathsf{s}]}(\theta_i-\omega_i\gamma_i)x^*_i=\\
\nonumber&=\sum_{i\in [d_\mathsf{s}]}\theta_ix^*_i-\sum_{i\in [d_\mathsf{s}]:\gamma_i<0}\omega_i\gamma_i x^*_i=\\
\label{eq:proof1:lb1}&=L_{\mathsf{s}}(\theta)-\sum_{i\in [d_\mathsf{s}]:\gamma_i<0}\omega_i\gamma_i
\end{align}
$\forall x\in \mathcal{X}, \forall x^*\in \argmin_{x\in X^{\mathsf{s}}}\la \theta, x \ra:$
\begin{align}
\nonumber\langle \theta-\gamma(\omega),x\rangle&=\sum_{i\in [d_{\mathsf{s}}]}(\theta_i-\omega_i\gamma_i)x_i=\\
\nonumber&=\sum_{i\in [d_{\mathsf{s}}]}(\theta_i-\gamma_i)x_i+
\sum_{i\in [d_{\mathsf{s}}]}(1-\omega_i)\gamma_i x_i\geq \\
\nonumber&\geq L_{\mathsf{s}}(\theta-\gamma)+\sum_{i\in [d_{\mathsf{s}}]:\gamma_i<0}(1-\omega_i)\gamma_i x_i=\\
\nonumber&=\sum_{i\in [d_{\mathsf{s}}]}(\theta_i-\gamma_i)x^*_i+
\sum_{i\in [d_{\mathsf{s}}]}(1-\omega_i)\gamma_i x^*_i=\\
\label{eq:proof1:lb2}&=\langle \theta-\gamma(\omega),x^*\rangle
\end{align}
Formula \ref{eq:proof1:lb2} proves Point 2 of Lemma~\ref{lemma:generalLB}. Formulas~\ref{eq:proof1:lb1} and \ref{eq:proof1:lb2} together prove Point 1.
\end{proof}

\begin{lemma}
Variables $(\gamma^{out}_u,\gamma'^{out}_u)$, resp. $(\gamma^{in}_u,\gamma'^{in}_u)$ in Algorithm \ref{alg:extract-MM} satisfy the requirements of Lemma~\ref{lemma:generalLB} for each outflow resp. inflow subproblem of vertex $u$. 
\end{lemma}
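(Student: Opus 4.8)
The plan is to reduce the claim to two elementary facts about a single min-marginal reparametrization and then propagate them through the sequential updates that Algorithm~\ref{alg:extract-MM} performs. The two atomic facts are: (a)~the min-marginal has the correct sign, i.e.\ if $m_i$ is the min-marginal~\eqref{eq:min-marginal} of coordinate $i$ w.r.t.\ a cost $\theta$, then $m_i\le 0$ whenever some $\theta$-optimum sets $x_i=1$ and $m_i\ge 0$ whenever some $\theta$-optimum sets $x_i=0$; and (b)~subtracting $m_i$ from coordinate $i$ keeps every $\theta$-optimum optimal, i.e.\ $\argmin_\theta\langle\theta,x\rangle\subseteq\argmin_x\langle\theta-m_i,x\rangle$. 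Fact (a) is immediate from~\eqref{eq:min-marginal}: an optimum with $x^*_i=1$ gives $\min_{x_i=1}\langle\theta,x\rangle=\min_x\langle\theta,x\rangle\le\min_{x_i=0}\langle\theta,x\rangle$, hence $m_i\le 0$, and symmetrically for $x^*_i=0$. Fact (b) I would verify by a short case split on the sign of $m_i$, checking that the minimum value is unchanged and still attained by the same solution; this is the content underlying Proposition~\ref{prop:reparametrization}. Facts (a) and (b) are precisely requirements~1 and~2 of Lemma~\ref{lemma:generalLB} for the reparametrization supported on the single coordinate $i$.

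Next I would establish a composition principle: if $\gamma^{(1)}$ meets the requirements of Lemma~\ref{lemma:generalLB} w.r.t.\ $\theta$ and $\gamma^{(2)}$ meets them w.r.t.\ $\theta-\gamma^{(1)}$, and the two have disjoint support, then $\gamma^{(1)}+\gamma^{(2)}$ meets them w.r.t.\ $\theta$. Requirement~2 is transitivity of inclusion, $\argmin_\theta\subseteq\argmin_{\theta-\gamma^{(1)}}\subseteq\argmin_{\theta-\gamma^{(1)}-\gamma^{(2)}}$. Requirement~1 on a coordinate touched by $\gamma^{(1)}$ is inherited directly; on a coordinate $j$ touched by $\gamma^{(2)}$ it follows because, by requirement~2 of $\gamma^{(1)}$, any $\theta$-optimum with $x_j=1$ is also a $(\theta-\gamma^{(1)})$-optimum with $x_j=1$, so requirement~1 of $\gamma^{(2)}$ forces $\gamma^{(2)}_j\le 0$ (and symmetrically for $x_j=0$). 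This matches the structure of Algorithm~\ref{alg:extract-MM} exactly, which first reparametrizes the lifted coordinates and then, on the modified cost $\theta^{out}_u-(\mathbb{0},\gamma'^{out}_u)$, the disjoint base coordinates.

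It then remains to handle the two internal batch computations and the factor $0.5$. For the lifted edges, All-Lifted-MM-Out (Algorithm~\ref{ap:alg:all-lifted-mm}) interleaves min-marginal computation with the reparametrization updates~\eqref{eq:min-marginal-update}, so its output equals the accumulation of single-variable min-marginal updates applied one lifted coordinate at a time; by facts (a), (b) and the composition principle over the pairwise-disjoint lifted coordinates, this \emph{unscaled} output already meets the requirements w.r.t.\ $\theta^{out}_u$. Applying Lemma~\ref{lemma:generalLB} with the uniform scaling $\omega\equiv 0.5$ then shows that $\gamma'^{out}_u=0.5\cdot(\text{All-Lifted-MM-Out})$ still preserves every optimum (second conclusion of Lemma~\ref{lemma:generalLB}) and retains the correct signs (scaling by a positive constant), hence still meets both requirements. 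The same atomic-plus-composition argument applied to All-Base-MM-Out on the cost $\theta^{out}_u-(\mathbb{0},\gamma'^{out}_u)$ shows $\gamma^{out}_u$ meets the requirements there, and a final use of the composition principle yields the claim for $(\gamma^{out}_u,\gamma'^{out}_u)$; the inflow case is identical after reversing edge orientation.

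The step I expect to be the main obstacle is justifying rigorously that the batched routines realize a composition of \emph{valid} single-variable min-marginal updates. Concretely, one must check that Algorithm~\ref{ap:alg:all-lifted-mm}, which reuses the optimal path returned by Opt-Out-Cost and splits the work into the active-edge pass (MM-Opt-Out) and the inactive-edge pass (MM-Not-Opt-Out), outputs for each lifted edge exactly its min-marginal with respect to the cost already corrected for the previously processed edges, and that the auxiliary bookkeeping $\gamma'$ initialized in line~\ref{alg:delta-init} implements precisely the update~\eqref{eq:min-marginal-update}. Once this equivalence is in hand, facts (a), (b) and the composition principle close the argument, while the $0.5$ scaling and the lifted-before-base ordering become routine via Lemma~\ref{lemma:generalLB}.
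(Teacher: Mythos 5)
Your proposal is correct and takes essentially the same route as the paper's own proof, which likewise rests on the assertion that the two batched min-marginal routines output reparametrizations satisfying the requirements of Lemma~\ref{lemma:generalLB} and on composing them via the chain of inclusions from the $\argmin$ under $\theta$, to the $\argmin$ under $\theta-(0,\gamma'^{out}_u)$, to the $\argmin$ under $\theta-(\gamma^{out}_u,\gamma'^{out}_u)$. Your version is in fact more detailed than the paper's terse two-line argument: the sign conditions, the inheritance of requirement~1 under composition, the treatment of the $0.5$ scaling, and the equivalence of the batched routines with sequential single-variable min-marginal updates (which you rightly identify as the key remaining verification, and which the paper simply asserts) are all left implicit there.
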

\begin{proof}
Both Algorithms~\ref{ap:alg:base-min-marginals} and \ref{ap:alg:all-lifted-mm} output reparametrization variables that satisfy the requirements of Lemma~\ref{lemma:generalLB}. We have, for an outflow subproblem of node $u$:
\begin{multline}
\argmin\limits_{(y,y')\in \mathcal{X}^{out}_{u}}\langle \theta,(y,y')\rangle\subseteq \argmin\limits_{(y,y')\in \mathcal{X}^{out}_{u}}\langle \theta-(0,\gamma'^{out}_u),(y,y')\rangle\\
\subseteq    \argmin\limits_{(y,y')\in \mathcal{X}^{out}_{u}}\langle \theta-(\gamma^{out}_u,\gamma'^{out}_u),(y,y')\rangle
\end{multline}
Therefore, also $(\gamma^{out}_u,\gamma'^{out}_u)$ together satisfy the requirements of Lemma~\ref{lemma:generalLB}. Analogically, for the inflow subproblems.  
\end{proof}

\myparagraph{Costs in the new path and cut subproblems.} One edge is typically shared among multiple newly added path and cut subproblems. Therefore, the available cost reparametrizations $\tilde{\theta}$ and $\tilde{\theta}'$ from Algorithm~\ref{alg:extract-MM} must be redistributed to the newly added subproblems. We denote the set of all newly added path subproblems resp. cut subproblems in tightening iteration $i$ by $\mathcal{P}^i$ resp. $\mathcal{C}^i$.
For each base resp. lifted edge $uv$, we sum up the number of newly added path and cut subproblems that contain $uv$.
\begin{align}
\nonumber N_{uv} = &{\abs{\{P\in \mathcal{P}^i : uv\in P_E\}}+\abs{\{C\in \mathcal{C}^i : uv\in C_E\}}},\\
\nonumber N'_{uv} = &{\abs{\{P\in \mathcal{P}^i : uv\in P_{E'}\}}
+\abs{\{P\in \mathcal{P}^i: P\text{ is a }uv\mhyphen\text{path}\}}}\\
&+\abs{\{C\in \mathcal{C}^i: C\text{ is a }uv\mhyphen\text{cut}\}}\,.       
\end{align}
 Then, we define coefficient $\omega_{uv}$ resp. $\omega_{uv}'$ for each base edge $uv\in E$ resp. lifted edge $uv\in E'$ that belongs to a newly added subproblem  as 
\begin{equation}
\omega_{uv} = \frac{1}{N_{uv}}\,,\quad
\omega'_{uv} = \frac{1}{N'_{uv}}\,.       
\end{equation}
Finally, for each newly added path subproblem $P$ resp. cut subproblem $C$, we set the cost of base edge $uv\in E$ to $\theta^P_{uv}=\omega_{uv}\tilde{\theta}_{uv}$, resp. $\theta^C_{uv}=\omega_{uv}\tilde{\theta}_{uv}$. Analogically, for the lifted edges. 

\myparagraph{Cost update in in/outflow subproblems. }If we use an edge $uv$ for creating one or more path and cut suproblems, it is necessary to update its cost in the inflow subproblem of vertex $v$ and the outflow subproblem of vertex $u$ accordingly. For instance, we update the cost of base edge $uv$ in the outflow subproblem of $u$ as follows $\theta^{out}_{uv}\minuseq \gamma_{uv}^{out}$. Where we adopt the notation from Algorithm~\ref{alg:extract-MM}. Note that $\tilde{\theta}_{uv}=\gamma^{in}_{uv}+\gamma_{uv}^{out}$. Therefore,  the total cost of edge variable $uv$ is preserved.

\begin{proposition}[Guaranteed lower bound improvement of path subproblem]
\label{prop:path-subproblem-guaranteed-lower-bound-improvement}
If a path subproblem corresponding to $vw\mhyphen$path $P$ separated by Algorithm~\ref{alg:path-sep} is added to the subproblem set $\mathcal{S}$, the guaranteed improvement of the global lower bound is $\min\{\min_{uv\in P_E}|\theta^P_{uv}|,\min_{uv\in P_{E'}\cup vw}|\theta'^P_{uv}|\}$, where $\theta^P$ is the reparametrized cost used for the path factor initialization.  
\end{proposition}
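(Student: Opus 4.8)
The plan is to attribute the entire gain in the bound~\eqref{eq:lower-bound} to the tightness of the path feasible set $\mathcal{X}^P$, by comparing it against an auxiliary \emph{unconstrained} copy of the same edge variables carrying the same cost. I would first record the sign pattern of the reparametrized cost $\theta^P$ that the redistribution assigns to the new factor. By construction in Algorithm~\ref{alg:path-sep}, the path $P$ is built from edges of $E^-$ and closed by a single lifted edge $vw$ drawn from $E'^+$; hence after scaling by the coefficients $\omega_{uv},\omega'_{uv}$ one has $\theta^P_{uv}<0$ for every $uv\in P_E$, $\theta'^P_{uv}<0$ for every $uv\in P_{E'}$, and $\theta'^P_{vw}>0$ for the connecting edge. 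Thus $vw$ is the unique positive edge of the factor.

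Next I would evaluate two lower bounds for this cost. On the full Boolean cube over the path variables the minimum simply activates all negative edges and deactivates $vw$, yielding $L^{\mathrm{cube}}=\sum_{uv\in P_E}\theta^P_{uv}+\sum_{uv\in P_{E'}}\theta'^P_{uv}$. On the genuine path subproblem I would run Algorithm~\ref{alg:path-subproblem-optimization}: since $E^+=\{vw\}$ with $vw\in P_{E'}\cup vw$, the branch at Line~\ref{algl:path-subproblem-opt-E+=1} is taken with $\beta=\theta'^P_{vw}$ and $\alpha=\min\{\min_{uv\in P_E}\abs{\theta^P_{uv}},\min_{uv\in P_{E'}}\abs{\theta'^P_{uv}}\}$, and the optimum equals $\sum_{uv\in P_E}\theta^P_{uv}+\sum_{uv\in P_{E'}}\theta'^P_{uv}+\min(\alpha,\beta)$. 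Intuitively the path constraints forbid switching $vw$ off in isolation, and this is exactly what forces the extra $\min(\alpha,\beta)$. Subtracting the two bounds, the tightening gain is $\min(\alpha,\beta)=\min\{\min_{uv\in P_E}\abs{\theta^P_{uv}},\min_{uv\in P_{E'}\cup vw}\abs{\theta'^P_{uv}}\}$, which is precisely the claimed quantity.

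It then remains to show that redistributing the cost onto the unconstrained copy leaves the global lower bound unchanged, so that the whole difference above is a genuine improvement. Here I would invoke the preceding lemmas: the messages $(\gamma^{out},\gamma'^{out})$ and $(\gamma^{in},\gamma'^{in})$ extracted in Algorithm~\ref{alg:extract-MM} satisfy the hypotheses of Lemma~\ref{lemma:generalLB}, so its first point gives that each inflow/outflow subproblem's bound rises by exactly the magnitude of the negative cost withdrawn from it. Placing that same cost on the unconstrained copy lowers its bound by precisely the same total amount, using the cost-preservation identity $\tilde{\theta}_{uv}=\gamma^{in}_{uv}+\gamma^{out}_{uv}$, so the two contributions cancel and only the tighter set $\mathcal{X}^P$ produces a net change. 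The main obstacle is this bookkeeping in the in/out subproblems: I must track the signs of the individual min-marginals (active negative edges versus the inactive positive edge $vw$) and verify, via the second point of Lemma~\ref{lemma:generalLB}, that withdrawing the scaled cost preserves their optimal supports, so that the cancellation is exact rather than merely one-sided and the residual gap $\min(\alpha,\beta)$ is fully realized.
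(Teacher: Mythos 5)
You set up the proof the same way the paper does: the sign pattern of $\theta^P$ produced by the separation, the optimum of the new path factor computed through Algorithm~\ref{alg:path-subproblem-optimization}, which equals $\sum_{uv\in P_E}\theta^P_{uv}+\sum_{uv\in P_{E'}}\theta'^P_{uv}+\min(\alpha,\beta)$ with your $\alpha$ and $\beta$, and Lemma~\ref{lemma:generalLB} to control how the inflow/outflow bounds react when the min-marginals of Algorithm~\ref{alg:extract-MM} are withdrawn; your ``unconstrained cube'' is only a bookkeeping device on top of this. The genuine gap is in your last step. You assert that the rise of the in/outflow bounds cancels \emph{exactly} against the drop of the unconstrained copy, and you plan to certify exactness via point~2 of Lemma~\ref{lemma:generalLB}. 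Both claims are wrong. Point~1 of the lemma says the outflow subproblem of $u$ rises by $\omega_{uv}\,|\gamma^{out}_{uv}|$ only on components with $\gamma^{out}_{uv}<0$, and the inflow subproblem of $v$ rises by $\omega_{uv}\,|\gamma^{in}_{uv}|$ only when $\gamma^{in}_{uv}<0$, whereas the cube drops by $\omega_{uv}\,|\gamma^{out}_{uv}+\gamma^{in}_{uv}|$ when the \emph{sum} is negative. The separation constrains only the sign of the sum $\tilde{\theta}_{uv}=\gamma^{out}_{uv}+\gamma^{in}_{uv}$ (membership in $E^-$); the two summands can have opposite signs, in which case $|\min(\gamma^{out}_{uv},0)|+|\min(\gamma^{in}_{uv},0)| > |\min(\gamma^{out}_{uv}+\gamma^{in}_{uv},0)|$ strictly, so exact cancellation fails; and support preservation (point~2) is irrelevant to this arithmetic comparison --- it concerns argmin sets, not the magnitudes of the bound changes.

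What saves the proof --- and what the paper actually does --- is that exactness is not needed: the proposition claims a \emph{guaranteed}, i.e.\ at-least, improvement. The per-edge inequality $|\min(x,0)|+|\min(y,0)|\geq|\min(x+y,0)|$ shows that the total in/outflow rise is at least the cube's drop, so the net change of the global lower bound is at least $L_P-L^{\mathrm{cube}}=\min(\alpha,\beta)=\min\{\min_{uv\in P_E}|\theta^P_{uv}|,\min_{uv\in P_{E'}\cup vw}|\theta'^P_{uv}|\}$; any slack only makes the improvement larger. A further omission: your sign pattern covers only the subproblems emitted by Inner-Paths (Algorithm~\ref{alg:inner-path-sep}), where the unique positive lifted edge is the connecting edge $vw$; Algorithm~\ref{alg:path-sep} also emits subproblems via Outer-Paths (Algorithm~\ref{alg:outer-path-sep}), in which the positive lifted edge lies inside $P_{E'}$ while the connecting edge is negative. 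The computation is unchanged because Algorithm~\ref{alg:path-subproblem-optimization} and the final min-formula treat all lifted edges symmetrically, but this case must be covered, as the paper does by letting the positive edge $kl$ be an arbitrary lifted edge of the subproblem.
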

\begin{proof} 
Algorithm~\ref{alg:path-sep} separates only those subproblems that contain exactly one lifted edge with cost $\theta'^P_{uv}>\varepsilon$ and the rest of the edges have cost lower than $-\varepsilon$. The reparametrized costs of the path factor are fractions of cost reparametrizations obtained by Algorithm~\ref{alg:extract-MM}. 
We have 
\begin{align}
 \nonumber \forall uv\in& P_E:\\
\nonumber &\theta^P_{uv}=\omega_{uv} \cdot( \gamma^{out}_{uv}+\gamma^{in}_{uv})\,,\\
 \nonumber&\theta^{out}_{uv}\minuseq\omega_{uv}\cdot \gamma^{out}_{uv},\, \theta^{in}_{uv}\minuseq\omega_{uv}\cdot \gamma^{in}_{uv},\,\\
 \label{eq:subproblem-cost-init}
 \forall uv\in& P_{E'}:\\
\nonumber &\theta'^P_{uv}=\omega'_{uv} \cdot( \gamma'^{out}_{uv}+\gamma'^{in}_{uv})\,,\\
 \nonumber&\theta'^{out}_{uv}\minuseq\omega'_{uv}\cdot \gamma'^{out}_{uv},\, \theta'^{in}_{uv}\minuseq\omega'_{uv}\cdot \gamma'^{in}_{uv}
\end{align}

We evaluate the change of the lower bounds of all relevant inflow and outflow factors after reparametrization given by Formula~\ref{eq:subproblem-cost-init}. According to Lemma~\ref{lemma:generalLB}, we have
\begin{align}
\Delta& L^{out}+\Delta L^{in}=\\
\nonumber&-\sum_{uv\in P_E:\gamma^{out}_{uv}<0}\omega_{uv}\cdot \gamma^{out}_{uv}
-\sum_{uv\in P_{E'}\cup vw:\gamma'^{out}_{uv}<0}\omega'_{uv}\cdot \gamma'^{out}_{uv}\\
\nonumber&-\sum_{uv\in P_E:\gamma^{in}_{uv}<0}\omega_{uv}\cdot \gamma^{in}_{uv}
-\sum_{uv\in P_{E'}\cup vw:\gamma'^{in}_{uv}<0}\omega'_{uv}\cdot \gamma'^{in}_{uv}\geq\\
\nonumber&-\sum_{uv\in P_E:\gamma^{out}_{uv}+\gamma^{in}_{uv}<0}\omega_{uv}\cdot( \gamma^{out}_{uv}+ \gamma^{in}_{uv})-\\
\nonumber &-\sum_{uv\in P_{E'}\cup vw:\gamma'^{in}_{uv}+\gamma'^{out}_{uv}<0}\omega'_{uv}\cdot(\gamma'^{out}_{uv}+ \gamma'^{in}_{uv})=\\
\nonumber &-\sum_{uv\in P_E:\theta^{P}_{uv}<0}\omega_{uv}\cdot \theta^{P}_{uv}
-\sum_{uv\in P_{E'}\cup vw:\theta'^{P}_{uv}<0}\omega'_{uv}\cdot \theta'^{P}_{uv}
\end{align}
Let $kl\in P_{E'}$ be the only lifted edge with positive cost in the path subproblem. We set $\alpha = \min \{\min\limits_{ij \in P_E} \abs{\theta^P_{ij}},\min\limits_{ij \in P_{E'}\cup vw\setminus kl} \abs{\theta'^P_{ij}}\}$ as in Algorithm~\ref{alg:path-subproblem-optimization}.
If we denote by $\Delta L^P$ the lower bound of the path subproblem, the global lower bound change after adding the path subproblem is:
\begin{equation}
\Delta L= \Delta L^{out}+\Delta L^{in}+\Delta L^P 
 \end{equation}
 If $\alpha<\theta'^P_{kl}$
 \begin{align}
 \nonumber \Delta &L^{out}+\Delta L^{in}+\Delta L^P\geq\\
 &-\sum_{uv\in P_E:\theta^{P}_{uv}<0}\omega_{uv}\cdot \theta^{P}_{uv}
-\sum_{uv\in P_{E'}:\theta'^{P}_{uv}<0}\omega'_{uv}\cdot \theta'^{P}_{uv}+\\
 \nonumber&+\sum_{uv\in P_E:\theta^{P}_{uv}<0}\omega_{uv}\cdot \theta^{P}_{uv}
+\sum_{uv\in P_{E'}:\theta'^{P}_{uv}<0}\omega'_{uv}\cdot \theta'^{P}_{uv}+\\
 \nonumber&+\alpha =\alpha
\end{align}
 If $\alpha\geq\theta'^P_{kl}$
 \begin{align}
  \nonumber \Delta &L^{out}+\Delta L^{in}+\Delta L^P\geq\\
 &-\sum_{uv\in P_E:\theta^{P}_{uv}<0}\omega_{uv}\cdot \theta^{P}_{uv}
-\sum_{uv\in P_{E'}\cup vw:\theta'^{P}_{uv}<0}\omega'_{uv}\cdot \theta'^{P}_{uv}+\\
 \nonumber&+\sum_{uv\in P_E:\theta^{P}_{uv}<0}\omega_{uv}\cdot \theta^{P}_{uv}
+\sum_{uv\in P_{E'}:\theta'^{P}_{uv}}\omega'_{uv}\cdot \theta'^{P}_{uv}=\\
 \nonumber& =\theta'^P_{kl}
\end{align}

\end{proof}

\begin{proposition}[Guaranteed lower bound improvement of cut subproblem]
\label{prop:cut-subproblem-lower-bound-improvement}
If a subproblem corresponding to $vw\mhyphen$cut $C$ separated by Algorithm~\ref{alg:cut-sep} is added to the subproblem set $\mathcal{S}$, the guaranteed improvement of the global lower bound is $\min\{\min_{uv\in C}\theta^C_{uv},|\theta'^C_{vw}|\}$. Where $\theta^C$ is the reparametrized cost used for the cut factor initialization.  
\end{proposition}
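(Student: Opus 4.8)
The plan is to mirror the proof of Proposition~\ref{prop:path-subproblem-guaranteed-lower-bound-improvement}, taking advantage of the fact that a separated cut subproblem has a much simpler cost-sign pattern. First I would record what Algorithm~\ref{alg:cut-sep} guarantees: every base edge of the cut $C$ is drawn from $E^{+}$ and therefore carries strictly positive reparametrized cost, whereas the lifted edge $vw\in E'^{-}$ carries strictly negative cost. Because the redistribution by the coefficients $\omega_{uv},\omega'_{vw}$ in~\eqref{eq:subproblem-cost-init} only rescales by positive factors, these signs are inherited by the cut factor: $\theta^{C}_{uv}>0$ for all $uv\in C$ and $\theta'^{C}_{vw}<0$.

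Next I would bound the joint lower-bound change $\Delta L^{out}+\Delta L^{in}$ of the in/outflow subproblems caused by removing the redistributed messages $\omega_{uv}\gamma^{out}_{uv},\omega_{uv}\gamma^{in}_{uv}$ and $\omega'_{vw}\gamma'^{out}_{vw},\omega'_{vw}\gamma'^{in}_{vw}$. Invoking Lemma~\ref{lemma:generalLB}, together with the accompanying lemma that its hypotheses hold for $(\gamma^{out}_u,\gamma'^{out}_u)$ and $(\gamma^{in}_u,\gamma'^{in}_u)$, and the subadditivity $\min(a,0)+\min(b,0)\le\min(a+b,0)$ used in the path proof, I would obtain exactly as there
\[
\Delta L^{out}+\Delta L^{in}\ \ge\ -\sum_{uv\in C:\,\theta^{C}_{uv}<0}\theta^{C}_{uv}\ -\ \min\{\theta'^{C}_{vw},0\}\,.
\]
Since all cut edges are positive the sum vanishes and only the single lifted term survives, giving $\Delta L^{out}+\Delta L^{in}\ge -\theta'^{C}_{vw}=\lvert\theta'^{C}_{vw}\rvert$.

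The third step is to read off the cut subproblem's own optimum $\Delta L^{C}$ by tracing Algorithm~\ref{alg:cut-subproblem-optimization} on this sign pattern. All assignment costs $\psi_{ij}=\theta^{C}_{ij}$ are positive (the special branch that would fold $\theta'^{C}_{vw}$ into $\psi_{uv}$ is disabled because $\theta'^{C}_{vw}<0$), so the LAP is solved by the empty assignment with value $0$; the algorithm then enters the ``no active cut edge'' case and returns $\theta'^{C}_{vw}+\alpha$ when $\lvert\theta'^{C}_{vw}\rvert>\alpha$ and $0$ otherwise, where $\alpha=\min_{uv\in C}\theta^{C}_{uv}$. Finally I would add the three contributions, $\Delta L=\Delta L^{out}+\Delta L^{in}+\Delta L^{C}$, and split on the two branches: if $\lvert\theta'^{C}_{vw}\rvert>\alpha$, the term $\lvert\theta'^{C}_{vw}\rvert$ from the in/outflow bound cancels the $\theta'^{C}_{vw}$ inside $\Delta L^{C}$, leaving $\Delta L\ge\alpha$; if $\lvert\theta'^{C}_{vw}\rvert\le\alpha$, then $\Delta L^{C}=0$ and $\Delta L\ge\lvert\theta'^{C}_{vw}\rvert$. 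In either branch $\Delta L\ge\min\{\alpha,\lvert\theta'^{C}_{vw}\rvert\}=\min\{\min_{uv\in C}\theta^{C}_{uv},\lvert\theta'^{C}_{vw}\rvert\}$, which is the claim.

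The main obstacle I anticipate is the bookkeeping of the $\omega$-redistribution in the second step, i.e.\ re-expressing the negative-part bound on $\Delta L^{out}+\Delta L^{in}$ in terms of the final cut-factor costs $\theta^{C}$ rather than the raw messages $\gamma$, and then verifying that the surviving lifted term telescopes exactly against the branch value produced by Algorithm~\ref{alg:cut-subproblem-optimization}. Conceptually the cut case is easier than the path case, since the all-positive cut edges contribute nothing to the in/outflow bound; the one point that still needs checking is that the shared-endpoint subcase $uv=vw\in C$ in the definition of $\psi$ cannot be triggered here, which is immediate from $\theta'^{C}_{vw}<0$.
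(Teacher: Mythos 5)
Your proposal is correct and follows essentially the same route as the paper's own proof: establish the sign pattern guaranteed by Algorithm~\ref{alg:cut-sep}, bound $\Delta L^{out}+\Delta L^{in}\geq -\theta'^C_{vw}$ via Lemma~\ref{lemma:generalLB} exactly as in the path case, read off $\Delta L^C$ from Algorithm~\ref{alg:cut-subproblem-optimization}, and split on whether $\min_{uv\in C}\theta^C_{uv}$ is smaller than $\lvert\theta'^C_{vw}\rvert$. The details you flag as needing verification (the $\omega$-bookkeeping and the disabled $uv=vw$ branch of $\psi$) resolve exactly as you anticipate, so nothing is missing.
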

\begin{proof}
We obtain the reparametrized cost $\theta^C$ for the cut subproblem analogically as in Formula \ref{eq:subproblem-cost-init} for the path subproblem. Note that Algorithm~\ref{alg:cut-sep} ensures that all cut edges in the separated cut subproblem have positive cost and the lifted edge $vw$ has negative cost. Using the same arguments as in the proof of Proposition~\ref{prop:cut-subproblem-lower-bound-improvement}, we obtain the lower bound change of inflow and outflow factors after separating the cut subproblem:

\begin{align}
\nonumber\Delta& L^{out}+\Delta L^{in}=\\
\nonumber&-\sum_{uv\in C:\gamma^{out}_{uv}<0}\omega_{uv}\cdot \gamma^{out}_{uv}
-\sum_{uv\in C:\gamma^{in}_{uv}<0}\omega'_{uv}\cdot \gamma^{in}_{uv}\\
&-\omega'_{vw}\gamma'^{out}_{vw}-\omega'_{vw}\gamma'^{in}_{vw}\geq\\
\nonumber&-\sum_{uv\in C:\gamma^{out}_{uv}+\gamma^{in}_{uv}<0}\omega_{uv}\cdot( \gamma^{out}_{uv}+ \gamma^{in}_{uv})-\omega'_{vw}\gamma'^{out}_{vw}\\
\nonumber&-\omega'_{vw}\gamma'^{in}_{vw}=-\omega'_{vw}\gamma'^{out}_{vw}-\omega'_{vw}\gamma'^{in}_{vw}=-\theta'^C_{vw}
\end{align}

Algorithm~\ref{alg:cut-subproblem-optimization} shows how we obtain the lower bound of the cut subproblem. We set $\theta^C_{ij}=\argmin_{uv\in C}\theta^C_{uv}$. If $\theta^C_{ij}<|\theta'^C_{vw}|$, we get the overall lower bound improvement
\begin{align}
\nonumber\Delta& L^{out}+\Delta L^{in}+\Delta L^{C}\geq -\theta'^C_{vw}+\theta'^C_{vw}+\theta^C_{ij}=\theta^C_{ij}\,.
\end{align}
If $\theta^C_{ij}\geq |\theta'^C_{vw}|$, the lower bound of the cut subproblem is $0$ and the overall lower bound improvement is
\begin{equation}
\Delta L^{out}+\Delta L^{in}+\Delta L^{C}\geq -\theta'^C_{vw}\,.
\end{equation}
\end{proof}

\subsection{Message Passing}\label{sec:message-passing}
One solver run consists of subproblems initialization and a~number of message passing iterations. Algorithm~\ref{alg:message-passing} details the whole run. Algorithms~\ref{alg:lower-bound}-\ref{alg:cut-subproblem-message-passing}  present methods that are called within one iteration.

The number of iterations is predetermined by an input parameter. We use typically tens or maximally one hundred iterations in our experiments.

Algorithm~\ref{alg:message-passing} sends in each iteration, messages between all subproblems in the subproblem set $\mathcal{S}$. Each subproblem creates messages to be sent by computing min-marginals of its variables  (see Formula~\eqref{eq:min-marginal}). These min-marginals are re-scaled and redistributed between other subproblems that contain the respective variables. These operations are called reparametrization. See Section~\ref{sec:Lagrange-decomposition} for details.

\begin{algorithm}
\algnewcommand{\LineComment}[1]{\State \(\triangleright\) #1}
    \caption{Message Passing}
    \label{alg:message-passing}
    \textbf{Input} Graphs $G=(V,E)$ and $G'=(V,E')$, costs $\theta \in \R^{V\cup E\cup E'}$\\
    \textbf{Output} Best found primal solution $(z,y,y')^{\mathsf{ub}}$, lower bound $LB$
    \begin{algorithmic}[1]
    \State \textbf{Initialization:}
   \For{$v \in V$}
   \State Add inflow subproblem for node $v$.
   \State \begin{equation*}
       \forall uv \in \delta^-_E(v):\quad  \theta^{in}_{uv}=\begin{cases}
        \theta_{uv}& \text{if } v=s \\
       \frac{1}{2}\theta_{uv} & \text{otherwise}
       \end{cases}
   \end{equation*}
   \State $\forall uv \in \delta^-_{E'}(v)$: $\theta'^{in}_{uv} = \frac{1}{2}\theta'_{uv}$.
   \State $\theta^{in}_{v} = \frac{1}{2}\theta_{v}$.
   \State Add outflow subproblem for node $v$ with analoguous costs.
   \EndFor
   \State $\mathcal{C} = \emptyset$
   \State $\mathcal{P} = \emptyset$
   \State \textbf{Lagrange decomposition optimization}
   \For{$\mathrm{iter} =1,\ldots ,\mathrm{max\_iter}$}
   \State \textbf{Forward Pass:}
   \For{$u=u_1,\ldots,u_{\abs{V}}$} 
   \State Inflow-Subproblem-Message-Passing$(u)$
   \State Outflow-Subproblem-Message-Passing$(u)$
   \EndFor
   \For{$P\in \mathcal{P}$}
   \State Path-Subproblem-Message-Passing$(P)$
   \EndFor
   \For{$C\in \mathcal{C}$}
   \State Cut-Subproblem-Message-Passing$(C)$
   \EndFor
   \State \textbf{Backward Pass}:
   \State Revert order of nodes and perform above iteration.
   \If{$\mathrm{iter} \equiv 0  \mod k$}
   \State Separate-Cut-Subproblem$(\varepsilon)$
   \State Separate-Path-Subproblems$(\varepsilon)$
   \State Add  cut and path subproblems to $\mathcal{C}$ and $\mathcal{P}$
    \EndIf
     \If{$\mathrm{iter} \equiv 0  \mod l$}
   \State $(z,y,y')=$Compute-Primal$(\mathcal{S},\theta)$
   \If{$\la \theta, (z,y,y')\ra < \la \theta,(z,y,y')^{\mathsf{ub}} \ra$}
   \State $(z,y,y')^{\mathsf{ub}}=(z,y,y')$
    \EndIf
    \EndIf
    \State $LB=$Lower-Bound
   \EndFor
    \end{algorithmic}
\end{algorithm}

Algorithm~\ref{alg:lower-bound} computes lower bound of the LDP objective by summing up lower bounds of all subproblems. The cost reparametrization realized via our message passing procedures ensures that the lower bound is non-decreasing during the computaiton.

Algorithm~\ref{alg:inflow-subproblem-message-passing} shows sending messages from the inflow subproblem of node $u$. Algorithm~\ref{alg:path-subproblem-message-passing} shows sending messages from a path subproblem. Algorithm~\ref{alg:cut-subproblem-message-passing} presents sending messages from a cut subproblem.

\begin{algorithm}[h]
\caption{Inflow-Subproblem-Message-Passing}
\label{alg:inflow-subproblem-message-passing}
\textbf{Input }central vertex  $u$ of the subproblem
\begin{algorithmic}[1]
  \State $\gamma'^{in} = \text{All-Lifted-MM-In}(u,\theta^{in})$.
  \State $\omega = 1$
  \For{$vu \in \delta^-_{E}(u), P \in \mathcal{P} : vu \in P_E$}
  \State $\gamma^{P}_{vu} = \text{Path-Base-Min-Marginal}(u,v, \theta^{in})$
  \State $\omega \pluseq 1$
  \EndFor
  \For{$vu \in \delta^-_{E'}(u), P \in \mathcal{P} : vu \in P_{E'}$}
  \State $\gamma'^{P}_{vu} = \text{Path-Lifted-Min-Marginal}(u, v,\theta^{in})$
  \State $\omega \pluseq 1$
  \EndFor
    \For{$vu \in \delta^-_{E'}(u), P \in \mathcal{P} : P\in vu\mhyphen\text{paths}(G)$}
  \State $\gamma'^{P}_{vu} = \text{Path-Lifted-Min-Marginal}(u, v,\theta^{in})$
  \State $\omega \pluseq 1$
  \EndFor
  \For{$vu \in \delta^-_{E}(u), C \in \mathcal{C} : vu \in C_{E}$}
  \State $\gamma^{C}_{vu} = \text{Cut-Base-Min-Marginal}(u,v, \theta^{in})$
  \State $\omega \pluseq 1$
  \EndFor
  \For{$vu \in \delta^-_{E'}(u), C \in \mathcal{C} : C \text{ is a } vu \mhyphen\text{Cut}$}
  \State $\gamma'^{C}_{vu} = \text{Cut-Lifted-Min-Marginal}(u,v, \theta^{in})$
  \State $\omega \pluseq 1$
  \EndFor
   
  \For{$vu \in \delta^-_{E'}(u)$}
  \State 
  $\theta^{in}_{vu} \minuseq \frac{1}{\omega} \cdot \gamma'^{in}_{vu},\quad
  \theta^{out}_{vu} \pluseq \frac{1}{\omega} \cdot \gamma'^{in}_{vu}$ 
  \EndFor
  \For{$vu \in \delta^-_{E}(u), P \in \mathcal{P} : vu \in P_E$}
  \State 
  $\theta^{in}_{vu} \minuseq \frac{1}{\omega} \cdot \gamma^{P}_{vu},\quad
  \theta^{P}_{vu} \pluseq \frac{1}{\omega} \cdot \gamma^{P}_{vu}$
  \EndFor
  \For{$vu \in \delta^-_{E'}(u), P \in \mathcal{P} : vu \in P_{E'}$}
  \State 
  $\theta'^{in}_{vu} \minuseq \frac{1}{\omega} \cdot \gamma'^{P}_{vu},\quad
  \theta'^{P}_{vu} \pluseq \frac{1}{\omega} \cdot \gamma'^{P}_{vu}$
  \EndFor
    \For{$vu \in \delta^-_{E'}(u), P \in \mathcal{P} : P\in vu\mhyphen\text{paths}(G)$}
  \State 
  $\theta'^{in}_{vu} \minuseq \frac{1}{\omega} \cdot \gamma'^{P}_{vu},\quad
  \theta'^{P}_{vu} \pluseq \frac{1}{\omega} \cdot \gamma'^{P}_{vu}$
  \EndFor
  \For{$vu \in \delta^-_{E}(u), C \in \mathcal{C} : vu \in C_{E}$}
  \State 
  $\theta^{in}_{vu} \minuseq \frac{1}{\omega} \cdot \gamma^{C}_{vu},\quad
  \theta^{C}_{vu} \pluseq \frac{1}{\omega} \cdot \gamma^{C}_{vu}$
  \EndFor
  \For{$vu \in \delta^-_{E'}(u), C \in \mathcal{C} : C \text{ is a } vu \mhyphen\text{Cut}$}
  \State 
  $\theta'^{in}_{vu} \minuseq \frac{1}{\omega} \cdot \gamma'^{C}_{vu},\quad
  \theta'^{C}_{vu} \pluseq \frac{1}{\omega} \cdot \gamma'^{C}_{vu}$
  \EndFor
\end{algorithmic}
\end{algorithm}


\begin{algorithm}[h]
\caption{Path-Subproblem-Message-Passing}
\label{alg:path-subproblem-message-passing}
\textbf{Input:} $uv \mhyphen\text{Path } P \in \mathcal{P}$
\begin{algorithmic}[1]
  \State $\gamma^P = \text{Path-Min-Marginals}(P,\theta^P)$
  \State $\omega^P = \frac{1}{2\cdot \abs{P_E} + 2\cdot \abs{P_{E'}}}$
  \For{$kl \in P_E$}
  \State $\theta^P_{kl} \minuseq \gamma^P_{kl}$
  \State
  $\theta^{in}_{kl} \minuseq \omega^P \cdot \gamma^P_{kl}$,\quad
  $\theta^{out}_{kl} \minuseq \omega^P \cdot \gamma^P_{kl}$
  \EndFor
  \For{$kl \in P_{E'}$}
  \State $\theta^P_{kl} \minuseq \gamma^P_{kl}$
  \State
  $\theta^{in}_{kl} \minuseq \omega^P \cdot \gamma^P_{kl}$,\quad
  $\theta^{out}_{kl} \minuseq \omega^P \cdot \gamma^P_{kl}$
  \EndFor
\end{algorithmic}
\end{algorithm}


\begin{algorithm}[h]
\caption{Cut-Subproblem-Message-Passing}
\label{alg:cut-subproblem-message-passing}
\textbf{Input:} $uv \mhyphen\text{Cut } C \in \mathcal{C}$
\begin{algorithmic}[1]
  \State $\gamma^C = \text{Cut-Min-Marginals}(C,\theta^C)$
  \State $\omega^C = \frac{1}{2\cdot \abs{C_E} + 2}$
  \For{$kl \in C_E$}
  \State $\theta^C_{kl} \minuseq 2\omega^C \cdot \gamma^C_{kl}$
  \State 
  $\theta^{in}_{kl} \pluseq \omega^C \cdot \gamma^C_{kl}$,\quad
  $\theta^{out}_{kl} \pluseq \omega^C \cdot \gamma^C_{kl}$
  \EndFor
  \State $\theta'^C_{uv} \minuseq 2 \omega^C \cdot \gamma'^C_{uv}$
  \State 
  $\theta^{in}_{uv} \pluseq \omega^C \cdot \gamma'^C_{uv}$, \quad
  $\theta^{out}_{uv} \pluseq \omega^C \cdot \gamma'^C_{uv}$
\end{algorithmic}
\end{algorithm}

\begin{algorithm}
\caption{Lower-Bound}
\label{alg:lower-bound}
\textbf{Input }Subproblems $\mathcal{S}$\\
\textbf{Output }Lower bound value $LB$
\begin{algorithmic}[1]
\State $LB = 0$
\For{$u \in V \backslash \{s,t\}$}
\State $LB \pluseq \text{Opt-In-Cost}(u,\theta^{in})$
\State $LB \pluseq \text{Opt-Out-Cost}(u,\theta^{out})$
\
\EndFor
\For{$P \in \mathcal{P}$}
\State $LB \pluseq \text{Path-Subproblem-Optimization}(P,\theta^P)$
\EndFor
\For{$C \in \mathcal{C}$}
\State $LB \pluseq \text{Cut-Subproblem-Optimization}(C,\theta^C)$
\EndFor

\end{algorithmic}
\end{algorithm}
\vfill\null
\newpage
\vfill\null
\newpage
\vfill\null
\newpage
\subsection{Primal Solution and Local Search}\label{sec:appendix:primal}
Algorithm~\ref{alg:primal} summarizes the whole procedure for obtaining a~primal solution. As stated in Section~\ref{sec:primal-rounding}, we obtain an initial primal solution by solving MCF problem.

Given a feasible solution of the LDP, Algorithm~\ref{alg:post-process-primal} improves it by splitting and merging paths. While we obtain the costs for MCF from base and lifted edges costs in inflow and outflow factors (Algorithm~\ref{alg:mcf-init}), the local search procedure uses original input costs of base and lifted edges.


Algorithm~\ref{alg:split-path} finds candidate split point of each path and recursively splits the path if the split leads to decrease of the objective function.

For each vertex of each path, function $split$ evaluates the cost of splitting the path after the vertex:
\begin{align}
\forall v_{j}&\in P_V=(v_{1},\dots v_{n}): \\
\nonumber&split(v_{j},P)=-\sum_{\substack{k\leq j, l>j,\\ v_{k}v_{l}\in E'}}\theta'_{v_{k}v_{l}}-\theta_{v_{j}v_{j+1}}+\theta_{sv_{j+1}}+\theta_{v_{j}t}
\end{align}

The second step of the primal solution post-processing by Algorithm~\ref{alg:post-process-primal} is merging paths.
Before the path merging itself, some candidate pairs of paths need to be shortened at their ends in order to enable their feasible merging. 

Algorithm~\ref{alg:primal-find-cut-ends-candidates} identifies pairs of those paths whose merging should lead to objective improvement but that cannot be connected directly due to missing base edge between their endpoints. In order to identify the desired paths pairs, several functions are used.

Function $l^+(P_1,P_2)$ resp $l^-(P_1,P_2)$ is the sum of positive resp. negative lifted edges from path $P_1$ to path $P_2$. Function $l(P_1,P_2)$ sums all lifted edges from $P_1$ to $P_2$. 
\begin{align}
\nonumber\forall P_1&,P_2\in \mathcal{P}\\
\nonumber&l^+(P_1,P_2)=\sum_{uv\in E':u\in P_1,v\in P_2,\theta'_{uv\geq 0}}\theta'_{uv}\\
&l^-(P_1,P_2)=\sum_{uv\in E':u\in P_1,v\in P_2,\theta'_{uv< 0}}\theta'_{uv}\\
\nonumber&l(P_1,P_2)=l^+(P_1,P_2)+l^-(P_1,P_2)
\end{align}
We use the above values in functions $merge$ and $merge_{\tau}$ that evaluate the gain of merging two paths. Threshold $\tau\leq 1$ constraints the ratio between the positive and the negative part of lifted cost function $l$ that is considered acceptable for merging two paths.
\begin{align}
\forall P_1=&(v_1,\dots,v_n),P_2=(u_1,\dots,u_m)\in \mathcal{P}\\
\nonumber merge&(P_1,P_2)=\begin{cases}
\theta_{v_nu_1}+l(P_1,P_2) &\text{if } v_nu_1\in E\\
\infty &\text{otherwise}
\end{cases}
\end{align}

\begin{align}
\nonumber\forall P_1=(&v_1,\dots,v_n),P_2=(u_1,\dots,u_m)\in \mathcal{P}\\
merge_{\tau}&(P_1,P_2)=\\
\nonumber=&\begin{cases}
\infty &\text{if } v_nu_1\notin E\ \vee \\
 &\ l^+(P_1,P_2)>\tau|l^-(P_1,P_2)|\\
\theta_{v_nu_1}+l(P_1,P_2)&\text{otherwise}
\end{cases}
\end{align}

Algorithm~\ref{alg:cut-ends} is applied on all paths pairs found by Algorithm~\ref{alg:primal-find-cut-ends-candidates}. It inspects whether shortening of one or both paths leads to a feasible connection that ensures a desired objective improvement. It iteratively removes either the last vertex of the first path or the first vertex of the second path and checks if a~connection is possible and how much it costs.

The last part of Algorithm~\ref{alg:post-process-primal} considers merging paths.
We use formula $merge_\tau(P_i,P_j)-out(P_i)-in(P_j)$ to evaluate whether merging two paths is beneficial. Here $in(P_j)$ denotes input cost to the first vertex of $P_j$ and $out(P_i)$ denotes output cost from the last vertex of $P_i$. We state the full formula just for completeness. We set the input and the output costs to zeros in our experiments.
Using $merge_\tau$ ensures that we connect the paths only if the ratio between the positive lifted cost $l^+$ and negative lifted cost $l^-$ between the paths is below the acceptable threshold.



\begin{algorithm}
\caption{Compute-Primal}
\label{alg:primal}
\textbf{Input } Subproblems $\mathcal{S}$, original costs $\theta \in \mathbb{R}^{|V'|+|E|+|E'|}$\\
\textbf{Output }Primal solution $(z,y,y')$
    \begin{algorithmic}[1]
    \State Init-MCF
    \State Obtain primal solution of MCF  $y^{mcf}\in \{0,1\}^{E^{mcf}}$
    \State Set $(z,y)$ accroding to $y^{mcf}$
     \State $y'=$Adjust-Lifted-Solution$(z,y)$
     \State $(z,y,y')=$Local-Search$(z,y,y')$
    \end{algorithmic}
\end{algorithm}

\begin{algorithm}
\caption{Local-Search}
\label{alg:post-process-primal}
\textbf{Input } Input primal solution $z,y,y'$\\
\textbf{Output } Improved primal solution $z,y,y'$
    \begin{algorithmic}[1]
    \State Obtain set of disjoint paths $\mathcal{P}=\{P_1,\dots, P_n\}$ from $y$
    \ForAll{$P\in \mathcal{P}$}
    \State $\mathcal{P}=$Check-Path-Split$(P_i,\mathcal{P})$
    \EndFor
   \State $\mathcal{P}=$Shorten-For-Merge$(\mathcal{P})$
    \While{$true$}
    \State  \begin{align*}(P_1,P_2)=\argmin_{(P_i,P_j)\in \mathcal{P}\times\mathcal{P}}&merge_\tau(P_i,P_j)\\&-out(P_i)-in(P_j)\end{align*}
    \If{$merge_\tau(P_1,P_2)-out(P_1)-in(P_2)<0$}
    \State $\mathcal{P}=\text{Merge-Paths}(P_1,P_2,\mathcal{P})$ 
    \Else
    \State break
    \EndIf
    \EndWhile
    \State $(z,y,y')$=Set-From-Paths$(\mathcal{P})$
\end{algorithmic}
\end{algorithm}

\begin{algorithm}[h]
\caption{Shorten-For-Merge}
\label{alg:primal-find-cut-ends-candidates}
\textbf{Input } Set of paths $\mathcal{P}$\\
\textbf{Output }Updated set of paths $\mathcal{P}$
    \begin{algorithmic}[1]
    \ForAll{$P_1=(v_1,\dots,v_n)\in \mathcal{P}$}
    \State $P=\argmin\limits_{P_2=(u_1,\dots,u_m)\in \mathcal{P}:v_nu_1\in E}merge(P_1,P_2)$
     \State $P'=\argmin\limits_{P_2=(u_1,\dots,u_m)\in \mathcal{P}:v_nu_1\notin E}l(P_1,P_2)$
    \If{$l(P_1,P')<merge(P_1,P)\wedge l(P_1,P')<0$}
    \State $P^*=P'$, $c=l(P_1,P')$
    \Else
    \State $P^*=P$, $c=merge(P_1,P)$
    \EndIf
    \If{pred$[P^*]=\emptyset\vee \text{score}[P^*]>c$}
    \State pred$[P^*]=P_1$, score$[P^*]=c$
    \EndIf
    \EndFor
    \ForAll{$P_2=(u_1,\dots,u_m)\in \mathcal{P}$}
    \If {pred$[P_2]$=$P_1=(v_1,\dots,v_n)\wedge v_nu_1\notin E$}
    \State $\mathcal{P}=$Cut-Ends$(P_1,P_2,\mathcal{P})$
    \EndIf
    \EndFor
\end{algorithmic}
\end{algorithm}


\begin{algorithm}[t]
\caption{Cut-Ends }
\label{alg:cut-ends}
\textbf{Input } $P_1=(v_1,\dots,v_m),P_2=(u_1,\dots,u_m)$, $\mathcal{P}$, $i_{max}$\\
\textbf{Output } New set of paths $\mathcal{P}$
    \begin{algorithmic}[1]
    \State $c_1=\infty, c_2=\infty$
    \While{$i_1+i_2< i_{max}$}
    \State $P'_1=(v_1,\dots,v_{n-i_1})$, $P'_2=(u_{1+i_2},\dots,u_m)$
    \State $P''_1=(v_1,\dots,v_{n-i_1-1})$, $P''_2=(u_{2+i_2},\dots,u_m)$
    \If{$merge(P'_1,P''_2)+ merge(P''_1,P'_2)<\infty$}
    \State \begin{varwidth}[t]{\linewidth} $\alpha_1=merge(P'_1,P''_2)+split(P_1,v_{n-i_1})+split(P_2,u_{1+i_2})$
    \end{varwidth}
    \State \begin{varwidth}[t]{\linewidth} $\alpha_2=merge(P''_1,P'_2)+split(P_1,v_{n-i_1-1})+split(P_2,u_{i_2})$
    \end{varwidth}
    \IIf{$\alpha_1<\alpha_2$} $c_1=i_1-1,c_2=i_2$ 
    \EElse $c_1=i_1,c_2=i_2-1$
    \State break
    \ElsIf{$merge(P'_1,P''_2)<\infty$}
    \State $c_1=i_1-1,c_2=i_2$ 
    \State break
    \ElsIf{$merge(P''_1,P'_2)<\infty$}
    \State  $c_1=i_1,c_2=i_2-1$
    \State break 
    \Else
      \State \begin{varwidth}[t]{\linewidth} $\alpha_1=l(P'_1,P''_2)+split(P_1,v_{n-i_1})+$\\$split(P_2,u_{1+i_2})$
    \end{varwidth}
    \State \begin{varwidth}[t]{\linewidth} $\alpha_2=l(P''_1,P'_2)+split(P_1,v_{n-i_1-1})+$\\$split(P_2,u_{i_2})$
    \end{varwidth}
    \IIf{$\alpha_1<\alpha_2$} $i_2\plusplus$ 
    \EElse $i_1\plusplus$
    \EndIf
    \EndWhile
    \If{$c_1\neq \infty\wedge c_2\neq \infty$}
    \State $P'_1=(v_1,\dots,v_{n-c_1})$,  $P'_2=(u_{1+c_2},\dots,u_m)$
    \If{$merge_\tau(P'_1,P'_2)<\infty$}
    \If{$c_1>0$} 
    \State $(P_{11},P_{12})=$Split-Path$(P_1,v_{n-c_1})$
    \State $\mathcal{P}$.remove$(P_1)$
    \State $\mathcal{P}$.insert$(P_{11})$, $\mathcal{P}$.insert$(P_{12})$
    \EndIf
      \If{$c_2>0$} 
    \State $(P_{21},P_{22})=$Split-Path$(P_2,u_{c_2})$
    \State $\mathcal{P}$.remove$(P_2)$
    \State $\mathcal{P}$.insert$(P_{21})$, $\mathcal{P}$.insert$(P_{22})$
    \EndIf
    \EndIf
    \EndIf
    \State return $\mathcal{P}$
    \end{algorithmic}
\end{algorithm}

\begin{algorithm}[H]
\caption{Check-Path-Split }
\label{alg:split-path}
\textbf{Input } Input path $P$, set of all paths $\mathcal{P}$\\
\textbf{Output } Set of paths $\mathcal{P}$
    \begin{algorithmic}[1]
    \State $v_{m}=\argmax_{v_{j}\in P_V} split(v_{j},P)$
    \If{$split(v_{m},P)<0$} 
    \State $(P_{1},P_{2})=$Split-Path$(P,v_{m})$
    \State $\mathcal{P}$.remove$(P)$, $\mathcal{P}$.insert$(P_{1})$, $\mathcal{P}$.insert$(P_{2})$
    \State $\mathcal{P}=$Check-Path-Split$(P_{1},\mathcal{P})$
    \State $\mathcal{P}=$Check-Path-Split$(P_{2},\mathcal{P})$
    \EndIf
    \State return $\mathcal{P}$
    \end{algorithmic}
\end{algorithm} 

\newpage
\subsection{Global Context Normalization}
\label{appendix:feature_scaling}
Our tracking system employs a global context normalization to obtain accurate features between detections (see Section \ref{sec:cost_classifier}). This section elaborates the implementation details.

Global context normalization puts similarity measurements into global perspective to form more meaningful feature values. For instance, global illumination changes will likely decrease measured appearance similarities
between any pair of detections. Likewise, a scene where all people are far away from the camera will most likely result in less confident appearance similarity measurements. 
In both cases, positive matching pairs should have higher similarities than negative matching pairs but the absolute similarity values are reduced by the global effects.
These and further effects make the interpretation of the similarity in absolute terms less meaningful. 
Global context normalization compensates such effects.

To this end, let $\Omega_{k} = \{\sigma_{vw,k} : vw \in E\}$ comprise all computed similarity measurements for feature $k \in \{\text{Spa},\text{App}\}$ defined in Section \ref{sec:cost_classifier}.
For each feature $k$ and similarity measurement $\sigma_{vw,k} \in \Omega_{k}$, we define  sets $\mathrm{GC}_{i,k}$ with $i \in [5]$. Each set $\mathrm{GC}_{i,k}$ induces two global context normalization features:
$\sigma_{vw,k} \cdot \max(\mathrm{GC}_{i,k})^{-1}$ and $\sigma_{vw,k}^{2} \cdot \max(\mathrm{GC}_{i,k})^{-1}$.

The sets are defined w.\ r.\ t.\ $\sigma_{vw,k} \in \Omega_{k}$ as follows:
\begin{align}
   \mathrm{GC}_{1,k} &= \{s_{vn,k} \in \Omega_{k} : n \in B\} \, . \\
   \mathrm{GC}_{2,k} &= \{s_{mw,k} \in \Omega_{k} : m \in B\}\, . \\
   \mathrm{GC}_{3,k} &= \{s_{vn,k} \in \Omega_{k} : n \in B \text{ and } f_n = f_w\}\, . \\
   \mathrm{GC}_{4,k} &= \{s_{nw,k} \in \Omega_{k} :  n \in B \text{ and } f_n = f_v\}\, . \\
   \mathrm{GC}_{5,k} &= \{s_{mn,k} \in \Omega_{k} :   m,n \in B\}\, .
\end{align}
where $f_x$ denotes the frame of detection $x$ and $B$ the batch as defined in section \ref{sec:cost_classifier}. The sets $\mathrm{GC}_{1,k}$ and $\mathrm{GC}_{2,k}$ result in a normalization of the similarity score $\sigma_{vw,k}$ over all outgoing or incoming edges to $v$ or $w$, respectively. The set $\mathrm{GC}_{3,k}$ results in a normalization over all similarity scores for outgoing edges from $v$ to a detection in frame $f_{w}$. Analogously, the set $\mathrm{GC}_{4,k}$ collects all edges from a detection of frame $f_{v}$ to node $w$. Finally, $\mathrm{GC}_{5,k}$ normalizes the similarity score over all existing scores in the batch $B$.

\subsection{Multi Layer Perceptron (MLP)}
\label{appendix:classifier}
As reported in Section~\ref{sec:cost_classifier} we use a lightweight and scalable MLP to obtain edge costs. We use multiple instances of the same MLP structure. Each MLP is trained on edges that have a specific range of temporal gaps (more details in Section~\ref{sec:cost_classifier}).

The MLP architecture is based on two fully connected (FC) layers. The input is a $22$-dimensional vector (features with corresponding global context normalizations). LeakyReLU activation \cite{maasrectifier} is used for the first FC layer. The final layer (FC) has one neuron, whose output represents the cost value. For training, an additional sigmoid activation is added. The structure of the neural network is visualized in Figure~\ref{fig:classifier}.


\begin{figure}[t]
    \centering
    \includegraphics{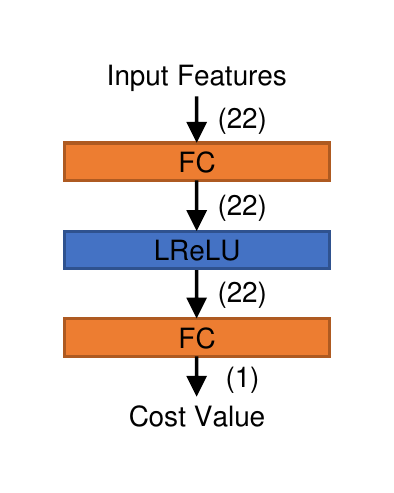}
    \caption{Visualisation of the proposed neural MLP. FC denotes fully connected layers, LReLU denotes LeakyReLU activation \cite{maasrectifier} and values in parenthesis denote the dimension of the corresponding tensors.}
    \label{fig:classifier}
\end{figure}

\subsection{Batch Creation Using Fixed Frame Shifts}
\label{appendix:sampling}
In this section, we elaborate on our batch creation method using fixed frame shifts (see Paragraph \textit{Training} of Section~\ref{sec:cost_classifier}).

A batch $B$ contains a set of edges with corresponding edge costs. Important to note is that we use the same batch creation strategy to form batches for training as well as inference. During training, batches are augmented with corresponding ground truth labels.
A carefully chosen batch creation strategy is crucial to ensure accurate and scalable training and inference. 

In order to obtain accurate predictions by our MLPs (Section~\ref{sec:cost_classifier}), the distribution of a batch should represent the characteristics of the training data. In particular, a batch should comprise edges covering all permissible temporal gaps between detections. Furthermore, the distribution of a batch influences the costs of all edges contained in the batch by the global context normalization (Section \ref{appendix:feature_scaling}). Thus also during inference, a batch should comprise edges covering all permissible temporal gaps between detections. 
It is thus important to employ the same batch creation strategy for training and inference. 

A na\"ive  strategy is thus to define a batch on all frames within a range $f$ up to $f + \Delta f_\text{max}$, where $f$ is a starting frame and $\Delta f_\text{max}$ defines the maximal permissible time gap. During training, one could then sample detections from these frames and randomly create true positive and false positive edges. However, such an approach is not tractable during inference for long sequences with many detections and long time gaps. To see this, consider the  sequence MOT20-05 of the MOT20 dataset~\cite{MOTChallenge20}. It contains $3315$ frames with $226$ detections per frame on average. We use a maximal permissible time gap of $2$ seconds, which correspond to $50$ frames for sequence MOT20-05. The number of edges per batch and for the entire sequence can then be roughly estimated\footnote{With $226$ detections per frame, there are about $226\sum_{i=1}^{49} 226 (50-i) = 62568100$ edges per batch. With $3315$ frames there are more than
$\frac{3315}{50}62568100 = 4148265030$ edges. Here we assumed non-overlapping batches, thereby missing many connections.} with  $63 \cdot 10^{6}$ and  $4 \cdot 10^9$, respectively, which is intractable. 


To decrease the amount of edges per batch while ensuring that batches consists of samples containing all permissible temporal gaps, we adapt batch creation to our needs: For each start frame $f$ of a batch, we subselect the frames to be considered within the range $f,\ldots, f_\text{max}$. During training, we then subsample detections from these frames. During inference, we utilize all detections of these frames to form our batch.

To this end, we define a sequence of frame shifts that is used to create the frame subselection. Using only few frame shift makes the approach more computationally efficient. Yet we must ensure that all edges are computed at least once during inference. That is if $B(f)$ denotes the batch created according to our strategy with starting frame $f$, then $\bigcup_{f} B(f)=E$ must contain all edges.

To ensure that we always cover temporal gaps of up to $2$ seconds, the frame shifts depend on the maximal  permissible temporal gaps (measured in frames).

For a start frame $f$ and $\Delta f_\text{max}=50$, we define the frame shift set $\mathrm{Sh}(\Delta f_\text{max})$ as
\begin{equation}
\mathrm{Sh}(50)=\{0, 1, 2, 3, 4, 5, 6, 7, 8, 17, 26, 35, 44, 50\} \, .
\end{equation}
For $\Delta f_\text{max}=60$, we define the frame shift set $\mathrm{Sh}(\Delta f_\text{max})$ as
\begin{equation}
\longset{\mathrm{Sh}(60)}{
    0, 1, 2, 3, 4, 5, 6, 7, 14, 21, 28, 36, 37, 38, 40, 46, 53, 60
    } 
\end{equation}
Then for each start frame $f$, a batch is created using the frames $\{f+f_{\mathrm{shift}} : f_{\mathrm{shift}} \in \mathrm{Sh}(f_{\mathrm{max}}) \}$.
To ensure that all edges are computed at least once in the inference stage, one need to calculate batches with start frames $f\in\{1-\Delta f_\text{max}, \ldots, \Delta f_\text{max}\}$. 
Compared to the na\"ive batch creation strategy,  our utilized batch creation results in  $226\sum_{i=1}^{13} 226\cdot (14-i) = 4647916$ edges for MOT20-20, using the same assumptions as before. The number of edges to be computed is thus significantly lower.

\subsection{Determining obviously matching and non-matching  detection pairs}
\label{appendix:high_confident}
During the graph construction in Section \ref{sec:graph_construction}, we employ a simply strategy to detect edges that represent obviously matching or obviously non-matching detection pairs. Corresponding edge costs are set such that they induce must-links or cannot-links as soft constraints. Details are described in this section. 

\myparagraph{Obviously non-matching pairs.}
We use optical flow and the object size to calculate the maximal plausible displacement $d_\text{max}(v, w)$ and velocities $\mathrm{v}_{x,\text{max}}(v, w)$ and $\mathrm{v}_{y,\text{max}}(v, w)$ between two detections $v$ and $w$. 
If $v$ is a detection in frame $f_v$ and $w$ a detection in $f_w$, we define \begin{equation}
    d_\text{max}(v, w) = k_d + \sum_{i=f_v}^{f_w - 1} \max(O(i, {i+1}))
\end{equation}
where $\max(O(i, i+1))$ is the maximal magnitude of the optical flow between the frames $i$ and $i+1$ and $k_d$ is a security tolerance, which we set to $175$ pixel according to experiments on the training data. If the distance $d(v, w)$ between the center points of detections $v$ and $w$  is greater than $d_\text{max}(v,w)$, the detection pair given by $vw$ is regarded as obviously non-matching. 
We also assume that the maximal velocity of a person is limited. With the height $h$ and width $b$ of the bounding boxes, we define the maximal velocities \begin{equation}
    \mathrm{v}_{x,\text{max}}(v, w) = b \cdot k_e
\end{equation}
and \begin{equation}
    \mathrm{v}_{y,\text{max}}(v, w) = h \cdot \frac{k_e}{2}
\end{equation}
in $x$ and $y$-direction. The factor $k_e$ is set to $k_e=0.3$ according to experiments on the training data. In sequences with moving cameras, the factor is increased to $k_e=0.8$ and decreased to $k_e=0.12$ in sequences with static camera and aerial viewpoint. If the velocity $\mathrm{v}_x(v, w)$ or $\mathrm{v}_y(v, w)$ between the detections $v$ and $w$  is greater than corresponding $\mathrm{v}_{x,\text{max}}(v,w)$ or $\mathrm{v}_{y,\text{max}}(v,w)$, the connection given by $vw$ is regarded as obviously non-matching. To avoid wrong interpretations caused by noise in the velocity calculation, we set $k_e$ to a high value for detection pairs with small temporal distances. 

If a detection pair $vw$ is regarded as obviously non-matching, we induce a cannot-link soft constraint on $vw$ by setting its costs to a negative value with a high absolute value, \ie  $c_{vw} \ll 0$. 

\myparagraph{Obviously matching pairs.}
We induce must-link soft constraints on edges, considering only connections between consecutive frames.

An edge $vw \in E$ with an appearance similarity score $\sigma_{vw,\text{App}}$ close to the maximal achievable  score (which is $2$ in our implementation) is considered an obviously matching pair. We infer from the training data $k_{s} = 1.95$ as threshold, so that edges with $\sigma_{vw,\text{App}} > k_{s}$ are regarded as obviously matching by setting their costs accordingly.

In addition, if two boxes between consecutive frames have a high overlap, we induce a must-link soft constraint on the corresponding edge. In more detail, the intersection over union between the detections must be at least $0.5$. However, such spatial measurements are affected by camera motions, thus potentially leading to wrong interpretations. In order to induce link soft-constraints only in confident cases,  we employ a simple camera motion compensation beforehand. To this end, we calculate for a considered edge the mean magnitude given by the optical flow between the frames of the respective detections. Before we compute the intersection over union, we translate one of the boxes in horizontal direction by the approximated camera motion, if this decreases the intersection over union. This procedure lowers the likelihood of creating wrong must-links caused by camera motion. Camera motion compensation needs to be applied 
only to sequences filmed from a non-static camera. Optical flow can be used to detect if a scene has a static camera setup.  Note that MOT20 contains only scenes filmed from a static camera.

\subsection{Inference}
\label{sec:inference}
\subsubsection{Interval Solution}\label{sec:intervalSolution}
This section explains how we solve MOT20 using solutions of its intervals. First, we solve the problem in independent subgraphs containing detections and edges from time intervals $[il+1,(i+1)l]$ for  $i\in\{0,1\dots,n\}$, where $l=3\cdot t_\text{max}$, and  $t_\text{max}$ is the maximum temporal  edge length.
 We fix resulting trajectories in the centres of intervals, namely in time intervals $[il+t_{max}+1,(i+1)l-t_{max}]$ for $i \in \{1,\dots,n-1 \}$. Second, we solve the problem in time intervals covering the end of one initial interval and the beginning of the subsequent interval while allowing connections with the fixed trajectory fragments. The cost of a connection between a detection and a trajectory fragment is obtained as the sum of costs between the unassigned detection and the detections within the trajectory fragment.

\subsubsection{Post-Processing}
\label{appendix:post_processing}
We perform post-processing on the result provided by our solver.
As it is common, we recover missing detections within a computed trajectory using linear interpolation. 
We also correct  wrong connections that mostly stem from situations which currently cannot be correctly resolved by current features, independent of the tracking system, \eg pairwise features computed  over very long temporal gaps and ambiguous feature information due to multiple people appearing within one detection box. 

Consequently, we apply these strategies only to MOT20. As soon as one of these methods detects a connection as false, the corresponding trajectory is split into two new trajectories. Table~\ref{tab:postprocessing} shows the effect of the post-processing on MOT20 train set.
\begin{center}
\hspace{-4pt}\begin{table}[]
\caption{Results with and w/o post-processing on MOT20 train set.}
\label{tab:postprocessing}
\small{
\begingroup
\setlength{\tabcolsep}{5.5pt} 
\begin{tabularx}{\columnwidth}{ccccccc}
\toprule
        & MOTA$\uparrow$ & IDF1$\uparrow$ & TP$\uparrow$ & FP$\downarrow$ & FN $\downarrow$ & IDS $\downarrow$ \\ 
\midrule
w       & \textbf{74.4}  & 62.8 & \textbf{863203}  & 15778 & \textbf{271411} & \textbf{3511}    \\ 
w/o     & 72.3  & \textbf{63.6}& 833473  & \textbf{8462} & 301141 & 4201   \\ 
\bottomrule
\end{tabularx}
\endgroup
}
\end{table}
\end{center}
We noticed an accumulation of wrong connections, where one end of a trajectory (\ie its first or last detection) is connected to the successive detection using a skip-connection over a long temporal gap, and the connection is wrong.
This might be explained by a  combination of misleading visual features (\eg caused by partial occlusion), not very informative spatial features (due to the high temporal gap) and missing lifted edges, because only one detection is existent at the end of the trajectory. To keep only reliable connections, we split trajectories if only one detection is existing at the start or end of a trajectory, followed by a temporal gap of at least 10 frames.

We also handle cases at the borders of a tracking scene. If a person leaves the scene, and another person enters the scene 
at a position close by
after a short time, the tracking system sometimes joins the trajectories of the two persons. We explain this behaviour by the high visual similarity between partially visual persons at image borders. If a person leaves a scene, normally just one leg, one arm or the head is visible for some frames. However, a single body party is not very discriminative and thus can look similar to a body part of another person. In addition, the spatio-temporal information will indicate a likely match in this scenario. Due to the temporal gap, no or not many meaningful lifted edges are existing which could give contradicting signals. To eliminate this kind of errors, we split trajectories between two detections, if the temporal gap is greater or equal to 10 and both detections are located at the image border.

For all detections which are connected over a temporal time gap greater or equal to 10 frames (skip edges), we perform a motion sanity check and split the corresponding trajectory if its motion is not plausible. To this end, we first determine the highest velocity of obviously correct trajectories, or parts of trajectories with a minimal length of 10 frames (to avoid random noise issues). Then,
we split connections at these skip edges, if their velocity is higher than the determined velocity.

Additionally, we verify that motion between trajectory parts are consistent. To this end, we consider the motion described by a trajectory, using the part before a connection, and compare it with the resulting motion described by the trajectory, using the part after the connection. If the velocities differ by a factor of 5 or greater or if the angle differs more than $\pi/2$, the trajectory is split.

\subsection{Solver Runtime}
\label{appendix:runtime}
Our solver can compute one interval of MOT20 (150 frames) or an entire sequence of MOT17 with  less than 20GB RAM, using a single CPU core.

Subsequently, we analyze the runtime in detail, by performing a theoretical analysis in Section~\ref{sec:theoretical_runtime}, 
followed by a comparison with an existing LDP solver in Section~\ref{sec:runtime_compared_with_lift}.

\subsubsection{Computational Complexity}
\label{sec:theoretical_runtime}
The solver terminates if one of the following conditions is satisfied. Either the lower bound is equal to the objective value of the best primal solution, i.e.\ optimum has been found. Or the maximum number of message passing iterations has been reached. The optimum was not found in our experiments, so the letter condition applied.

The runtime of the solver is, therefore, determined by the input parameter denoting the maximum number of iterations. The dependence on number of iterations is not exactly linear because the problem size grows with the number of path and cut subproblems added to set of subproblems $\mathcal{S}$ via cutting plane separation (see Sections~\ref{sec:cutting-planes-path-subproblems} and~\ref{sec:cutting-planes-cut-subproblems}).

An overview of the whole solver run and the tasks performed within one its iteration is given in Section~\ref{sec:message-passing}. The runtime of the tasks is given by the runtime of computing min-marginals of the subproblems.


We discuss the complexity of the used algorithms in the paragraphs bellow. They all have a polynomial complexity. Therefore, the overall runtime of the solver is polynomial too.

In order to compute messages between the inflow and the outflow subproblem, we apply Algorithm~\ref{ap:alg:all-lifted-mm}. Min-marginals for messages between the path subproblems and the in/outflow subproblems are obtained for one shared variable at the time.  The same holds for exchanging messages between the cut subproblems and the in/outflow subproblems. This is done by calling restricted versions of optimization algorithms of the path and cut subproblems (Algorithms~\ref{alg:path-subproblem-optimization} and \ref{alg:cut-subproblem-optimization}). For in/outflow subproblems, we use one call of Algorithm~\ref{alg:outflow-minimization} followed by either Algorithm~\ref{ap:alg:outflow-minimization} or Algorithm~\ref{ap:alg:BU-main} limited to single variable reparametrization.

\myparagraph{Messages between inflow and outflow subproblems. }Messages between inflow and outflow subproblems are realized on lifted edge variables by calling Algorithm~\ref{ap:alg:all-lifted-mm}. Many subroutines employ full or partial DFS on all nodes reachable from the central node within the relevant time gap. In these cases, we use precomputed node order instead of complete DFS as described in the last paragraph of Section~\ref{sec:outflow-subproblem-min-marginals}. One call of the full DFS (Algorithms~\ref{alg:outflow-minimization} and~\ref{ap:alg:BU-main}) requires to process all vertices reachable from $v$ within maximal time gap for edge length $(\Delta f_{max})$. This comprises $L_{max}$ video frames (we use $L_{max}=50$ or $60$). Let us denote by $n$ the maximum number of detections in one frame. The complete DFS processes maximally $nL_{max}$ vertices. Incomplete DFS used in Algorithm~\ref{ap:alg:outflow-minimization} processes in each step vertices in $L$ layers. In the worst case, this is done for all relevant layers $L=1,\dots,L_{max}$.  Processing one vertex requires to check its neighbors in the base graph. Their amount is bounded by $K L_{max}$ where $K=3$. See Sparsification paragraph in Section~\ref{sec:experiments}. Putting all together, the complexity of Algorithm~\ref{ap:alg:all-lifted-mm} for one subproblem is $\mathcal{O}(nL_{max}^3)$. We have two subproblems for each (lifted) graph vertex, yielding complexity $\mathcal{O}(|V'|nL_{max}^3)$ for sending messages between all inflow and outflow subproblems in one message passing iteration.

\myparagraph{Messages from path subproblems.} Obtaining min marginal for one edge variable of a path subproblem requires two calls of restricted  Algorithm~\ref{alg:path-subproblem-optimization} whose complexity is linear in the number of path edges. So, min-marginals for all path edges are obtained in $\mathcal{O}(|P|^2)$.

\myparagraph{Messages from cut subproblems. } Min marginal of one variable of a cut subproblems is obtained by adjusting its optimization Algorithm~\ref{alg:cut-subproblem-optimization}. The complexity is given by the complexity of the employed linear assignment problem which can be solved in polynomial time.  

\myparagraph{Cutting plane procedures.}  The cutting plane algorithms are called each 20 iterations. We allow to add maximally $0.5 \cdot |\mathcal{S}_0|$ new factors during one separation call, where $\mathcal{S}_0$ is the initial set of subproblems containing only inflow and outflow factors. So it holds, $|\mathcal{S}_0|=2|V'|$. Once added, the subproblems influence the runtime via taking part in the message passing (see Section~\ref{sec:message-passing}). Cutting plane itself (Sections~\ref{sec:cutting-planes-path-subproblems} and~\ref{sec:cutting-planes-cut-subproblems}) contains sorting of subsets of base or lifted edges which has complexity $\mathcal{O}(|E^-|\log |E^-|)$ (resp. $\mathcal{O}(|E^+|\log |E^+|)$). The other algorithms run in quadratic time w.r.t.\ number of vertices within relevant time distance to the currently processed edge.

\myparagraph{Primal solution. }We compute new primal solution in each five iterations. We use Algorithm~\ref{alg:mcf-init} for obtaining base edge costs. Then, we use successive shortest paths algorithm for solving  minimum cost flow problem and finally local search heuristic given by Algorithm~\ref{alg:post-process-primal}, see Section~\ref{sec:primal-rounding}. The complexity of solving MCF is the complexity of successive shortest path algorithm which is polynomial. Local search  heuristic requires to compute and update cummulative costs between candidate paths. They can be computed in time linear in the number of lifted edges $\mathcal{O}(|E'|)$.  MCF costs are obtained by calling Algorithm~\ref{alg:outflow-minimization}. Its complexity is discussed above.

\subsubsection{Comparison with LifT}
\label{sec:runtime_compared_with_lift}
We perform several experiments for comparing our solver with an optimal solver for lifted disjoint paths LifT~\cite{hornakova2020lifted}.

\myparagraph{LifT global solution vs. two-step procedure. }LifT is based on ILP solver Gurobi. It solves the LDP problem optimally. However, it is often not able to solve the problem on the full graphs. Therefore, LifT uses a two-step procedure. First, solutions are found on  small time intervals to create tracklets. Second, the problem is solved on tracklets. This approach simplifies the problem significantly but the delivered solutions are not globally optimal anymore. We have observed that using our input costs, LifT is able to solve some problem sequences globally without the two step-procedure. Therefore, we compare our solver with  LifT using both the two-step procedure and the global solution. 

\myparagraph{Influence of input costs. } Our input data contain many soft constraints for obviously matching pairs of detections. Those are edges with negative costs significantly higher in absolute value than other edges costs. LifT finds an initial feasible solution using only base edges. This solution may be already very good due to the costs of obviously matching pairs. Moreover, Gurobi contains a lot of efficient precomputing steps, so it can recognize that the respective variables should be active in the optimum and reduce the search space.  

\myparagraph{Parameters. } We adjust parameters of our solver to work with comparable data as LifT. For instance, we do not set cost of any base edges to zero (as described in Section~\ref{sec:graph_construction}) because LifT does not enable this option. So, the costs of overlapping base and lifted edges are duplicated as opposed to the most of other experiments. Moreover, if there is no edge between two detections within the maximal time distance in the input data, we can add a~lifted edge with high positive cost for such pair in ApLift. This is useful for reducing the input size for MOT20 dataset. LifT does not have this option. Therefore, we disable this option for ApLift too.

\myparagraph{Subsequences of MOT20. } We present a comparison between our solver and LifT using two-step procedure on an example subsequence of MOT20-01 in Table~\ref{tabel:runtime} in the main text. On that subsequence, our solver is faster and has even slightly better IDF1 score than LifT. 
In Table~\ref{tabel:runtime02}, we present a comparison on first $n$ frames  of sequence MOT20-02 where LifT finds solutions faster than our solver using many iterations.
We assume that this is caused by the input costs that are convenient for Gurobi, see the discussion above. 

\myparagraph{Train set of MOT17.} We compare our solver with LifT on global training sequences of MOT17. That is, we do not use two-step procedure. Therefore, LifT finds the globally optimal solution if it finishes successfully. 
The runtime of LifT is exponential in general and it can be often killed because of memory consumption if run on global sequences. Therefore, we perform these experiments on a machine having 2000 GB RAM and multiple CPUs each having 64 cores.

The results are in Table~\ref{tabel:runtimeMOT17}. Asterisk in LifT time column indicate that the problem cannot be finished. Some of the processes are killed by the system because of too much memory consumption. Some processes do not finish within more than 27 hours. Moreover, LifT often occupied up to 30 cores for solving one sequence. Our solver uses only one core.  In the cases when LifT does not finish, we evaluate the best feasible solution found by LifT. Those were typically the initial feasible solutions. That is, the solutions that ignore the lifted edges. Obtaining the initial solutions for these difficult instances took between 1700 and 4600 seconds.
The numbers in brackets relate our results to LifT results. The time column provides the ratio between our time and LifT time. The IDF1 column presents the difference between ApLift and LifT.

\begin{table}[h]
\small{
\caption{Runtime and IDF1 comparison of LDP solvers: ApLift (ours)  with 6, 11, 31 and 51 iterations and LifT\cite{hornakova2020lifted} (two step procedure) on first $n$ frames of sequence \textit{MOT20-01} from MOT20.}
\label{tabel:runtime02}

\begin{tabular}{c l c c c c c}
\toprule

$n$ & Measure      & LifT & Our6 & Our11 & Our31 & Our51   \\ \toprule
\multirow{2}{*}{50}     & IDF1$\uparrow$         & $\mathbf{83.4}$ &  $\mathbf{83.4}$   &  $\mathbf{83.4}$    &  $\mathbf{83.4}$    &  $\mathbf{83.4}$   \\ 
     & time {[}s{]} & $62$  & $4$      &$ 7$       & $25$      & $46$     \\ \midrule
\multirow{2}{*}{100}    & IDF1$\uparrow$         & $\mathbf{80.6}$ & $79.9$   & $79.9$    & $79.9$    & $79.9$   \\ 
    & time {[}s{]} & $124$  & $30$     &$ 54$      & $182$      & $360$    \\ \midrule
\multirow{2}{*}{150}    & IDF1$\uparrow$         & $\mathbf{78.7}$ & $76.8$   & $76.8$    & $76.8$    & $76.8$   \\ 
    & time {[}s{]} & $222$ & $61$     & $110$      & $378$     & $780$    \\ \midrule
\multirow{2}{*}{200}    & IDF1$\uparrow$         & $\mathbf{77.6}$& $75.8$   & $75.8$    & $75.8$    & $75.8$  \\ 
    & time {[}s{]} & $354$ & $95$     & $177$      & $604$     & $1195$   \\ \bottomrule
\end{tabular}
}
\end{table}

\subsection{Qualitative Results}\label{sec:qualitative-results}
Figure~\ref{fig:mot2} and Figure~\ref{fig:mot4} show qualitative tracking results from the  MOT20~\cite{MOTChallenge20} and MOT17~\cite{MOTChallenge20} datasets. Comparing the samples, it becomes apparent that the density of objects in MOT20 is much higher than in MOT17. The sequence MOT20-04 (Figure~\ref{fig:mot2}) has an average density of $178.6$ objects per frame and sequence MOT17-12 (Figure~\ref{fig:mot4}) only $9.6$. The high density in MOT20 results in very crowded groups of persons which are occluding each other completely or partially. Accordingly, appearance information are ambiguous, leading to less discriminative edge costs.
An additional challenge arises due to the distance between the  camera and the persons, as well as global illumination changes in some sequences. The images shown in Figure~\ref{fig:mot2} are captured in a temporal distance of 40 frames (\ie $1.6$ seconds) and the illumination changes heavily. This leads to appearance changes within a  short time, which makes re-identification challenging. For instance, the  person with id $666$ (top right corner) in Figure~\ref{fig:mot2} is wearing a red scarf and a beige jacket. Only a few frames later, the person is barely visible and colors  have changed.

Despite these challenges, our system delivers accurate tracking results, as can be seen from the result images. Also the combinatorial and computational challenge in computing optimal trajectories for MOT20, considering for each detections all possible connections within a $50$ frame range becomes apparent. 

Result video for all test sequences can be obtain from the official evaluation server, for MOT15\footnote{\scriptsize{ \url{https://motchallenge.net/method/MOT=4031&chl=2}}}, MOT16\footnote{\scriptsize{\url{https://motchallenge.net/method/MOT=4031&chl=5}}}, MOT17\footnote{\scriptsize{ \url{https://motchallenge.net/method/MOT=4031&chl=10}}}, and MOT20\footnote{ \scriptsize{\url{https://motchallenge.net/method/MOT=4031&chl=13}}}.


\subsection{Tracking Metrics}\label{sec:tracking_metrics}
A detailed evaluation of our proposed MOT system in terms of tracking metrics for all sequences of the datasets MOT20~\cite{MOTChallenge20} and MOT17~\cite{MOT16} are reported in Table \ref{tab:mot metrics}. 
Evaluations on the test set are performed by the official benchmark evaluation server at \href{https://motchallenge.net}{https://motchallenge.net} where our test results are reported as well. The tracking method for training sequences are trained with leave-one-out strategy to avoid overfitting on the corresponding training sequence.

\clearpage

\begin{table*}
\center
\small{
\caption{Runtime and IDF1 comparison of LDP solvers: ApLift (ours)  with 6, 11, 31, 51 and 101 iterations and globally optimal (one step)
LifT\cite{hornakova2020lifted} on MOT17 train. Numbers in parenthesis in the time column show the difference between the solvers, in the IDF1 column the ratio between Lift and ApLift.}
\label{tabel:runtimeMOT17}
\begin{tabular}{l  c c  c c  c c  c c  c c  c c}
\toprule
 & \multicolumn{2}{c}{LifT} & \multicolumn{2}{c}{Ours-6} & \multicolumn{2}{c}{Ours-11} & \multicolumn{2}{c}{Ours-31} & \multicolumn{2}{c}{Ours-51} & \multicolumn{2}{c}{Ours-101}\\
Sequence Name & Time$\downarrow$ & IDF1$\uparrow$ & Time$\downarrow$ & IDF1$\uparrow$ & Time$\downarrow$ & IDF1$\uparrow$ & Time$\downarrow$ & IDF1$\uparrow$ & Time$\downarrow$ & IDF1$\uparrow$ & Time$\downarrow$ & IDF1$\uparrow$\\ \toprule
\multirow{2}{*}{02-DPM} &$ 7324 $&$ \textbf{49.4} $&$ 94 $&$ 47.4 $&$ 157 $&$ 47.4 $&$ 513 $&$ 49.1 $&$ 989 $&$ 49.1 $&$ 2415 $&$ 49.1 $\\
 &$ (1.0) $&$ (0.0) $&$ (0.01) $&$ (-2.00) $&$ (0.02) $&$ (-2.00) $&$ (0.07) $&$ (-0.30) $&$ (0.14) $&$ (-0.30) $&$ (0.33) $&$ (-0.30)$ \\  \midrule
\multirow{2}{*}{02-FRCNN} &$ 4073 $&$ 54.7 $&$ 97 $&$ \mathbf{54.9 }$&$ 161 $&$ \mathbf{54.9 } $&$ 526 $&$ \mathbf{54.9 } $&$ 1021 $&$ \mathbf{54.9 } $&$ 2503 $&$ \mathbf{54.9 } $\\
 &$ (1.0) $&$ (0.0) $&$ (0.02) $&$ (0.20) $&$ (0.04) $&$ (0.20) $&$ (0.13) $&$ (0.20) $&$ (0.25) $&$ (0.20) $&$ (0.61) $&$ (0.20)$ \\  \midrule
\multirow{2}{*}{02-SDP} &$ 7795 $&$ \mathbf{56.7} $&$ 131 $&$ 55.0 $&$ 219 $&$ 55.0 $&$ 717 $&$ 55.0 $&$ 1410 $&$ 55.0 $&$ 3685 $&$ 55.0 $\\
 &$ (1.0) $&$ (0.0) $&$ (0.02) $&$ (-1.70) $&$ (0.03) $&$ (-1.70) $&$ (0.09) $&$ (-1.70) $&$ (0.18) $&$ (-1.70) $&$ (0.47) $&$ (-1.70)$ \\  \midrule
\multirow{2}{*}{04-DPM} &$ * $&$ \mathbf{75.4} $&$ 449 $&$ 74.7 $&$ 756 $&$ 74.7 $&$ 2220 $&$ 74.7 $&$ 3929 $&$ 75.0 $&$ 8578 $&$ 75.0 $\\
 &$ (*) $&$ (0.0) $&$ (*) $&$ (-0.70) $&$ (*) $&$ (-0.70) $&$ (*) $&$ (-0.70) $&$ (*) $&$ (-0.40) $&$ (*) $&$ (-0.40)$ \\  \midrule
\multirow{2}{*}{04-FRCNN} &$ 4889 $&$ \mathbf{79.2} $&$ 383 $&$ 78.1 $&$ 644 $&$ 78.1 $&$ 1811 $&$ 76.3 $&$ 3111 $&$ 78.2 $&$ 6565 $&$ 78.2$ \\
 &$ (1.0) $&$ (0.0) $&$ (0.08) $&$ (-1.10) $&$ (0.13) $&$ (-1.10) $&$ (0.37) $&$ (-2.90) $&$ (0.64) $&$ (-1.00) $&$ (1.34) $&$ (-1.00)$ \\  \midrule
\multirow{2}{*}{04-SDP} &$ * $&$ \mathbf{82.3} $&$ 499 $&$ 78.0 $&$ 839 $&$ 78.0 $&$ 2441 $&$ 78.0 $&$ 4294 $&$ 77.7 $&$ 9269 $&$ 79.9 $\\
 &$ (*) $&$ (0.0) $&$ (*) $&$ (-4.30) $&$ (*) $&$ (-4.30) $&$ (*) $&$ (-4.30) $&$ (*) $&$ (-4.60) $&$ (*) $&$ (-2.40) $\\  \midrule
\multirow{2}{*}{05-DPM} &$ 535 $&$ \mathbf{65.0} $&$ 10 $&$ 62.6 $&$ 15 $&$ 62.6 $&$ 57 $&$ 63.5 $&$ 116 $&$ 63.5 $&$ 298 $&$ 63.5$ \\
 &$ (1.0) $&$ (0.0) $&$ (0.02) $&$ (-2.40) $&$ (0.03) $&$ (-2.40) $&$ (0.11) $&$ (-1.50) $&$ (0.22) $&$ (-1.50) $&$ (0.56) $&$ (-1.50)$ \\  \midrule
\multirow{2}{*}{05-FRCNN} &$ 514 $&$ \mathbf{66.6} $&$ 10 $&$ 63.8 $&$ 15 $&$ 63.8 $&$ 57 $&$ 64.0 $&$ 118 $&$ 63.9 $&$ 315 $&$ 65.6 $\\
 &$ (1.0) $&$ (0.0) $&$ (0.02) $&$ (-2.80) $&$ (0.03) $&$ (-2.80) $&$ (0.11) $&$ (-2.60) $&$ (0.23) $&$ (-2.70) $&$ (0.61) $&$ (-1.00) $\\  \midrule
\multirow{2}{*}{05-SDP} &$ 604 $&$ \mathbf{67.9} $&$ 11 $&$ \mathbf{67.9} $&$ 18 $&$ \mathbf{67.9} $&$ 67 $&$ 67.1 $&$ 137 $&$ 67.1 $&$ 364 $&$ 67.6 $\\
 &$ (1.0) $&$ (0.0) $&$ (0.02) $&$ (0.00) $&$ (0.03) $&$ (0.00) $&$ (0.11) $&$ (-0.80) $&$ (0.23) $&$ (-0.80) $&$ (0.60) $&$ (-0.30) $\\  \midrule
\multirow{2}{*}{09-DPM} &$ 6692 $&$ \mathbf{67.5} $&$ 42 $&$ 66.4 $&$ 70 $&$ 66.4 $&$ 232 $&$ \mathbf{67.5} $&$ 480 $&$ \mathbf{67.5} $&$ 1281 $&$ \mathbf{67.5} $\\
 &$ (1.0) $&$ (0.0) $&$ (0.01) $&$ (-1.10) $&$ (0.01) $&$ (-1.10) $&$ (0.03) $&$ (0.00) $&$ (0.07) $&$ (0.00) $&$ (0.19) $&$ (0.00) $\\  \midrule
\multirow{2}{*}{09-FRCNN} &$ 11888 $&$ \mathbf{68.2} $&$ 37 $&$ \mathbf{68.2} $&$ 61 $&$ \mathbf{68.2} $&$ 201 $&$ \mathbf{68.2} $&$ 407 $&$ \mathbf{68.2} $&$ 1095 $&$ \mathbf{68.2} $\\
 &$ (1.0) $&$ (0.0) $&$ (0.00) $&$ (0.00) $&$ (0.01) $&$ (0.00) $&$ (0.02) $&$ (0.00) $&$ (0.03) $&$ (0.00) $&$ (0.09) $&$ (0.00) $\\  \midrule
\multirow{2}{*}{09-SDP} &$ 1462 $&$ \mathbf{68.6} $&$ 44 $&$ 67.1 $&$ 74 $&$ 67.1 $&$ 247 $&$ 68.5 $&$ 512 $&$ 68.5 $&$ 1443 $&$ 68.5 $\\
 &$ (1.0) $&$ (0.0) $&$ (0.03) $&$ (-1.50) $&$ (0.05) $&$ (-1.50) $&$ (0.17) $&$ (-0.10) $&$ (0.35) $&$ (-0.10) $&$ (0.99) $&$ (-0.10) $\\  \midrule
\multirow{2}{*}{10-DPM} &$ * $&$ 66.0 $&$ 279 $&$ \mathbf{68.0} $&$ 466 $&$ \mathbf{68.0} $&$ 1524 $&$ 66.8 $&$ 3087 $&$ 67.0 $&$ 9478 $&$ 67.9 $\\
 &$ (*) $&$ (0.0) $&$ (*) $&$ (2.00) $&$ (*) $&$ (2.00) $&$ (*) $&$ (0.80) $&$ (*) $&$ (1.00) $&$ (*) $&$ (1.90) $\\  \midrule
\multirow{2}{*}{10-FRCNN} &$ * $&$ 65.2 $&$ 310 $&$ 68.8 $&$ 511 $&$ 68.5 $&$ 1689 $&$ \mathbf{69.4 }$&$ 3428 $&$ \mathbf{69.4 } $&$ 10743 $&$ \mathbf{69.4 } $\\
 &$ (*) $&$ (0.0) $&$ (*) $&$ (3.60) $&$ (*) $&$ (3.30) $&$ (*) $&$ (4.20) $&$ (*) $&$ (4.20) $&$ (*) $&$ (4.20) $\\  \midrule
\multirow{2}{*}{10-SDP} &$ * $&$ 65.4 $&$ 379 $&$ 67.0 $&$ 630 $&$ 67.0 $&$ 2090 $&$ 67.4 $&$ 4294 $&$ 67.1 $&$ 13379 $&$ \mathbf{69.8} $\\
 &$ (*) $&$ (0.0) $&$ (*) $&$ (1.60) $&$ (*) $&$ (1.60) $&$ (*) $&$ (2.00) $&$ (*) $&$ (1.70) $&$ (*) $&$ (4.40) $\\  \midrule
\multirow{2}{*}{11-DPM} &$ 1991 $&$ \mathbf{76.3} $&$ 60 $&$ \mathbf{76.3} $&$ 99 $&$ \mathbf{76.3} $&$ 335 $&$\mathbf{76.3} $&$ 672 $&$ \mathbf{76.3} $&$ 1672 $&$ \mathbf{76.3} $\\
 &$ (1.0) $&$ (0.0) $&$ (0.03) $&$ (0.00) $&$ (0.05) $&$ (0.00) $&$ (0.17) $&$ (0.00) $&$ (0.34) $&$ (0.00) $&$ (0.84) $&$ (0.00) $\\  \midrule
\multirow{2}{*}{11-FRCNN} &$ 2382 $&$ \mathbf{78.3} $&$ 68 $&$ \mathbf{78.3} $&$ 113 $&$ \mathbf{78.3} $&$ 366 $&$ \mathbf{78.3} $&$ 729 $&$ \mathbf{78.3} $&$ 1799 $&$ \mathbf{78.3} $\\
 &$ (1.0) $&$ (0.0) $&$ (0.03) $&$ (0.00) $&$ (0.05) $&$ (0.00) $&$ (0.15) $&$ (0.00) $&$ (0.31) $&$ (0.00) $&$ (0.76) $&$ (0.00) $\\  \midrule
\multirow{2}{*}{11-SDP} &$ 3195 $&$ 80.0 $&$ 68 $&$ 79.8 $&$ 113 $&$ 79.8 $&$ 370 $&$ \mathbf{80.1} $&$ 748 $&$ 80.0 $&$ 2057 $&$ 80.0 $\\
 &$ (1.0) $&$ (0.0) $&$ (0.02) $&$ (-0.20) $&$ (0.04) $&$ (-0.20) $&$ (0.12) $&$ (0.10) $&$ (0.23) $&$ (0.00) $&$ (0.64) $&$ (0.00) $\\  \midrule
\multirow{2}{*}{13-DPM} &$ * $&$ 62.8 $&$ 152 $&$ \mathbf{66.8} $&$ 252 $&$ \mathbf{66.8} $&$ 944 $&$ \mathbf{66.8} $&$ 2008 $&$ \mathbf{66.8} $&$ 6340 $&$ 65.7 $\\
 &$ (*) $&$ (0.0) $&$ (*) $&$ (4.00) $&$ (*) $&$ (4.00) $&$ (*) $&$ (4.00) $&$ (*) $&$ (4.00) $&$ (*) $&$ (2.90) $\\  \midrule
\multirow{2}{*}{13-FRCNN} &$ * $&$ 62.5 $&$ 217 $&$ \mathbf{69.8} $&$ 351 $&$ \mathbf{69.8} $&$ 1331 $&$\mathbf{69.8} $&$ 2942 $&$ 67.7 $&$ 9813 $&$ 66.2 $\\
 &$ (*) $&$ (0.0) $&$ (*) $&$ (7.30) $&$ (*) $&$ (7.30) $&$ (*) $&$ (7.30) $&$ (*) $&$ (5.20) $&$ (*) $&$ (3.70) $\\  \midrule
\multirow{2}{*}{13-SDP} &$ * $&$ 64.5 $&$ 196 $&$ \mathbf{66.8} $&$ 326 $&$ \mathbf{66.8} $&$ 1237 $&$ \mathbf{66.8} $&$ 2698 $&$ 66.2 $&$ 8954 $&$ 65.6 $\\
 &$ (*) $&$ (0.0) $&$ (*) $&$ (2.30) $&$ (*) $&$ (2.30) $&$ (*) $&$ (2.30) $&$ (*) $&$ (1.70) $&$ (*) $&$ (1.10) $\\  \midrule
\multirow{2}{*}{OVERALL} &$ * $&$ \mathbf{70.7} $&$ 168 $&$ 70.3 $&$ 280 $&$ 70.3 $&$ 904 $&$ 70.2 $&$ 1768 $&$ 70.3 $&$ 4859 $&$ \mathbf{70.7} $\\
 &$ (*) $&$ (0.0) $&$ (*) $&$ (-0.40) $&$ (*) $&$ (-0.40) $&$ (*) $&$ (-0.50) $&$ (*) $&$ (-0.40) $&$ (*) $&$ (0.00) $\\  \bottomrule
\end{tabular}
}
\end{table*}

\begin{table*}
\center
\small{
\caption{Evaluation results for training and test sequences for datasets MOT17~\cite{MOT16} and MOT20~\cite{MOTChallenge20}} 
\begin{tabular}{c l c c c c c c c c }
\toprule
& Sequence & MOTA$\uparrow$ & IDF1$\uparrow$ & MT$\uparrow$ & ML$\downarrow$ & FP$\downarrow$ & FN$\downarrow$ & IDS$\downarrow$ & Frag. $\downarrow$\\
\toprule
\parbox[t]{3mm}{\multirow{3}{*}{\rotatebox[origin=c]{90}{MOT20 Train\ }}}
& MOT20-01 & 65.8 & 62.0 & 31 & 10 & 180 & 6512 & 109 & 87\\
& MOT20-02 & 62.3 & 55.1 & 108 & 18 & 1393 & 56420 & 548 & 534\\
& MOT20-03 & 80.4 & 76.1 & 427 & 66 & 5427 & 55552 & 623 & 591\\
& MOT20-05 & 74.6 & 57.8 & 643 & 115 & 8778 & 152927 & 2231 & 2063\\
& OVERALL & 74.4 & 62.8 & 1209 & 209 & 15778 & 271411 & 3511 & 3275\\
\midrule
\parbox[t]{3mm}{\multirow{3}{*}{\rotatebox[origin=c]{90}{MOT20 Test\ \ }}}
& MOT20-04 & 79.3 & 68.8 & 412 & 40 & 8315 & 47364 & 968 & 840\\
& MOT20-06 & 36.1 & 36.8 & 41 & 111 & 4786 & 79313 & 740 & 744\\
& MOT20-07 & 56.9 & 54.7 & 40 & 15 & 936 & 13135 & 194 & 195\\
& MOT20-08 & 26.5 & 33.8 & 20 & 98 & 3702 & 52924 & 339 & 333\\
& OVERALL & 58.9 & 56.5 & 513 & 264 & 17739 & 192736 & 2241 & 2112\\
\midrule
\parbox[t]{3mm}{\multirow{21}{*}{\rotatebox[origin=c]{90}{MOT17 Train}}}
& MOT17-02-DPM & 42.2 & 52.5 & 14 & 29 & 125 & 10588 & 26 & 26\\
& MOT17-02-FRCNN & 47.3 & 58.4 & 15 & 21 & 227 & 9532 & 27 & 30\\
& MOT17-02-SDP & 55.1 & 60.5 & 17 & 16 & 289 & 7994 & 53 & 52\\
& MOT17-04-DPM & 70.9 & 78.9 & 40 & 21 & 340 & 13481 & 17 & 29\\
& MOT17-04-FRCNN & 68.0 & 78.4 & 39 & 21 & 179 & 15044 & 5 & 13\\
& MOT17-04-SDP & 77.9 & 80.8 & 47 & 13 & 439 & 10035 & 29 & 68\\
& MOT17-05-DPM & 60.0 & 64.5 & 48 & 34 & 475 & 2260 & 31 & 24\\
& MOT17-05-FRCNN & 57.8 & 64.0 & 55 & 32 & 650 & 2225 & 46 & 41\\
& MOT17-05-SDP & 62.6 & 67.8 & 59 & 19 & 693 & 1842 & 53 & 46\\
& MOT17-09-DPM & 73.0 & 72.8 & 14 & 1 & 46 & 1380 & 10 & 9\\
& MOT17-09-FRCNN & 71.5 & 68.4 & 14 & 1 & 105 & 1403 & 10 & 9\\
& MOT17-09-SDP & 74.1 & 72.9 & 14 & 1 & 66 & 1302 & 10 & 11\\
& MOT17-10-DPM & 65.3 & 67.4 & 32 & 6 & 847 & 3545 & 61 & 74\\
& MOT17-10-FRCNN & 62.8 & 65.8 & 40 & 2 & 2121 & 2513 & 139 & 114\\
& MOT17-10-SDP & 66.3 & 66.5 & 43 & 2 & 1967 & 2189 & 173 & 120\\
& MOT17-11-DPM & 69.2 & 75.7 & 34 & 21 & 248 & 2624 & 37 & 17\\
& MOT17-11-FRCNN & 71.5 & 76.8 & 38 & 18 & 412 & 2233 & 47 & 15\\
& MOT17-11-SDP & 72.6 & 78.5 & 42 & 13 & 547 & 1981 & 58 & 19\\
& MOT17-13-DPM & 64.4 & 64.8 & 55 & 33 & 627 & 3436 & 83 & 56\\
& MOT17-13-FRCNN & 67.8 & 63.4 & 77 & 8 & 1739 & 1892 & 120 & 76\\
& MOT17-13-SDP & 67.2 & 63.7 & 72 & 18 & 1388 & 2312 & 117 & 60\\
& OVERALL & 66.0 & 71.4 & 809 & 330 & 13530 & 99811 & 1152 & 909\\
\midrule
\parbox[t]{3mm}{\multirow{21}{*}{\rotatebox[origin=c]{90}{MOT17 Test}}}
& MOT17-01-DPM & 48.8 & 54.3 & 8 & 10 & 113 & 3181 & 8 & 21\\
& MOT17-01-FRCNN & 47.0 & 57.5 & 9 & 10 & 360 & 3050 & 11 & 21\\
& MOT17-01-SDP & 45.2 & 55.4 & 9 & 10 & 488 & 3033 & 13 & 29\\
& MOT17-03-DPM & 73.8 & 73.4 & 85 & 17 & 4360 & 22905 & 118 & 261\\
& MOT17-03-FRCNN & 72.8 & 74.7 & 74 & 17 & 3471 & 24883 & 109 & 234\\
& MOT17-03-SDP & 77.7 & 75.4 & 94 & 13 & 4676 & 18482 & 139 & 386\\
& MOT17-06-DPM & 57.7 & 61.2 & 94 & 76 & 1142 & 3765 & 77 & 91\\
& MOT17-06-FRCNN & 57.3 & 58.4 & 102 & 59 & 1652 & 3279 & 102 & 140\\
& MOT17-06-SDP & 57.2 & 59.5 & 107 & 58 & 1700 & 3251 & 87 & 125\\
& MOT17-07-DPM & 45.7 & 52.5 & 11 & 15 & 1062 & 8038 & 80 & 126\\
& MOT17-07-FRCNN & 45.0 & 53.1 & 11 & 15 & 1345 & 7862 & 75 & 135\\
& MOT17-07-SDP & 46.6 & 53.8 & 13 & 11 & 1622 & 7310 & 87 & 166\\
& MOT17-08-DPM & 33.7 & 44.3 & 17 & 37 & 421 & 13533 & 48 & 67\\
& MOT17-08-FRCNN & 31.5 & 42.1 & 17 & 37 & 462 & 13948 & 53 & 74\\
& MOT17-08-SDP & 34.5 & 45.2 & 18 & 34 & 445 & 13339 & 63 & 85\\
& MOT17-12-DPM & 47.6 & 61.9 & 23 & 36 & 563 & 3959 & 20 & 32\\
& MOT17-12-FRCNN & 47.8 & 62.3 & 18 & 40 & 296 & 4219 & 13 & 24\\
& MOT17-12-SDP & 50.0 & 66.1 & 19 & 42 & 488 & 3836 & 11 & 31\\
& MOT17-14-DPM & 37.8 & 51.0 & 19 & 71 & 1147 & 10191 & 151 & 150\\
& MOT17-14-FRCNN & 33.9 & 48.4 & 25 & 62 & 2369 & 9636 & 206 & 228\\
& MOT17-14-SDP & 37.0 & 49.9 & 25 & 58 & 2427 & 8970 & 238 & 246\\
& OVERALL & 60.5 & 65.6 & 798 & 728 & 30609 & 190670 & 1709 & 2672\\
\bottomrule
\end{tabular}
\label{tab:mot metrics}
}
\end{table*}

\begin{figure*}[t]
    \centering
    \includegraphics[scale=0.5]{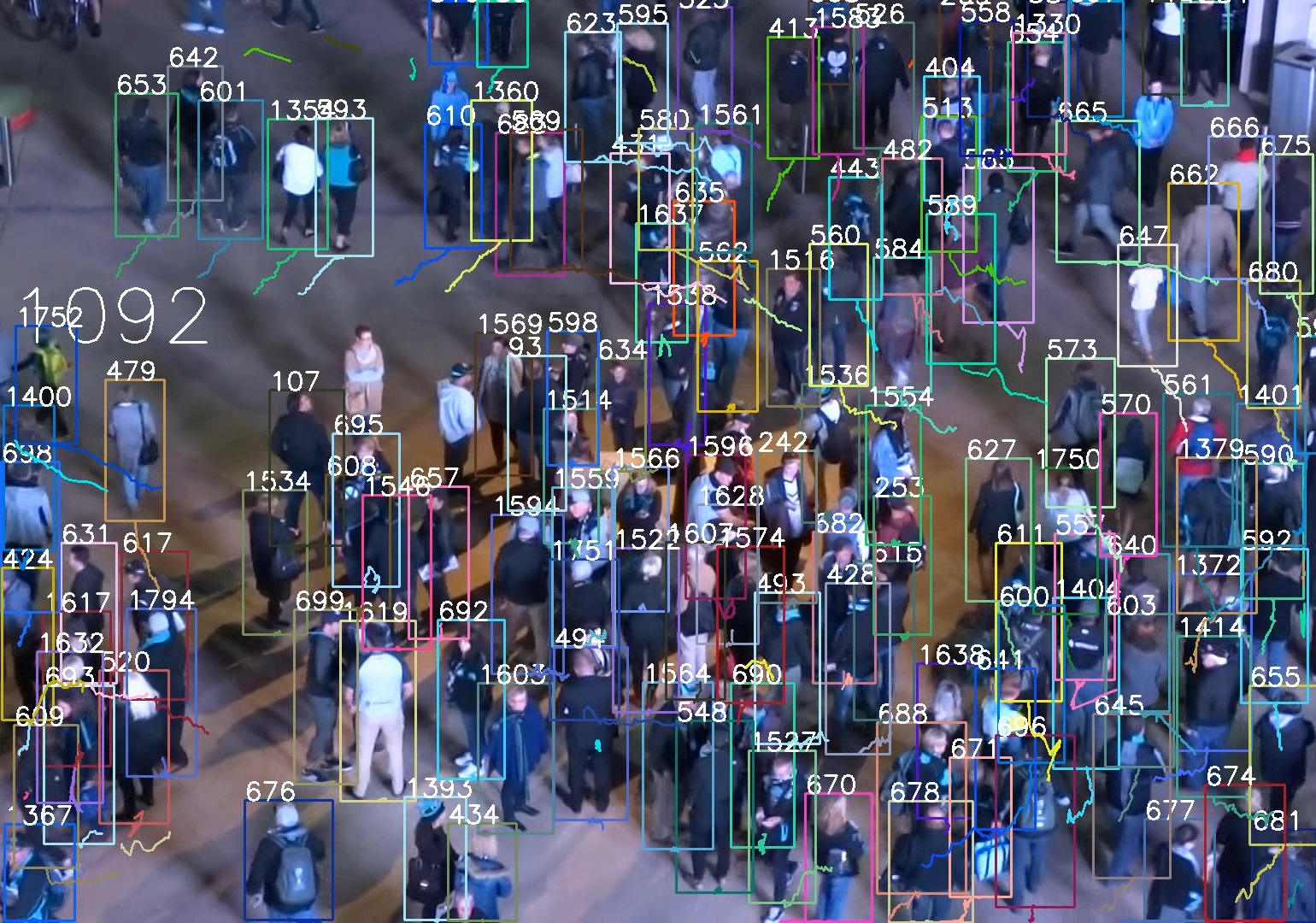}
    \includegraphics[scale=0.5]{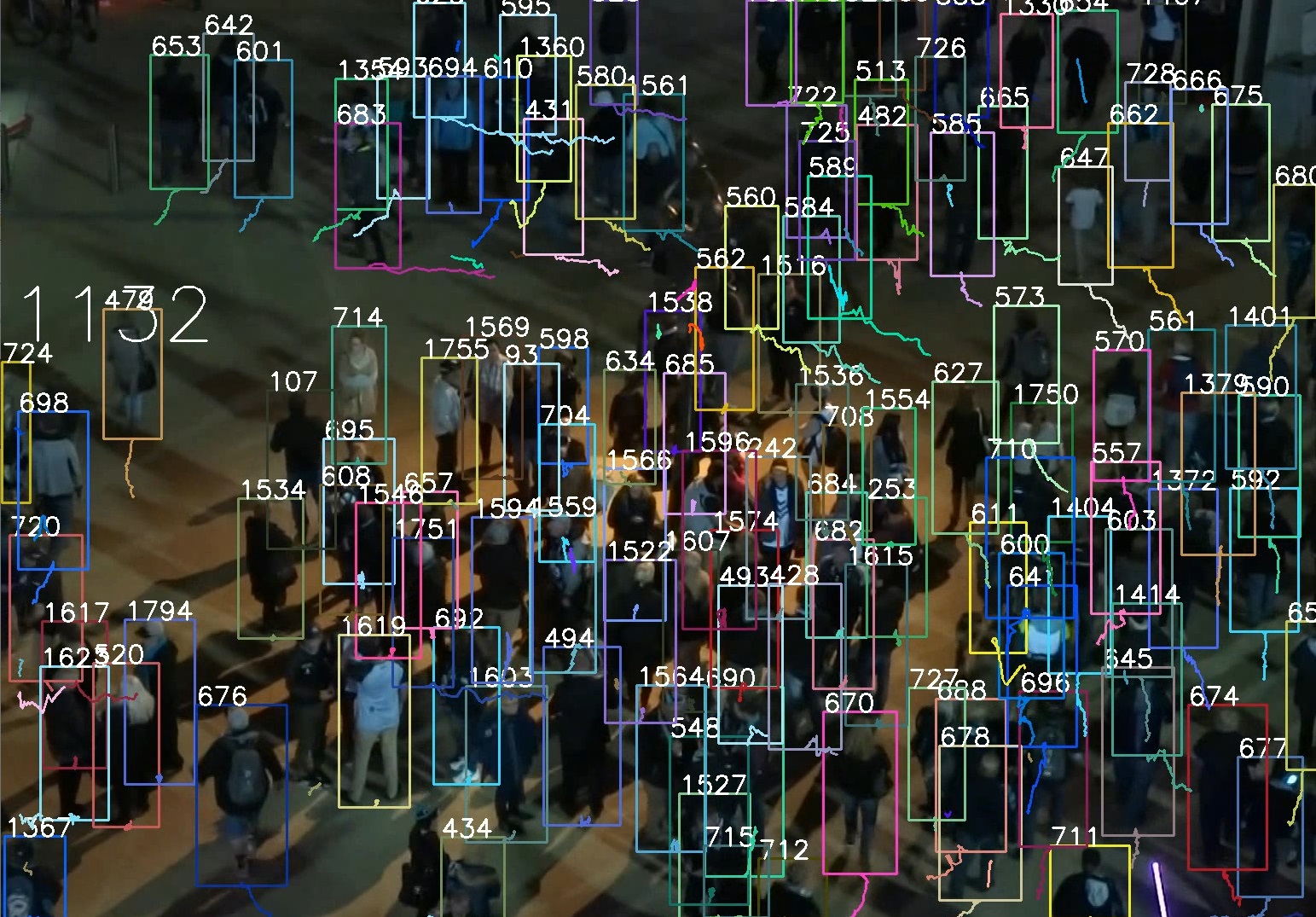}
    \caption{Example images from sequence MOT20-04 at frames 1092 and 1132. The images shows a crowded scene captured by a static camera. The above image is captured before an illumination change happens. The lower image is captured after an illumination change happens. The appearance of persons changes consequently to the illumination changes (\eg ID 666 in the top right corner). The result video can be found at \href{https://motchallenge.net/method/MOT=4031\&chl=13\&vidSeq=MOT20-04}{https://motchallenge.net/method/MOT=4031\&chl=13\&vidSeq=MOT20-04}.}    \label{fig:mot2}
\end{figure*}
\begin{figure*}[t]
    \centering
    \includegraphics[scale=0.5]{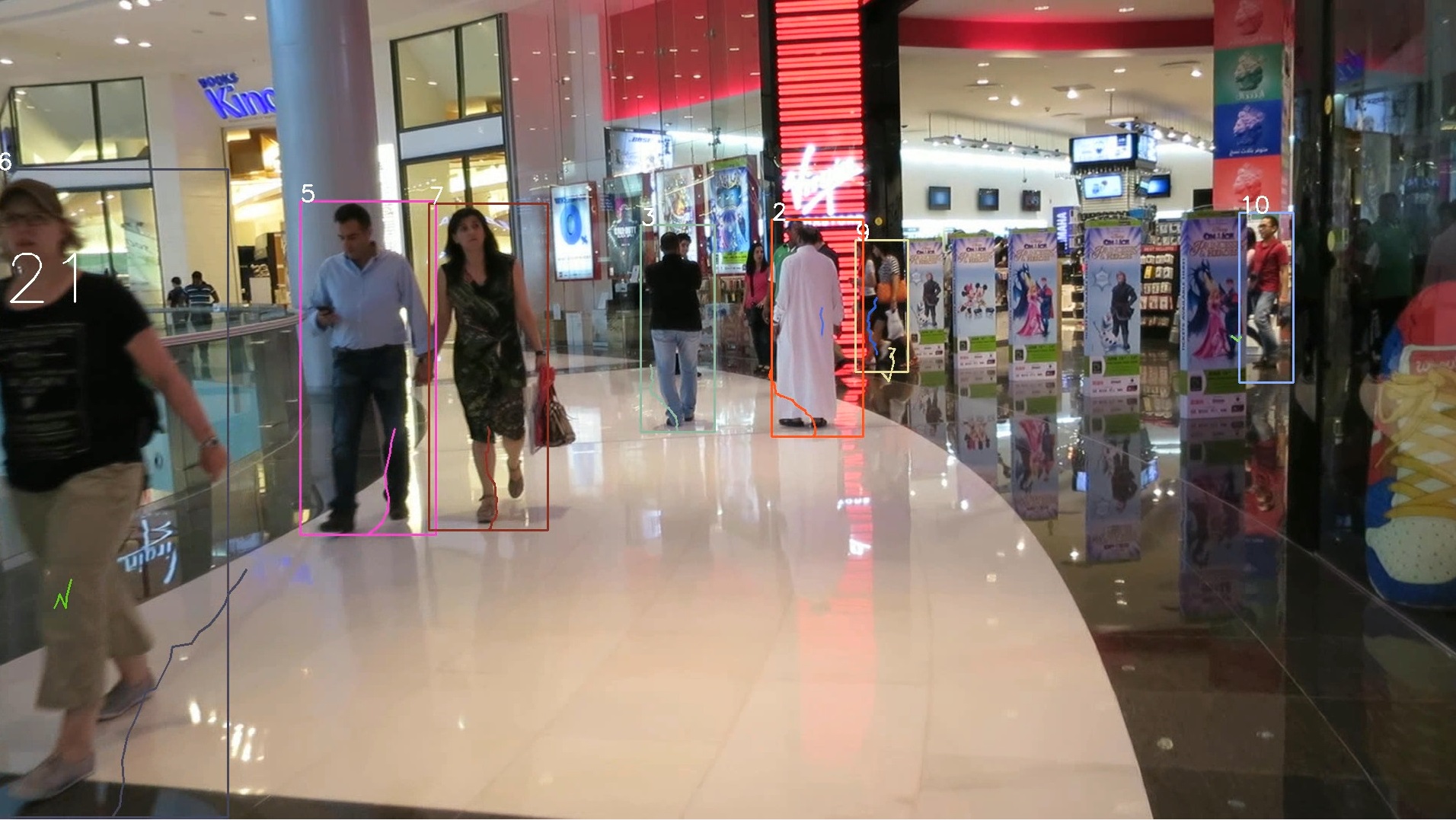}
    \includegraphics[scale=0.5]{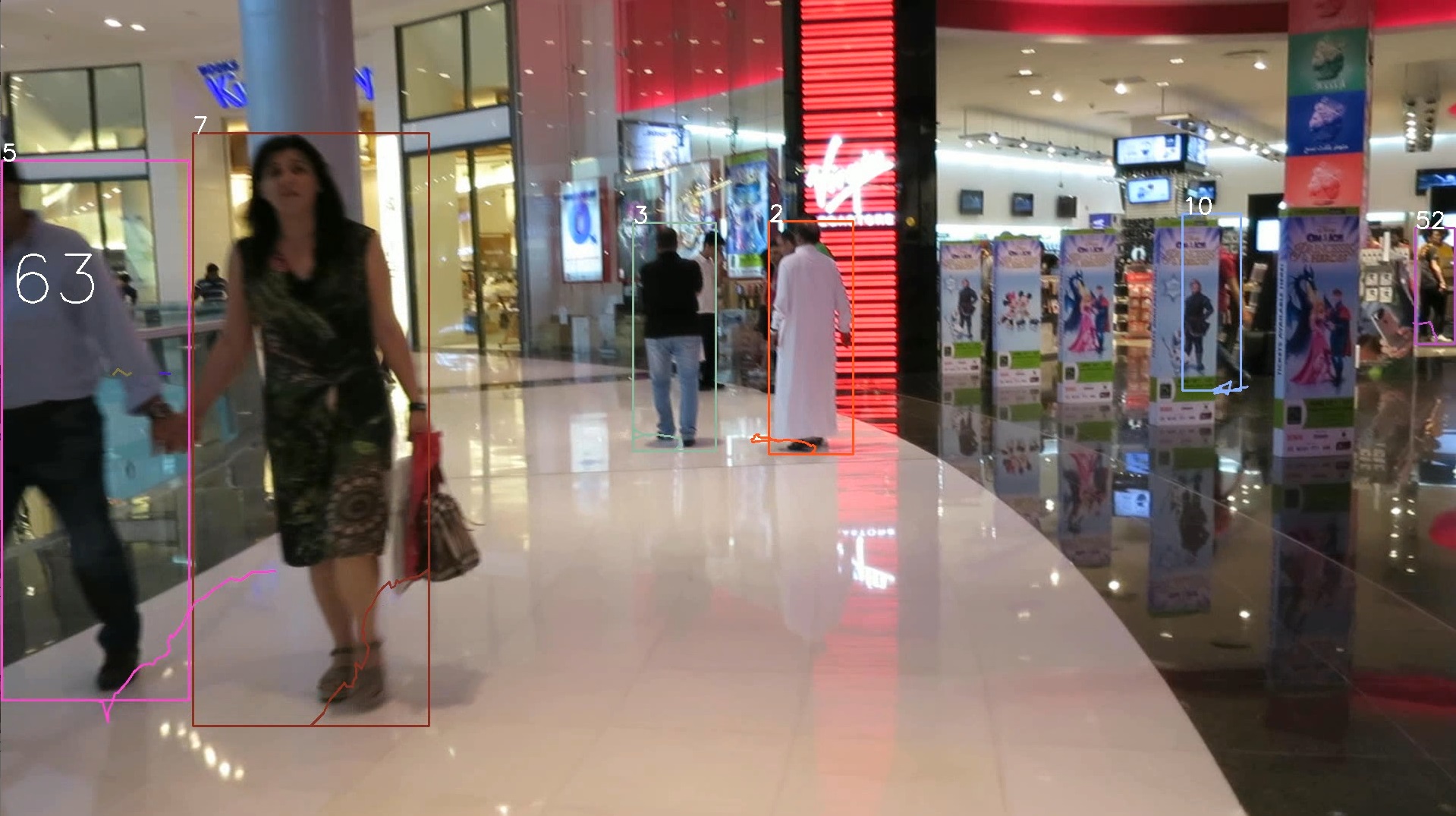}
    \caption{Example images from sequence MOT17-12 at frames 21 and 63. The image shows a scene captured by a moving camera. Compared to MOT20, the number of persons in the scene is lower and occlusions are seldom. The result video can be found at \href{https://motchallenge.net/method/MOT=4031\&chl=10\&vidSeq=MOT17-12-FRCNN}{https://motchallenge.net/method/MOT=4031\&chl=10\&vidSeq=MOT17-12-FRCNN}.}
    \label{fig:mot4}
\end{figure*}

\end{document}